\documentclass{article} 
\usepackage{style,times}
\usepackage{etoolbox}
\renewcommand{\ruler}[1]{}
\makeatletter

\patchcmd{\@maketitle}
  {Anonymous authors\\Paper under double-blind review}
  {\@author}
  {}{}

\patchcmd{\@maketitle}
  {{\LARGE\sc \@title\par}}
  {\vskip -0.4in{\LARGE\sc \@title\par}\vskip 0.15in}
  {}{}

\patchcmd{\@maketitle}
  {\vskip 0.7in minus 0.05in}
  {\vskip 0.75in minus 0.05in}
  {}{}

\makeatother

\usepackage{amsmath,amsfonts,bm}

\def\eqref#1{equation~\ref{#1}}

\def\1{\bm{1}}

\DeclareMathAlphabet{\mathsfit}{\encodingdefault}{\sfdefault}{m}{sl}
\SetMathAlphabet{\mathsfit}{bold}{\encodingdefault}{\sfdefault}{bx}{n}

\newcommand{\E}{\mathbb{E}}

\usepackage{hyperref}
\usepackage{url}
\newtheorem{assumption}{Assumption}
\newtheorem{theorem}{Theorem}
\newtheorem{proposition}{Proposition}
\newtheorem{lemma}{Lemma}
\newtheorem{corollary}{Corollary}
\newtheorem{definition}{Definition}
\usepackage{algorithm}
\usepackage{algpseudocode}
\usepackage{float}
\usepackage{graphicx}
\usepackage{booktabs}
\usepackage{mathtools}
\usepackage{tabularx}
\usepackage{mathrsfs}
\newenvironment{proof}
{\par\noindent\textit{Proof.}\ }
{\hfill$\square$\par}
\title{Taylor Representations for Model-Free RL in Networked MDPs}

\author{%
\normalfont
{\renewcommand{\arraystretch}{1.25}%
\begin{tabular*}{0.95\textwidth}{@{\extracolsep{\fill}}ll@{}}
\textbf{Salah R. Chikhi}
&
\textbf{Abdelhaq Chaoui}
\\
Massachusetts Institute of Technology
&
Massachusetts Institute of Technology
\\
\addlinespace[1.6em]
\textbf{Asuman Ozdaglar}
&
\textbf{Saurabh Amin}
\\
Massachusetts Institute of Technology
&
Massachusetts Institute of Technology
\end{tabular*}%
}%
}
\begin{document}

\maketitle

\begin{abstract}
In Networked Markov Decision Processes, transition dynamics are often unknown and the state--action space grows rapidly with the number of agents. In this setting, Taylor representations naturally approximate $Q$-functions, but a naive order-$n$ expansion over $N$ agents requires $\Theta(N^n)$ coefficients. We justify these expansions under smooth expected future local rewards with controlled derivatives. Under this condition, finite-speed information propagation and discounting imply that local-critic Taylor coefficients decay exponentially with the graph distance to the farthest agent involved. Discarding distant-agent coefficients and marginalizing then yield scalable local Taylor representations with a bound controlled by graph locality. Building on these representations, we propose a scalable model-free actor--critic algorithm, establishing finite-sample critic and near-stationarity guarantees for a linear LSTD critic. We then introduce a more expressive neural TD parameterization. Unlike prior constructive spectral methods, our approach covers settings without access to a known local dynamics map, such as hidden switched linear--quadratic regulation. Across three control benchmarks, our method matches or outperforms spectral baselines while scaling efficiently to large graphs.
\end{abstract}

\section{Introduction}
\label{sec:introduction}

Networked Markov Decision Processes, in the context of multi-agent reinforcement learning (MARL), provide a natural framework for cooperative control in large interconnected systems, including communication networks, traffic systems, queueing networks, and other cyber-physical systems \citep{qu2020localized,qu2022networked,chu2020networked}. They nevertheless face two central challenges: the transition kernel is often unknown and the joint state--action space grows rapidly with the number of agents, making large-scale control challenging. To address the latter, in finite state--action spaces, \citet{qu2020localized,qu2022networked} exploit the exponential decay property of local \(Q\)-functions, whereby the influence of distant agents decreases exponentially with graph distance. This enables localized \(Q\)-function and policy-gradient approximations. \citet{lin2021stochastic} extend this principle to stochastic non-local dependencies through a more general decay property, while related guarantees also hold for average-reward objectives \citep{qu2020average}. Related locality phenomena have also been studied in mean-field MARL and networked policy optimization~\citep{gu2022meanfield,zhang2023lpi,liang2025causality} and exponential sensitivity decay arises in graph-structured optimization and nonlinear optimal control ~\citep{shin2022exponential,na2020exponential,na2022schwarz}. In continuous state--action spaces, however, localization alone is insufficient: even a local critic remains an unknown continuous function that must be represented and learned.

For continuous networked control, recent work addresses this representation problem through spatial structure~\citep{olsson2024lqr} and spectral representations~\citep{ren2024spectral,wang2026continuous}. A main setting targeted by the spectral approaches is one in which each local transition follows a known dynamics map with additive Gaussian noise. Spectral representations, together with finite-dimensional features such as random Fourier features and the exponential decay property mentioned above, then yield scalable critic approximations with explicit guarantees \citep{ren2022spectral,ren2023stochastic,rahimi2007random}. However, this approach can be restrictive: the transition kernel is typically unknown, and canonical network-control problems need not belong to this class of dynamics. Switched linear--quadratic regulation (LQR) with an unobserved global mode~\citep{costa2005markov,ogura2018hidden}, for instance, is not captured by such dynamics as it induces state-- and action-dependent noise, shared
across agents and temporally correlated through the Markov mode. \citet{ren2024spectral} also propose a model-free variant named Spectral-SAC, but without the corresponding guarantees of their random Fourier features approach. In contrast, we focus here on the fully model-free setting, where the transition dynamics are unknown and only rollout data are available.

Beyond the setting of networked MDPs, a complementary route to tractable function approximation is smoothness. Taylor representations have been developed for continuous-state MDPs \citep{braverman2020taylor,xu2020kernel,maran2024local}, with \citet{maran2024local,maran2024smooth} showing how regularizing properties of the Bellman operator can justify polynomial representations. However, the poor scalability of a naive Taylor representation stems from its isotropy: the analysis places all Taylor coefficients on equal footing, so an order-\(n\) expansion over \(N\) agents generally requires \(\Theta(N^n)\) coefficients. Moreover, much like the dynamics assumptions of prior constructive spectral methods, the smoothness assumption of~\citep{maran2024local} need not hold in canonical network-control settings such as switched LQR with unobserved operating modes, as we show in Appendix~\ref{app:regularity}. In our analysis, we instead require smoothness of expected future local rewards in the initial state. 

\paragraph{Our contribution.} Our key observation is that, in networked MDPs, Taylor coefficients are not equally important: they themselves inherit graph locality. Finite propagation and discounting imply that derivatives involving an agent at graph distance \(d\) decay exponentially with \(d\). This allows us to truncate the global Taylor expansion to monomials supported on \(\mathcal N_i^\kappa\), the \(\kappa\)-hop neighborhood of agent \(i\). After marginalizing distant variables, the resulting polynomial is equivalently a Taylor representation of a localized critic with only \(\Theta(|\mathcal N_i^\kappa|^n)\) state monomials. Using rollout-based centroids as expansion points, we first obtain a linear-in-parameters Taylor Centroid critic and a scalable actor--critic with finite-sample critic and near-stationarity guarantees. We then generalize the parameterization with neural networks. Unlike prior constructive spectral approaches, our method requires neither knowledge nor a structural specification of the transition kernel. We benchmark it across three control problems of increasing difficulty. Table~\ref{tab:comparison} summarizes how our contributions relate to prior work.

\vspace{-1.65em}
\begin{table}[h!]
\centering
\caption{Comparison with representative prior representation-based approaches.}
\label{tab:comparison}
\small
\setlength{\tabcolsep}{2.7pt}
\renewcommand{\arraystretch}{1.08}
\begin{tabularx}{\linewidth}{@{}lcccccc@{}}
\toprule
& \textbf{Qu et al.}
& \textbf{Olsson et al.}
& \textbf{Ren et al.}
& \textbf{Wang et al.}
& \textbf{Maran et al.}
& \textbf{Ours} \\
\midrule

Networked MDP
& $\checkmark$
& $\checkmark^\ddagger$
& $\checkmark$
& $\checkmark$
& --
& $\checkmark$ \\

Continuous state--action
& --
& $\checkmark$
& $\checkmark$
& $\checkmark$
& $\checkmark$
& $\checkmark$ \\

Critic representation
& Tabular
& Local quadratic
& Spectral
& Spectral
& Taylor
& Local Taylor \\

Scalable representation
& $\checkmark$
& $\checkmark$
& $\checkmark$
& $\checkmark$
& --
& $\checkmark$ \\

Model-free construction
& $\checkmark$
& $\checkmark$
& --$^\dagger$
& --
& $\checkmark$
& $\checkmark$ \\

End-to-end guarantee
& Stationarity
& --
& Stationarity
& Stationarity
& No-regret
& Stationarity \\

\bottomrule
\end{tabularx}

\vspace{0.25em}
\parbox{\linewidth}{
$^\dagger$\citet{ren2024spectral} also propose a model-free alternative to their random Fourier features but without guarantees.
$^\ddagger$\citet{olsson2024lqr} specialize to LQR.}
\end{table}
\vspace{-1.8em}

\section{Problem Setting}\label{sec:setting}

We consider $N$ agents on an undirected graph $G=(V,E)$,
$V=[N]$. Let $d_G(i,j)$ denote graph distance and, for $\kappa \geq 0$, define
$\mathcal N_i^\kappa:=\{j:d_G(i,j)\le\kappa\}$,
$\mathcal N_i:=\mathcal N_i^1$, and
$\mathcal N_{-i}^{\kappa}:=V\setminus\mathcal N_i^\kappa$.
Agent $i$ has state
$s_i\in\mathcal S_i\subset\mathbb R^{d_s}$ and takes action
$a_i\in\mathcal A_i\subset\mathbb R^{d_a}$, where the state spaces are
compact and convex and the action spaces are compact. We write
$s_I=(s_j)_{j\in I}$ and $a_I=(a_j)_{j\in I}$. Our framework also allows a global context $z\in\mathcal Z$ which may be discrete or continuous. We consider the class of factorized transition kernels and policies: \vspace{-0.6em}\[
P(z',s'\mid z,s,a)
=
P_g(z'\mid z)
\prod_{i=1}^N
P_i(s_i'\mid s_{\mathcal N_i},a_{\mathcal N_i},z),
\quad
\pi_\theta(a\mid z,s)
=
\prod_{i=1}^N
\pi_{\theta_i}\!\left(
a_i\mid s_{\mathcal N_i^{\kappa_\pi}},h_i(z)
\right).
\] Here, the policy observation radius is $\kappa_\pi$, and
$h_i(z)\in\mathbb R^{d_g^i}$ denotes the global context observable to agent $i$. Thus $z$ may be unobserved, partially observed, or fully observed by agent $i$. Furthermore, $z$ affects every local transition but cannot transmit physical
information between distant agents. 
Let $\Delta$ be the maximum degree of the graph. Assume that the rewards satisfy $r(s,a) = \frac{1}{N}\sum_{i=1}^{N} r_i(s_i,a_i)$ with $|r_i(s_i,a_i)|\le R_{\max}$.
 Our analysis extends directly to local rewards depending on any fixed-radius neighborhood of agent $i$. The discount factor satisfies $\gamma \in (0,1)$. We assume that the initial distribution $\nu_0$ of $(Z_0,S_0)$ is fixed and does not
depend on the policy parameters $\theta$. The cooperative objective to be maximized is \[J(\theta)
=
\frac1N
\sum_{i=1}^N
\E_{(Z_0,S_0)\sim \nu_0, \pi_\theta}
\left[
\sum_{t\ge0}\gamma^t r_i(S_{i,t},A_{i,t})
\right].
\]To maximize this objective, we adopt the actor-critic framework \citep{konda2000actor,sutton1999policy,sutton2018reinforcement}. To this end, define the local and global critics 
\begin{equation}
Q_i^\pi(z,s,a)
:=
\E_\pi\!\left[
\sum_{t\ge0}\gamma^t r_i(S_{i,t},A_{i,t})
\,\middle|\,
Z_0=z,S_0=s,A_0=a
\right],
\quad
Q^\pi:=\frac1N\sum_{i=1}^N Q_i^\pi .
\label{eq:local-q}
\end{equation}
To make our targeted Taylor representations well-defined, we use the following assumption:
\begin{assumption}[Expected reward regularity]
\label{asm:orbit}
There exist an integer $q\ge2$ and a constant $\bar L<\infty$,
both independent of the network size $N$, such that, uniformly over agents $i$ and all policies $\pi$ of the above factorized form, for every $t\ge0$ the map $s\mapsto
\E_\pi\!\left[
r_i(S_{i,t},A_{i,t})
\,\middle|\,
Z_0=z,S_0=s,A_0=a
\right]
$ is of class $C^q$ and
\[
\max_{|\beta|\le q}
\sup_{z,s,a}
\left|
\partial_s^\beta
\E_\pi\!\left[
r_i(S_{i,t},A_{i,t})
\,\middle|\,
Z_0=z,\;S_0=s,\;A_0=a
\right]
\right|
\le \bar L .
\]
\end{assumption}

Assumption~\ref{asm:orbit} is an expected reward regularity condition: a small
perturbation of the current physical state may lead to very different
trajectories, but should not create arbitrarily sharp changes in the expected
future local reward. Appendix~\ref{app:regularity} gives \emph{concrete sufficient
conditions} for Assumption~\ref{asm:orbit}, introduces a \emph{rollout-based
stress test} for assessing it empirically, and shows that the uniform-in-$t$
derivative bound can be weakened to a \emph{discounted-summability
condition}. We retain the uniform formulation in the main text for simplicity. \\
Assumption~\ref{asm:orbit} plays two roles: it makes each \(Q_i^\pi\) of class \(C^q\) (Proposition~\ref{prop:smoothness}), enabling Taylor expansions of any order \(n\le q-1\), and, together with finite graph propagation, yields the exponential derivative decay of Lemma~\ref{lem:graded-derivatives}. In contrast, the standard exponential decay of the value-localization error with the localization radius \citep{ren2024spectral,wang2026continuous} requires only bounded rewards and graph-local factorization.

\begin{proposition}
\label{prop:smoothness}
    The individual action-value function
    $s\mapsto Q_i^\pi(z,s,a)$ is of class $C^q$ for all $i,z,a$.
\end{proposition}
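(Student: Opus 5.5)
The plan is to write $Q_i^\pi$ as a series of the per-step expected local rewards and differentiate term by term, using the uniform derivative bound of Assumption~\ref{asm:orbit} to justify exchanging derivatives with the infinite sum. Fix $i$, $z$, $a$ and define for each $t\ge0$
\[
f_t(s):=\E_\pi\!\left[r_i(S_{i,t},A_{i,t})\,\middle|\,Z_0=z,S_0=s,A_0=a\right].
\]
Since $|r_i|\le R_{\max}$ and $\gamma\in(0,1)$, Fubini (or dominated convergence) gives $Q_i^\pi(z,s,a)=\sum_{t\ge0}\gamma^t f_t(s)$. The series converges absolutely and uniformly in $s$, because its terms are bounded by $\gamma^t R_{\max}$.

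Next, by Assumption~\ref{asm:orbit}, each $f_t$ is of class $C^q$ on $\mathcal S:=\prod_j\mathcal S_j$. Moreover, for every multi-index $\beta$ with $|\beta|\le q$,
\[
\sup_s|\partial_s^\beta f_t(s)|\le\bar L .
\]
Hence the differentiated series $\sum_t\gamma^t\partial_s^\beta f_t$ is dominated by $\sum_t\gamma^t\bar L=\bar L/(1-\gamma)$. By the Weierstrass M-test, it converges uniformly on $\mathcal S$ for each such $\beta$.

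We then apply the standard theorem on term-by-term differentiation of function series. If $\sum_t g_t$ converges at a point, and $\sum_t\partial_k g_t$ converges uniformly on a convex (hence connected) domain, then the sum is differentiable and its derivative equals the series of derivatives. We apply this inductively on $|\beta|$ from $0$ up to $q$, one coordinate direction at a time. Each partial derivative $\partial^\beta Q_i^\pi=\sum_t\gamma^t\partial^\beta f_t$ is a uniform limit of continuous functions, so it is continuous, which gives $Q_i^\pi\in C^q$. As a byproduct, the derivatives satisfy $|\partial^\beta Q_i^\pi|\le\bar L/(1-\gamma)$.

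The main obstacle is technical rather than conceptual: the differentiation has to be justified on the compact convex set $\mathcal S$, which includes its boundary. The cleanest route is to work along line segments inside $\mathcal S$ using the one-dimensional mean value theorem and uniform convergence. This is where convexity is used. At the boundary, $C^q$ should be understood in the one-sided, Whitney sense, in the same sense as in the assumption. If this proves delicate, the fallback is to state the result on the interior and extend the derivatives continuously to the boundary, using their uniform continuity, which follows from the uniform limits.
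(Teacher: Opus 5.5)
Your proof is correct and follows the paper's argument essentially step for step. You write $Q_i^\pi=\sum_t\gamma^t g_{i,t}^\pi$, use the uniform bound $\bar L$ from Assumption~\ref{asm:orbit} with the Weierstrass M-test to get uniform convergence of every differentiated series up to order $q$, differentiate term by term one coordinate at a time, and obtain both continuity and the bound $L_Q=\bar L/(1-\gamma)$; your extra care about one-sided derivatives on the boundary of the compact convex state space is something the paper passes over silently, but it does not change the argument.
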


\citet{maran2024local} also construct local polynomial value-function
approximations through Taylor expansions, but rely on Mild smoothness, a
regularity assumption on the Bellman operator, whereas
Assumption~\ref{asm:orbit} only requires smoothness of the future reward
sequence defining $Q_i^\pi$. Hence, under smooth rewards, sufficiently high-order Mild smoothness implies Assumption~\ref{asm:orbit}, but not conversely: Appendix~\ref{app:regularity} shows this separation on switched LQR \citep{costa2005markov,ogura2018hidden}, while also showing that the additive-Gaussian setting of \citet{ren2024spectral,wang2026continuous} satisfies Assumption~\ref{asm:orbit} under uniformly \(C^q\) local dynamics and rewards when \(\Delta\) is independent of \(N\). \\
The smoothness property of Proposition~\ref{prop:smoothness} naturally leads to our Taylor representations for the actor--critic structure developed in the remainder of the paper: we first construct a scalable Taylor representation of the critic in Section~\ref{sec:tc}, and then use it to derive an actor-critic algorithm in Section~\ref{sec:learning}.

\vspace{-0.7em}
\section{Taylor Centroid Critics}\label{sec:tc}

\subsection{Taylor expansions and scalability}

Having established the required smoothness of the critic, we now turn to its representation. As in \citet{maran2024local}, we begin with an order-$n$ Taylor expansion around an expansion point $\mu$, for any chosen integer $1\le n\le q-1$.\footnote{
Write $\beta=(\beta_{j,r})_{j\in[N],\,r\in[d_s]}\in\mathbb N^{Nd_s}$.
We use
$|\beta|=\sum_{j=1}^N\sum_{r=1}^{d_s}\beta_{j,r}$,
$\beta!=\beta_{1,1}!\cdots\beta_{N,d_s}!$,
$x^\beta=x_{1,1}^{\beta_{1,1}}\cdots x_{N,d_s}^{\beta_{N,d_s}}$, 
$\partial_s^\beta
=
\frac{\partial^{|\beta|}}
{\partial s_{1,1}^{\beta_{1,1}}\cdots
 \partial s_{N,d_s}^{\beta_{N,d_s}}}$
and $\mathrm{supp}_G(\beta) = \{i : \ \exists l \mid \beta_{i,l} \neq 0\}$.}
 \begin{equation}
 \label{eq:taylor}
Q_i^\pi(z,s,a) \simeq 
\sum_{\substack{\beta\in\mathbb N^{Nd_s}\\|\beta|\le n}}
\frac{
\partial_s^\beta Q_i^\pi
\bigl(z,\mu,a\bigr)
}{\beta!}
\bigl(s-\mu\bigr)^\beta
\end{equation}
The drawback of this representation is dimensionality: an order-$n$ expansion
contains $\binom{Nd_s+n}{n}=\Theta(N^n)$ coefficients, while $\mu$ and $a$
are themselves ambient-dimensional. Importantly, existing locality results show that changing states and actions outside the $\kappa$-hop neighborhood of any agent $i$ changes its local value $Q_i^\pi$ by an amount that decays exponentially with $\kappa$
\citep{qu2020localized,qu2022networked,ren2024spectral,wang2026continuous},
but do not characterize its Taylor coefficients. Our key observation is that
expected reward regularity further allows exponential
decay of the derivatives with graph distance. Lemma~\ref{lem:graded-derivatives}
formalizes this decay.

\begin{lemma}[Exponential decay of critic derivatives]
\label{lem:graded-derivatives}
Define $\rho:=\gamma^{1/(1+\kappa_\pi)} \in (0,1)$, and the graph reach
$d_i(\beta):=\max_{j\in\operatorname{supp}_G(\beta)}d_G(i,j)$ with $d_i(0):=0$. Then, for every $|\beta|\le n+1,$ $$\displaystyle 
\left|\partial_s^\beta Q_i^\pi(z,s,a)\right|
\le
L_Q\rho^{d_i(\beta)}, 
\qquad L_Q:= \frac{\bar L}{1-\gamma}$$
\end{lemma}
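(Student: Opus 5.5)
We expand $Q_i^\pi$ as the discounted series of expected local rewards, $Q_i^\pi(z,s,a)=\sum_{t\ge0}\gamma^t g_t(z,s,a)$ with $g_t(z,s,a):=\E_\pi[r_i(S_{i,t},A_{i,t})\mid Z_0=z,S_0=s,A_0=a]$. We then show two things about $\partial_s^\beta g_t$. First, it is bounded by $\bar L$ for all $t$, by Assumption~\ref{asm:orbit}. Second, it vanishes identically whenever $t$ is too small for information from the farthest agent in $\operatorname{supp}_G(\beta)$ to reach agent $i$. Differentiating the series term by term is justified exactly as in Proposition~\ref{prop:smoothness}: the derivatives are uniformly bounded by $\bar L$ and weighted by $\gamma^t$, so the differentiated series converges uniformly. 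This requires $|\beta|\le n+1\le q$, which holds since $n\le q-1$. The result is
\[
\bigl|\partial_s^\beta Q_i^\pi(z,s,a)\bigr|\le \sum_{t\ge T(\beta)}\gamma^t\bar L=\frac{\bar L}{1-\gamma}\,\gamma^{T(\beta)},
\]
where $T(\beta)$ is the first time index at which $g_t$ can depend on any coordinate $s_j$ with $j\in\operatorname{supp}_G(\beta)$ at distance $d_i(\beta)$.

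The core step is a finite-speed propagation lemma. Its claim is that, for fixed $(z,a)$, the function $s\mapsto g_t(z,s,a)$ depends only on $s_{\mathcal N_i^{m(t)}}$ for an explicit radius $m(t)$. We prove it by induction on the factorized kernel and policy, tracking the conditional law of $(Z_t,S_{\mathcal N_i^k,t},A_{\mathcal N_i^k,t})$.

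The global context $z$ evolves autonomously through $P_g(z'\mid z)$, independently of $s$ and $a$. Hence it carries no physical information about $s$, and conditioning on the whole context path is harmless.

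Given the context path, the rest of the dynamics is local. $S_{j,t+1}$ depends on $(S_{\mathcal N_j,t},A_{\mathcal N_j,t})$. The action $A_{j,t}$ for $t\ge1$ depends on $S_{\mathcal N_j^{\kappa_\pi},t}$. The action $A_0=a$ is fixed and does not depend on $s$. Tracing dependencies backward from $r_i(S_{i,t},A_{i,t})$:

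\begin{itemize}
\item The action $A_{i,t}$ pulls in states within radius $\kappa_\pi$.
\item Each transition step pulls in one more hop of states, plus actions in $\mathcal N_j$, which in turn pull in another $\kappa_\pi$ hops.
\end{itemize}

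Each time step therefore enlarges the dependence radius by at most $1+\kappa_\pi$. This gives $m(t)\le (1+\kappa_\pi)t$, possibly up to an additive $\kappa_\pi$ from the final action. At $t=0$ the dependence is only on $s_i$.

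It follows that if $d_i(\beta)>m(t)$, some coordinate being differentiated is absent from $g_t$, so $\partial_s^\beta g_t\equiv 0$. Hence $T(\beta)\ge \lceil d_i(\beta)/(1+\kappa_\pi)\rceil$ or so, and
\[
\gamma^{T(\beta)}\le \gamma^{d_i(\beta)/(1+\kappa_\pi)}=\rho^{d_i(\beta)}.
\]
When $\beta=0$ or $d_i(\beta)=0$, the bound is just $L_Q$.

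The main obstacle is getting the exact radius bookkeeping right, so that the exponent comes out as $d_i(\beta)/(1+\kappa_\pi)$ and not something weaker like $(d_i(\beta)-\kappa_\pi)/(1+\kappa_\pi)$. The crucial point is that $A_0=a$ is conditioned on, so the first step contributes only $1$ hop rather than $1+\kappa_\pi$. I would first try to prove the sharp inclusion by induction: for fixed $(z,a)$, the conditional law of $(S_{\mathcal N_i^k,t})$ depends on $s$ only through $s_{\mathcal N_i^{k+(1+\kappa_\pi)t}}$, with actions at time $t\ge1$ adding $\kappa_\pi$ hops. Then I would check that
\[
d_i(\beta)\le m(t)\ \Longrightarrow\ t\ge d_i(\beta)/(1+\kappa_\pi).
\]
If a leftover $\kappa_\pi$ persists from the final action $A_{i,t}$, it can be absorbed since $r_i$ depends on $A_{i,t}$ only through states at time $t$. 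Alternatively, I would verify that the induction actually gives $m(t)\le t+(t-1)\kappa_\pi+\kappa_\pi=(1+\kappa_\pi)t$, which is exactly what is needed.
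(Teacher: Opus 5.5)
Your proposal is correct and follows essentially the same route as the paper. Both expand $Q_i^\pi=\sum_t\gamma^t g_{i,t}^\pi$ and differentiate term by term, use a finite-propagation lemma (conditional on the autonomous context path) with radius $R_t=t(1+\kappa_\pi)$ so that $\partial_s^\beta g_{i,t}^\pi\equiv0$ for $t<\lceil d_i(\beta)/(1+\kappa_\pi)\rceil$, and finish with the geometric tail bound $L_Q\gamma^{\lceil d_i(\beta)/(1+\kappa_\pi)\rceil}\le L_Q\rho^{d_i(\beta)}$. Your second bookkeeping, $m(t)=t+(t-1)\kappa_\pi+\kappa_\pi$ using that $A_0=a$ is fixed, is the one that matches the paper's $R_t$; tracking the joint law of whole neighborhoods, as you propose, is if anything slightly more careful than the paper's agent-by-agent induction.
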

\vspace{-0.8em}
Hence any derivative involving an agent at distance $d$ is
$O(\rho^d)$: indeed, finite propagation delays its influence on agent $i$ by at least
$d/(1+\kappa_\pi)$ steps, after which discounting contributes the factor
$\rho^d$. The decay can be substantially faster in practice
(see Appendix~\ref{app:exp-graded-decay}); henceforth, $\rho$ may denote the best
valid geometric rate. This motivates discarding derivatives with respect to agents
outside $\mathcal N_i^\kappa$. We denote their multi-indices by:
\[
\mathcal M_{\mathcal N_i^\kappa}^n
:=
\{
\beta\in\mathbb N^{Nd_s}:
|\beta|\le n, \ 
\operatorname{supp}_G(\beta)\subseteq\mathcal N_i^\kappa
\}.
\]

This removes coordinates of agents at distance more than $\kappa$ from the polynomial terms. However, the remaining Taylor coefficients still depend on their values. To obtain
coefficients that require only local states and actions, fix an arbitrary probability measure $\nu_{i,\mathrm{far}}$ on $\mathcal S_{\mathcal N_{-i}^{\kappa}}
\times
\mathcal A_{\mathcal N_{-i}^{\kappa}}$ and define, for
$\beta\in\mathcal M_{\mathcal N_i^\kappa}^n$, the marginalized Taylor
coefficients:
\begin{equation}
\mathscr D_{i,\beta}
(z,s_{\mathcal N_i^\kappa},a_{\mathcal N_i^\kappa})
:=
{
\int_{\mathcal S_{\mathcal N_{-i}^{\kappa}}
\times \mathcal A_{\mathcal N_{-i}^{\kappa}}}
}
\partial_s^\beta Q_i^\pi(z,s,a)\,
d\nu_{i,\mathrm{far}}(s_{\mathcal N_{-i}^{\kappa}},
a_{\mathcal N_{-i}^{\kappa}}).
\label{eq:local-coeff}
\end{equation}

We use expansion points $\mu_1,\ldots,\mu_K\in\mathcal S$, with
$k(s)\in[K]$ selecting the point $\mu_{k(s)}$ closest to $s$ in $L^1$ norm,
in the spirit of classical local polynomial and locally weighted approximation
methods \citep{fan1996local,atkeson1997locally}. We define the clusters in the global state space, so that the cluster index $k(s)$ can inject global-state information into otherwise local critics, in particular helping recover latent global regimes when $z$ is unobserved. Figure~\ref{fig:switched-clusters} illustrates this effect.
Discarding distant derivatives and marginalizing the retained ones then yields
the local Taylor representation:\vspace{-0.1em}\begin{equation}
\widehat Q_i^\pi(z,s,a)
=
\sum_{\beta\in\mathcal M_{\mathcal{N}_i^\kappa}^n}
\frac{
\mathscr{D}_{i,\beta}
(z,\mu_{k(s),\mathcal N_i^\kappa},a_{\mathcal N_i^\kappa})
}{\beta!}
\left(
s_{\mathcal N_i^\kappa}
-\mu_{k(s),\mathcal N_i^\kappa}
\right)^\beta .
\label{eq:ideal-tc}
\end{equation}
At each fixed $k(s)$, this Taylor representation requires $\Theta(|\mathcal{N}_i^\kappa|^n )$ learned coefficients. Notice that \(\mathscr D_{i,0}\) is precisely the critic obtained by marginalizing \(Q_i^\pi\) over states and actions outside \(\mathcal N_i^\kappa\); denoting it by \(\bar Q_i^\pi\), this recovers the localized critic of \citet{ren2024spectral}. Moreover, \(\mathscr D_{i,\beta}=\partial_s^\beta \bar Q_i^\pi\), so Equation~\ref{eq:ideal-tc} is exactly the order-\(n\) Taylor expansion of \(\bar Q_i^\pi\) around the selected centroid.
 Combining this viewpoint with Lemma~\ref{lem:graded-derivatives} yields the following error estimate:

\begin{theorem}[Scalable local Taylor representation]
\label{thm:trunc}
Under Assumption~\ref{asm:orbit}, define, for $0\le d\le\kappa$ and every integer $p\ge0$,\vspace{-1em}\[
W_d
:=
\sup_s
\left\|
s_{\mathcal N_i^d}
-\mu_{k(s),\mathcal N_i^d}
\right\|_1,
\qquad
\mathcal R_p
:=
\rho^\kappa W_\kappa^p
+
(1-\rho)
\sum_{d=0}^{\kappa-1}\rho^d W_d^p .
\]
Then the truncated Taylor critic $\widehat Q_i^\pi$ satisfies \[\left|\left|
Q_i^\pi-\widehat Q_i^\pi\right|\right|_{\infty}
\le
\frac{2R_{\max}}{1-\gamma}\rho^{\kappa+1}
+
\frac{L_Q}{(n+1)!}\mathcal R_{n+1} =: \Delta_{\mathrm{TC}}
\]
Moreover, $\widehat Q_i^\pi$ contains$
\binom{|\mathcal N_i^\kappa|d_s+n}{n}
=
\Theta\!\left(|\mathcal N_i^\kappa|^n\right) = O(\Delta^{n\kappa})
$
state monomials. 
\end{theorem}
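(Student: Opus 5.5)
The proof splits the error into a localization part and a Taylor-remainder part through the triangle inequality
\[
\|Q_i^\pi-\widehat Q_i^\pi\|_\infty\le \|Q_i^\pi-\bar Q_i^\pi\|_\infty+\|\bar Q_i^\pi-\widehat Q_i^\pi\|_\infty ,
\]
where $\bar Q_i^\pi=\mathscr D_{i,0}$ is the critic marginalized over $\nu_{i,\mathrm{far}}$.

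\textbf{Localization term.} I would use the standard exponential-decay coupling argument of \citet{qu2020localized,ren2024spectral}. Fix two inputs $(z,s,a)$ and $(z,s',a')$ that agree on $\mathcal N_i^\kappa$, and couple the two trajectories with the same $Z$-process and shared randomness. A discrepancy spreads through $P_i$ (one hop) and through the policy (radius $\kappa_\pi$), so it advances at most $1+\kappa_\pi$ hops per step. Hence $r_i(S_{i,t},A_{i,t})$ has the same law under both inputs for all $t\le t_0$, where $t_0$ is about $\kappa/(1+\kappa_\pi)$. The remaining tail is bounded by $2R_{\max}\sum_{t>t_0}\gamma^t\le \frac{2R_{\max}}{1-\gamma}\gamma^{t_0+1}$, and the counting should give $\gamma^{t_0+1}\le\rho^{\kappa+1}$. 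Averaging the pointwise difference over $\nu_{i,\mathrm{far}}$ yields $|Q_i^\pi-\bar Q_i^\pi|\le \frac{2R_{\max}}{1-\gamma}\rho^{\kappa+1}$.

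\textbf{Taylor term.} As noted before the theorem, $\mathscr D_{i,\beta}=\partial^\beta_s\bar Q_i^\pi$. This follows by differentiating under the integral, which is legitimate because Proposition~\ref{prop:smoothness} makes the derivatives continuous and Lemma~\ref{lem:graded-derivatives} bounds them uniformly. Therefore $\widehat Q_i^\pi$ is the order-$n$ Taylor polynomial of $\bar Q_i^\pi$, as a function of $s_{\mathcal N_i^\kappa}$, around $\mu_{k(s),\mathcal N_i^\kappa}$. Convexity of $\mathcal S$ keeps the segment inside the domain, so Lagrange's remainder applies:
\[
\frac{1}{(n+1)!}\sum_{|\beta|=n+1,\ \mathrm{supp}_G(\beta)\subseteq\mathcal N_i^\kappa}\frac{(n+1)!}{\beta!}\,\partial^\beta\bar Q_i^\pi(\xi)\,(s-\mu)^\beta .
\]
Lemma~\ref{lem:graded-derivatives} survives marginalization, so $|\partial^\beta\bar Q_i^\pi|\le L_Q\rho^{d_i(\beta)}$.

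\textbf{Main obstacle: shell-wise regrouping.} Group the multi-indices by graph reach $d=d_i(\beta)\in\{0,\dots,\kappa\}$ and write $x=s-\mu$ restricted to $\mathcal N_i^\kappa$. By the multinomial theorem, the terms with $d_i(\beta)\le d$ sum in absolute value to $\|x_{\mathcal N_i^d}\|_1^{n+1}$. Let $S_d=\|x_{\mathcal N_i^d}\|_1^{n+1}$ and $S_{-1}=0$. The remainder is then bounded by
\[
\frac{L_Q}{(n+1)!}\sum_{d=0}^{\kappa}\rho^d(S_d-S_{d-1}) .
\]
Abel summation rewrites this as $\rho^\kappa S_\kappa+(1-\rho)\sum_{d<\kappa}\rho^dS_d$. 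Bounding $S_d\le W_d^{n+1}$ and using $1-\rho\ge 0$ gives exactly $\frac{L_Q}{(n+1)!}\mathcal R_{n+1}$. I would double-check the signs in the summation by parts carefully, since that is where the precise form of $\mathcal R_p$ comes from.

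\textbf{Monomial count.} The number of monomials of degree at most $n$ in $|\mathcal N_i^\kappa|d_s$ variables is $\binom{|\mathcal N_i^\kappa|d_s+n}{n}$. Since $d_s$ is fixed, this is $\Theta(|\mathcal N_i^\kappa|^n)$. Finally, $|\mathcal N_i^\kappa|\le 1+\Delta+\dots+\Delta^\kappa=O(\Delta^\kappa)$, so the count is $O(\Delta^{n\kappa})$.
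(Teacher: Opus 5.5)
Your proposal is correct and follows essentially the same route as the paper: a triangle inequality through $\bar Q_i^\pi$, a finite-propagation-plus-discounting bound on the localization term averaged over $\nu_{i,\mathrm{far}}$, and the order-$n$ Taylor remainder of $\bar Q_i^\pi$ controlled by the marginalized derivative bound $L_Q\rho^{d_i(\beta)}$, the multinomial graph-shell identity and summation by parts (your signs are right). The only differences are cosmetic: you use the Lagrange remainder where the paper uses the integral form, and you phrase the light-cone argument as a coupling rather than the paper's conditional-law induction (Lemmas~\ref{lem:app-light-cone}--\ref{lem:app-spatialdecay}).
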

Theorem~\ref{thm:trunc} decomposes the approximation error into the standard
\(O(\rho^{\kappa+1})\) localization term
\citep{qu2020localized,ren2024spectral,wang2026continuous} and a Taylor
remainder. Lemma~\ref{lem:graded-derivatives} sharpens the usual isotropic
remainder \(W_\kappa^{n+1}\) to \(\mathcal R_{n+1}\), which
attenuates contributions from the \(d\)-hop neighborhood by \(\rho^d\) through graph locality.
Due to the $1/(n+1)!$ term, the Taylor order \(n\) typically improves the local
approximation. The effect of \(\kappa\) is subtler: enlarging the neighborhood
reduces localization error but increases the Taylor radius \(W_\kappa\) and
introduces additional distant coordinates into \(\mathcal R_{n+1}\). 
Corollary~\ref{cor:distributional-tc} replaces the worst-case radii \(W_d\) by
average distances to the centroids, motivating the \(K\)-medians~\citep{cardot2012kmedians} procedure used
below to reduce the average \(L^1\) distance to expansion points. The key takeaway is a controlled local Taylor approximation with an \(N\)-independent number of monomials for fixed \((\kappa,n,d_s)\).

\subsection{Linear and Neural Taylor Centroid critic estimation}

The local Taylor representation of Equation~\ref{eq:ideal-tc} is scalable, but its
marginalized coefficients $\mathscr D_{i,\beta}$ must still be learned from rollouts.
Here, we first consider a linear parameterization which we name Linear Taylor Centroid (LTC), and a neural one, Neural Taylor Centroid (NTC). Letting $y_i= h_i(z)$, define the coefficient features\footnote{The coefficient
feature map $\psi$ may be chosen arbitrarily and may include nonlinearities.} and local Taylor basis as 
 \vspace{-0.6em}\[
\psi_i(k,y_i,a_{\mathcal N_i^\kappa})
:=
[1;\mu_{k,\mathcal N_i^\kappa};a_{\mathcal N_i^\kappa};y_i],
\qquad
T_i(s)
:=
\left[
\frac{
(s_{\mathcal N_i^\kappa}-\mu_{k(s),\mathcal N_i^\kappa})^\beta
}{\beta!}
\right]_{\beta\in\mathcal M_{\mathcal N_i^\kappa}^n}.
\]
\vspace{-1.9em}
\paragraph{Linear Taylor Centroid.}
LTC approximates each marginalized Taylor coefficient with a function linear
in $\psi_i$:\vspace{-0.8em}\[
\widetilde Q_i^{\mathrm{LTC}}
(y_i,s,a_{\mathcal N_i^\kappa};w_i)
=
\sum_{\beta\in\mathcal M_{\mathcal N_i^\kappa}^n}
\Bigl(
\psi_i(k(s),y_i,a_{\mathcal N_i^\kappa})^\top
w_{i,\beta}
\Bigr)
\frac{
\bigl(
s_{\mathcal N_i^\kappa}
-\mu_{k(s),\mathcal N_i^\kappa}
\bigr)^\beta
}{\beta!}
\] Alternatively, setting $\Phi_i = \psi_i\otimes T_i$, $\widetilde Q_i^{\mathrm{LTC}} = \Phi_i^\top w_i$ is linear in $w_i$, enabling us to use the Least-Squares Temporal Differences (LSTD) algorithm~\citep{bradtke-barto-1996,lazaric2012lstd,ren2024spectral,qu2020localized,wang2026continuous} to fit the critic. Moreover, the number of learnable parameters in this representation is $p_i
:=
\binom{|\mathcal N_i^\kappa|d_s+n}{n}
\left[
1+|\mathcal N_i^\kappa|(d_s+d_a)+d_g^i
\right] = \Theta\left( |\mathcal N_i^\kappa|^{n+1}\right)=O\left(\Delta^{\kappa(n+1)} \right)$ which is independent of $N$. Importantly, we share the coefficient map \(w_i\) across clusters rather than fitting cluster-specific weights, since rarely visited clusters would otherwise have too little data to estimate their coefficients reliably. Cluster-specific behavior is still captured through the centroid $\mu_k$ in the feature map and behavior relative to the global state $s$ is captured by $k(s)$. Since the ideal Taylor Centroid coefficient maps $\mathscr{D}_{i,\beta}$ need not be affine, we quantify the resulting approximation error by defining
\[
\begin{aligned}
\bar\varepsilon_{\mathrm{lin}}
&:=\max_{\beta\in\mathcal M_{\mathcal N_i^\kappa}^n}
\rho^{-d_i(\beta)}\inf_{w\in\mathbb R^{d_\psi^i}}\sup_{z,k,a}\,
\left|\mathscr{D}_{i,\beta}(z,\mu_{k,\mathcal N_i^\kappa},a_{\mathcal N_i^\kappa})
-w^\top\psi_i(k,h_i(z),a_{\mathcal N_i^\kappa})\right|
\le L_Q.
\end{aligned}
\]
The quantity $\bar\varepsilon_{\mathrm{lin}}$ measures misspecification of the
coefficient feature map, normalized by the natural scale
$\rho^{d_i(\beta)}$ from Lemma~\ref{lem:graded-derivatives}. Importantly, localization reduces the
effective dimension of coefficient estimation from the global state dimension
to that of a local neighborhood, making this estimation problem substantially
more tractable. Furthermore, adding adequate features to the feature map $\psi_i$ may lower this misspecification. Notice also that increasing $n$ introduces additional
coefficients to fit, so $\bar\varepsilon_{\mathrm{lin}}$ is nondecreasing in $n$. 
Increasing $\kappa$ similarly enlarges the representation and its parameter
count but may allow for better estimation through a better feature map. Thus, although larger $n$ and $\kappa$ can capture richer critic structure, they can also make estimation harder, yielding an
approximation--estimation tradeoff in practice. This leads to the following finite-sample bound.\vspace{-0.4em}

\begin{theorem}[Finite-sample error of LTC]
\label{thm:ltc-finite}
Fix the centroids and collect $M_c$ independent rollouts of length $T$. Define
\[
\mathcal E_i^{(\kappa)}
:=\bar\varepsilon_{\rm lin}\sum_{p=0}^{n}\frac{\mathcal R_p}{p!},
\qquad
\mathcal G_i^{(\kappa)}
:=R_\psi^2\sum_{p=0}^{n}\frac{W_\kappa^{2p}}{p!},
\qquad
\mathcal V_i^{(\kappa)}
:=\frac{R_{\max}}{1-\gamma}
+L_Q\sum_{p=1}^{n}\frac{\mathcal R_p}{p!},
\]
where $R_\psi:=\sup_{k,y,a}\|\psi_i(k,y,a)\|$.
Conditionally on the centroids and on
$\bigcap_{i=1}^N\{\sigma_{\min}(\widehat A_i)\ge c_{\rm LSTD}>0\}$,
with probability at least $1-\delta$, simultaneously for all $i\in[N]$,
\[
\begin{aligned}
\left\|
Q_i^\pi-
\widetilde Q_i^{\rm LTC}
\bigl(h_i(\cdot),\cdot,\cdot_{\mathcal N_i^\kappa};\widehat w_i\bigr)
\right\|_\infty
&\le
\left(
1+\frac{(1+\gamma)\mathcal G_i^{(\kappa)}}{c_{\rm LSTD}}
\right)
\bigl(\Delta_{\rm TC}+\mathcal E_i^{(\kappa)}\bigr)
\\
&\quad+
\frac{
2\gamma\mathcal G_i^{(\kappa)}
\bigl(\mathcal V_i^{(\kappa)}+\mathcal E_i^{(\kappa)}\bigr)}
{c_{\rm LSTD}}
\sqrt{\frac{2\log(2N/\delta)}{M_c(T-1)}} .
\end{aligned}
\]
\end{theorem}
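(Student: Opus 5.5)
The proof follows the standard LSTD-with-misspecification template. We decompose the error against a best-in-class comparator $w_i^\star$ and bound three terms: approximation error, projection/estimation amplification, and statistical fluctuation.

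\textbf{Step 1 (comparator).} For each $\beta\in\mathcal M_{\mathcal N_i^\kappa}^n$, let $w_{i,\beta}^\star$ nearly attain the infimum in $\bar\varepsilon_{\rm lin}$. Then
\[
\bigl|\mathscr D_{i,\beta}-\psi_i^\top w_{i,\beta}^\star\bigr|\le \bar\varepsilon_{\rm lin}\rho^{d_i(\beta)}.
\]
Summing against the basis $T_i$ requires bounding, for each $p$,
\[
\sum_{|\beta|=p,\ \beta\in\mathcal M}\rho^{d_i(\beta)}\frac{|x^\beta|}{\beta!}.
\]
To do this, group the multi-indices by graph reach $d$. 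The multinomial identity gives $\sum_{|\beta|=p,\ \operatorname{supp}\subseteq \mathcal N_i^d}|x^\beta|/\beta!=\|x_{\mathcal N_i^d}\|_1^p/p!$. Abel summation over $d$ (reach exactly $d$ equals the set for $\mathcal N_i^d$ minus the set for $\mathcal N_i^{d-1}$) then yields exactly $\mathcal R_p/p!$. The same computation, with $L_Q\rho^{d_i(\beta)}$ from Lemma~\ref{lem:graded-derivatives}, is how $\mathcal R_{n+1}$ enters $\Delta_{\rm TC}$. Consequently
\[
\|\widehat Q_i^\pi-\Phi_i^\top w_i^\star\|_\infty\le\mathcal E_i^{(\kappa)},
\]
and with Theorem~\ref{thm:trunc} this gives
\[
\|Q_i^\pi-\Phi_i^\top w_i^\star\|_\infty\le \Delta_{\rm TC}+\mathcal E_i^{(\kappa)}=:\eta.
\]
In the same way, $\|\Phi_i^\top w_i^\star\|_\infty\le \mathcal V_i^{(\kappa)}+\mathcal E_i^{(\kappa)}$. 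This holds because $|\mathscr D_{i,0}|\le R_{\max}/(1-\gamma)$ and, for $|\beta|\ge1$, $|\mathscr D_{i,\beta}|\le L_Q\rho^{d_i(\beta)}$. Finally, $\|\Phi_i\|^2\le R_\psi^2\|T_i\|^2\le R_\psi^2\sum_p W_\kappa^{2p}/p!$, since $\sum_{|\beta|=p}x^{2\beta}/(\beta!)^2\le \|x\|_2^{2p}/p!\le\|x\|_1^{2p}/p!$. So $\|\Phi_i\|^2\le\mathcal G_i^{(\kappa)}$.

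\textbf{Step 2 (LSTD error equation).} Write $\widehat A_i=\frac1{M_c(T-1)}\sum\Phi_t(\Phi_t-\gamma\Phi_{t+1})^\top$ and $\widehat b_i=\frac1{M_c(T-1)}\sum\Phi_t r_{i,t}$. Since $\widehat w_i=\widehat A_i^{-1}\widehat b_i$, we have
\[
\widehat w_i-w_i^\star=\widehat A_i^{-1}\frac{1}{M_c(T-1)}\sum_t\Phi_t\bigl(r_{i,t}+\gamma\Phi_{t+1}^\top w^\star-\Phi_t^\top w^\star\bigr).
\]
Next decompose the TD residual. Inserting $Q_i^\pi$ gives $r_{i,t}+\gamma Q_i^\pi(t+1)-Q_i^\pi(t)$, a martingale-difference term $\xi_t$, plus a misspecification part bounded by $(1+\gamma)\eta$. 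The residual is taken along the sampled on-policy next action, so the Bellman equation makes $\xi_t$ mean zero given the past. The misspecification part contributes at most $\mathcal G_i^{(\kappa)}(1+\gamma)\eta/c_{\rm LSTD}$ to $\|\Phi^\top(\widehat w-w^\star)\|_\infty$, once we multiply by $\sup\|\Phi\|$ on both sides.

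\textbf{Step 3 (concentration).} The main obstacle is controlling the noise term uniformly over $i$. I would redefine the noise using the comparator: $\zeta_t=r_{i,t}+\gamma\Phi_{t+1}^\top w^\star-\Phi_t^\top w^\star$ minus its conditional mean. This makes the increments bounded by $2\gamma(\mathcal V+\mathcal E)$-type quantities, matching the stated constant. The vector martingale $\sum_t\Phi_t\zeta_t$ over $M_c$ independent rollouts, each of $T-1$ transitions, then has bounded increments of norm at most $\sqrt{\mathcal G}\cdot 2\gamma(\mathcal V+\mathcal E)$. Applying Azuma–Hoeffding to the scalar projection $\Phi(x)^\top\widehat A_i^{-1}\sum\Phi_t\zeta_t$ (or a vector Azuma bound) gives deviation $\sqrt{2\log(2/\delta')/(M_c(T-1))}$ times the increment bound. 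A union bound with $\delta'=\delta/N$ gives $\log(2N/\delta)$. Multiplying by $\sqrt{\mathcal G}/c_{\rm LSTD}$ gives the second term of the statement. The delicate point is that $\widehat A_i^{-1}$ is random. I would handle it by working on the conditioning event and bounding $\|\widehat A_i^{-1}\|\le 1/c_{\rm LSTD}$ after concentrating the vector sum, so that no concentration through the inverse is needed.

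\textbf{Step 4 (assemble).} The triangle inequality $\|Q_i^\pi-\Phi^\top\widehat w\|_\infty\le\eta+\|\Phi^\top(\widehat w-w^\star)\|_\infty$ combines the three contributions and yields the stated bound.
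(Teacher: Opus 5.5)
Your proposal is correct and follows the paper's proof essentially step for step. The shared steps are the comparator from the affine-coefficient approximation together with the graph-shell identity, the LSTD residual controlled through $\|\widehat A_i^{-1}\|\le 1/c_{\rm LSTD}$ and $\|\Phi_i\|^2\le\mathcal G_i^{(\kappa)}$, a Bellman-bias term of size $(1+\gamma)(\Delta_{\rm TC}+\mathcal E_i^{(\kappa)})$ plus a comparator-based martingale handled by a Hilbert-space Azuma bound, and a union bound over agents. You should discard the scalar-projection variant you float, because $\widehat A_i^{-1}$ is not adapted and pointwise bounds would not control the sup norm; the vector route you settle on is exactly the paper's. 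Incidentally, your first, $Q_i^\pi$-based decomposition already suffices: its noise $\xi_t=\gamma\bigl(Q_i^\pi(t+1)-\mathbb E[Q_i^\pi(t+1)\mid Z_t,S_t,A_t]\bigr)$ has increments bounded by $2\gamma R_{\max}/(1-\gamma)\le 2\gamma(\mathcal V_i^{(\kappa)}+\mathcal E_i^{(\kappa)})$, and since it does not depend on the comparator, a single concentration event covers every near-optimal comparator, so no limiting argument in the slack is needed.
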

Theorem~\ref{thm:ltc-finite} separates the LTC error into the intrinsic
Taylor Centroid approximation $\Delta_{\mathrm{TC}}$, the estimation
error $\mathcal E_i^{(\kappa)}$, and a standard statistical
$O((M_cT)^{-1/2})$ term modulated by the local feature factor
$\mathcal G_i^{(\kappa)}$, critic scale $\mathcal V_i^{(\kappa)}$, and LSTD
conditioning. The term $\mathcal E_i^{(\kappa)}$ captures the following centroid tradeoff:
increasing $K$ reduces the local radii in $\mathcal R_p$, but can increase
$\bar\varepsilon_{\mathrm{lin}}$ as a single shared coefficient map must fit
more centroid-dependent coefficients. Appendix~\ref{app:centroid-ablation}
examines these two effects empirically. As in \citet{ren2024spectral}, our guarantee is conditional on the LSTD matrices being well conditioned. We choose the number of rollouts empirically to ensure this condition (see Appendix~\ref{app:common-protocol}); ridge regularization can further improve conditioning in practice. \\
This parallels prior finite-sample critic bounds of \citep{qu2020localized,ren2024spectral}, with \citet{ren2024spectral} also incurring spectral approximation error. Ours instead obtains Taylor and estimation errors, motivating the following neural extension when the latter dominates.\vspace{-0.8em}
\paragraph{Neural Taylor Centroid.}
NTC replaces the affine coefficients by a neural network $f_{\xi_i}:
\mathbb R^{\mathrm{dim}(\psi_i)}
\longrightarrow
\mathbb R^{|\mathcal{M}_{\mathcal{N}_i^\kappa}^n|}.$ \[
\widetilde Q_i^{\mathrm{NTC}}
=
\langle f_{\xi_i}(\psi_i),T_i(s)\rangle = \sum_{\beta\in\mathcal M_{\mathcal N_i^\kappa}^{n}}
\left[f_{\xi_i}(\psi_i)\right]_\beta
\frac{
\left(
s_{\mathcal N_i^\kappa}
-
\mu_{k(s),\mathcal N_i^\kappa}
\right)^\beta
}{
\beta!
}.
\] 
 We train $f_{\xi_i}$ by standard semi-gradient TD regression~\citep{sutton2018reinforcement} (see Appendix~\ref{app:ntc}). NTC preserves the structural guarantees of Theorem~\ref{thm:trunc} and changes only the
coefficient parameterization. This can reduce the analogous error to
$\bar\varepsilon_{\rm lin}$, at the cost of the guarantees of Theorem~\ref{thm:ltc-finite}. 
\vspace{-0.9em}
\section{Learning and Policy Optimization}\label{sec:learning}
To construct a scalable policy-gradient estimator, let agent $i$'s observation be
$o_i(z,s):=(s_{\mathcal N_i^{\kappa_\pi}},h_i(z))$ and define its policy score $\mathsf{s}_i(o_i,a_i)
:=
\nabla_{\theta_i}
\log \pi_{i,\theta_i}(a_i\mid o_i)$,
which we assume to be uniformly bounded as standard in localized policy-gradient
analyses~\citep{qu2020average,ren2024spectral}: $
\sup_{o_i,a_i}
\|\mathsf{s}_i(o_i,a_i)\|
\le B_{\mathsf{s},i}.
$ By the global reward decomposition and the policy gradient theorem
\citep{sutton1999policy}:
\[
\nabla_{\theta_i} J(\theta)
=
\frac{1}{N}
\mathbb E_{\pi_\theta}
\left[
\sum_{t\ge0}\gamma^t
\mathsf{s}_i(O_{i,t},A_{i,t})
\sum_{\ell=1}^N
A_\ell^{\pi_\theta}(Z_t,S_t,A_t)
\right].
\]\paragraph{Estimating the advantage through the critic.}
At actor iteration $m$, let $\widetilde Q_{\ell,m}$ denote the LTC/NTC critic
of agent $\ell$ and suppose that, for every $\ell, z,s,a,
|
\widetilde Q_{\ell,m}
(h_\ell(z),s,a_{\mathcal N_\ell^\kappa})
-
Q_\ell^{\pi_{\theta_m}}(z,s,a)
|
\le
\Delta_Q^{(m)}$. For LTC, Theorem~\ref{thm:ltc-finite} provides such a finite-sample
certificate; for NTC, $\Delta_Q^{(m)}$ denotes the approximation
error achieved by the neural critic.
Let $\widetilde V_{\ell,m}$ be the value function induced by
$\widetilde Q_{\ell,m}$. We use the one-step TD~\citep{sutton1988learning,sutton2018reinforcement,konda2000actor} residual as the local
advantage estimate
$\widehat A_{\ell,t}:=\widehat\delta_{\ell,t}
=r_{\ell,t}
+\gamma\widetilde V_{\ell,m}(Z_{t+1},S_{t+1})
-\widetilde V_{\ell,m}(Z_t,S_t)$.
In practice, we use $M_s$ additional actor rollouts to form agent $i$'s empirical
policy-gradient estimator:
\begin{equation}
\widehat g_{i,m}
=
\frac{1}{NM_s}
\sum_{j=1}^{M_s}
\sum_{t=0}^{T-1}
\gamma^t
\mathsf{s}_i
\bigl(O_{i,t}^{(j)},A_{i,t}^{(j)}\bigr)
\sum_{\ell\in\mathcal N_i^{\kappa+\kappa_\pi}}
\widehat A_{\ell,t}^{(j)} .
\label{eq:actor-gradient-estimator}
\end{equation}
Equation~\ref{eq:actor-gradient-estimator} estimates agent $i$'s policy-gradient
contribution by retaining only advantages of agents $\ell\in\mathcal N_i^{\kappa+\kappa_\pi}$ which removes the dependence in $N$.  This follows the truncated policy gradient
principle of \citet{qu2020localized,ren2024spectral}, here using localized
one-step TD advantages rather than localized $Q$-values. We then obtain the following \emph{near-stationarity} guarantee.

\begin{theorem}[End-to-end policy optimization]
\label{thm:end-to-end}
Let $\Gamma_T:=(1-\gamma^T)/(1-\gamma)$ and define
\[
\tau_i:=\frac1N\!\sum_{\ell:\,d_G(i,\ell)>\kappa+\kappa_\pi}\rho^{d_G(i,\ell)},\quad
\mathcal T:=\Bigl(\sum_{i=1}^N B_{\mathsf s,i}^2\tau_i^2\Bigr)^{1/2},\quad
\mathcal B:=\frac1N\Bigl(\sum_{i=1}^N
(B_{\mathsf s,i}|\mathcal N_i^{\kappa+\kappa_\pi}|)^2\Bigr)^{1/2}.
\]
Assume that $J$ is $L_J$-smooth. Let $\hat{g}_m = (\hat{g}_{1,m}, \dots, \hat{g}_{N,m})$; with probability at least $1-\delta$, for every $m<M$,
\begin{equation}
\begin{aligned}
\|\widehat g_m-\nabla J(\theta_m)\|
\le \epsilon_m
:={}&
\frac{2R_{\max}}{(1-\gamma)^2}\mathcal T
+\mathcal B\!\left(
\gamma\Gamma_T\Delta_Q^{(m)}
+\frac{2R_{\max}\gamma^T}{(1-\gamma)^2}
\right)\\
&\quad+
G_m\sqrt{\frac{2\log(4M/\delta)}{M_s}},
\end{aligned}
\label{eq:gradient-error}
\end{equation}
where $G_m$ bounds the localized policy-gradient samples (details are given in
Appendix~\ref{app:end-to-end-proof}). If $\epsilon_m\le\bar\epsilon$ and
$G_m\le\bar G$ for all $m<M$, then
$\theta_{m+1}=\theta_m+\eta(m+1)^{-1/2}\widehat g_m$ satisfies
\begin{equation}
\min_{0\le m<M}\|\nabla J(\theta_m)\|^2
\le
O(M^{-1/2})+\bar\epsilon^2
+\bar G^2 O\!\left(\frac{\log M}{\sqrt M}\right).
\label{eq:stationarity-main}
\end{equation}
\end{theorem}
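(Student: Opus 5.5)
We split $\widehat g_m-\nabla J(\theta_m)$ into four pieces by inserting intermediate quantities, bound each, and then run a standard nonconvex SGD-with-biased-gradient argument.

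\textbf{Intermediate quantities.} For agent $i$ define:
\begin{itemize}
\item[(a)] the exact gradient $\nabla_{\theta_i}J$, written with the global advantage $\sum_{\ell=1}^N A_\ell^{\pi_\theta}$;
\item[(b)] the localized gradient $g^{\mathrm{loc}}_i$, which keeps only $\ell\in\mathcal N_i^{\kappa+\kappa_\pi}$ and still uses exact advantages and an infinite horizon;
\item[(c)] $g^{T}_i$, the same quantity but with $\widehat A_{\ell,t}$ built from $\widetilde V_{\ell,m}$ and truncated at horizon $T$;
\item[(d)] the empirical average $\widehat g_{i,m}$.
\end{itemize}

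\textbf{(a)$\to$(b).} For $d_G(i,\ell)>\kappa+\kappa_\pi$ we use the standard argument of \citet{qu2020localized,ren2024spectral}. Since $\mathbb E[\mathsf s_i\mid Z_t,S_t]=0$, we may replace $A_\ell$ by $Q_\ell$. Changing $a_i$ only affects $Q_\ell$ through information propagating at speed $1+\kappa_\pi$ per step. A coupling argument then gives
\[
|\mathbb E[\mathsf s_i\,Q_\ell]|\le B_{\mathsf s,i}\cdot \frac{2R_{\max}}{1-\gamma}\rho^{d_G(i,\ell)},
\]
following the same finite-propagation reasoning used for Lemma~\ref{lem:graded-derivatives}. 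Summing over $t$ with $\gamma^t$ contributes a further factor $1/(1-\gamma)$. Per coordinate this yields $\frac{2R_{\max}}{(1-\gamma)^2}B_{\mathsf s,i}\tau_i$, and taking the Euclidean norm over $i$ gives the $\mathcal T$ term.

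\textbf{(b)$\to$(c).} This is controlled by the critic error.
\begin{itemize}
\item The expected one-step TD residual of the exact $V_\ell$ equals $A_\ell$ conditionally on $(Z_t,S_t,A_t)$.
\item Replacing $V_\ell$ by $\widetilde V_{\ell,m}$, which inherits error $\Delta_Q^{(m)}$ by averaging over $a$, shifts the residual by $\gamma(\widetilde V-V)(S_{t+1})-(\widetilde V-V)(S_t)$.
\item The $(\widetilde V-V)(S_t)$ part is a function of the state only, so it vanishes against the score. Hence only $\gamma\Delta_Q^{(m)}$ per step survives.
\item Summing $\gamma^t$ over $t<T$ gives $\Gamma_T$. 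There are $|\mathcal N_i^{\kappa+\kappa_\pi}|$ advantages per agent, each score is bounded by $B_{\mathsf s,i}$, and there is the factor $1/N$.
\item The tail $t\ge T$ contributes $\gamma^T\cdot 2R_{\max}/(1-\gamma)^2$ per advantage.
\end{itemize}
Taking the norm over $i$ produces $\mathcal B(\cdot)$.

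\textbf{(c)$\to$(d).} This is a vector concentration step. The rollouts are i.i.d. and each sample vector is bounded by $G_m$ (this is where $G_m$ is defined in the appendix). A vector Hoeffding/Pinelis bound gives $G_m\sqrt{2\log(4/\delta')/M_s}$. A union bound over $m<M$ with $\delta'=\delta/M$ gives the stated form.

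\textbf{Stationarity.} This follows from $L_J$-smoothness. With $\eta_m=\eta(m+1)^{-1/2}$,
\[
J(\theta_{m+1})\ge J(\theta_m)+\eta_m\langle\nabla J,\widehat g_m\rangle-\tfrac{L_J}{2}\eta_m^2\|\widehat g_m\|^2 .
\]
Write $\langle\nabla J,\widehat g\rangle\ge \tfrac12\|\nabla J\|^2-\tfrac12\|\widehat g-\nabla J\|^2$ and $\|\widehat g_m\|\le \bar G$ (or $\le\|\nabla J\|+\bar\epsilon$). Then telescope using $J\le R_{\max}/(1-\gamma)$. Dividing by $\sum\eta_m\asymp\sqrt M$ gives $O(M^{-1/2})$ from the initial gap, $\bar\epsilon^2$ from the bias, and $\bar G^2\sum\eta_m^2/\sum\eta_m=\bar G^2O(\log M/\sqrt M)$.

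\textbf{Main obstacle.} I expect the hardest step to be (a)$\to$(b): making rigorous that $\mathbb E[\mathsf s_i Q_\ell]$ decays like $\rho^{d_G(i,\ell)}$ when the dynamics include the shared context $z$. Because $z$ is driven by $P_g(z'\mid z)$, which is independent of $s,a$, the coupling can share the $z$-trajectory. The plan is to couple two trajectories differing only in agent $i$'s action at time $t$, and to show that the marginal of $(S_\ell,A_\ell)$ is identical for the first $\lfloor d/(1+\kappa_\pi)\rfloor$ steps. I would use the policy-gradient identity as a difference of $Q$ under $a_i$ resampled, so that the score enters only through $B_{\mathsf s,i}$.
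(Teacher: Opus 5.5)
Your proposal is correct and follows essentially the paper's own proof: the same split into spatial truncation (via an $a_i$-independent surrogate that vanishes against the score, plus the spatial-decay Lemma~\ref{lem:app-spatialdecay} at radius $d_G(i,\ell)-1$), one-step TD critic bias $\gamma B_{\mathsf s,i}\Delta_Q^{(m)}$, horizon tail, Hilbert-space Hoeffding with a union bound over $m$, and $L_J$-smooth ascent telescoped with Young's inequality; the only cosmetic difference is that you subtract $Q_\ell$ at a resampled $a_i$, whereas the paper subtracts the auxiliary marginalized critic $\bar Q^\pi_{\ell,d_G(i,\ell)-1}$. One small correction to your coupling count: agent $\ell$'s reward at time $t$ is unaffected by $a_i$ exactly when $t<d/(1+\kappa_\pi)$, i.e.\ for $\lceil d/(1+\kappa_\pi)\rceil=\lfloor (d-1)/(1+\kappa_\pi)\rfloor+1$ steps, and this is what yields $\gamma^{\lceil d/(1+\kappa_\pi)\rceil}\le\rho^{d}$ --- counting only $\lfloor d/(1+\kappa_\pi)\rfloor$ steps gives $\gamma^{\lfloor d/(1+\kappa_\pi)\rfloor}$, which exceeds $\rho^{d}$ whenever $d$ is not a multiple of $1+\kappa_\pi$.
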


Theorem~\ref{thm:end-to-end} separates localization, critic, rollout-truncation,
and actor-sampling errors, which decrease with localization radius, critic
accuracy, rollout length, and actor sample size, respectively, yielding a
near-stationarity guarantee with an explicit error floor. The 
term $\mathcal T$ sharpens the worst-case truncation bound of
\citet{ren2024spectral} by accounting for each discarded agent $\ell$ with
its own decay factor $\rho^{d_G(i,\ell)}$. This can substantially improve the
bound when $\rho$ is small: if
$\sum_\ell \rho^{d_G(i,\ell)}=O(1)$, then $\mathcal T=O(N^{-1/2})$. In contrast,  discarded agents were previously assigned the same worst-case decay factor $\rho^{\kappa+\kappa_\pi}$ which can produce an $O(1)$ error in each agent's policy-gradient estimate and a corresponding term $\mathcal T$ of order $O(\sqrt N)$.
\vspace{-0.8em}
\section{Algorithm and Complexity}\label{sec:algorithm}

We present the Taylor Centroid actor--critic for LTC and NTC, which uses only rollout data. \vspace{-0.8em}

\begin{algorithm}[H]
\caption{Taylor Centroid Actor--Critic}
\label{alg:tc-actor-critic}
\small
\begin{algorithmic}[1]
\Require Policy $\theta_0$, stepsizes $\eta_m=\eta(m+1)^{-1/2}$,
radii $\kappa,\kappa_\pi$, Taylor order $n$, centroids $K$, number of epochs $M$
\For{$m=0,\ldots,M-1$}
    \State Refit the $K$ centroids from previously observed states.
    \State Collect $M_c$ critic rollouts; in parallel over agents $i$, fit $\widetilde Q_{i,m}$ using LTC or NTC.
    \State Collect $M_s$ additional actor rollouts; in parallel over $i$, compute $\widehat A_{\ell,t}$ for $\ell\in\mathcal N_i^{\kappa+\kappa_\pi}$ and form $\widehat g_{i,m}$.
    \State Set $\widehat g_m=(\widehat g_{1,m},\ldots,\widehat g_{N,m})$ and update $\theta_{m+1}\gets\theta_m+\eta_m\widehat g_m$.
\EndFor
\end{algorithmic}
\end{algorithm} \vspace{-1.4em}
Motivated by Corollary~\ref{cor:distributional-tc}, we fit $K$-medians centroids and refit them at each iteration to track policy-induced shifts in the state distribution (see Appendix~\ref{app:switched-clusters}). Importantly, the time complexity of each LTC iteration scales linearly with \(N\) and is largely parallelizable across agents and samples; a detailed complexity analysis is provided in Appendix~\ref{app:complexity-proof}.

\vspace{-0.7em}
\section{Experiments}
\label{sec:experiments}

We evaluate LTC and NTC on networked LQR
\citep{kalman1960control,fazel2018global,ren2024spectral}, switched LQR
\citep{costa2005markov,ogura2018hidden}, and nonlinear Kuramoto
synchronization \citep{kuramoto1975,dorfler2014synchronization,ren2024spectral}.
Baselines include random Fourier features (RF on LQR and RF-SAC for Kuramoto) and Spectral-SAC \citep{ren2024spectral}, CDCPG\footnote{To satisfy the assumption $\kappa> \kappa_\pi$ of \citet{wang2026continuous}, we choose $\kappa' = \max(\kappa,\kappa_\pi+1)$ in each panel of the plots below for CDCPG only.}
\citep{wang2026continuous}, and generic multilayer perceptron (MLP) and graph neural network (GNN) critics. Recall that RF and CDCPG do not directly apply to hidden switched LQR, which lacks their
required known local dynamics model. We describe the environments in Appendix~\ref{app:experimental-details}; the evaluation and training protocols are given in Appendix~\ref{app:experimental-protocol}.

\paragraph{Critic estimation quality.}
Figure~\ref{fig:critic-calibration} describes the effect of Taylor order on the critic estimation.  On LQR, where $V^\pi$ is quadratic for the affine-mean Gaussian policies considered here, the sharp improvement from $n=1$ to $n=2$ and saturation at $n=3$ match the expected second-order structure.  Switched LQR shows the same pattern
despite the additional difficulty of predicting the mode-conditioned value
$V_\sigma^\pi$ without observing $\sigma$; here NTC significantly improves over LTC,
consistent with its greater expressivity.
Regarding the Kuramoto benchmark, we observe a near-systematic improvement with n, consistent with the fact that the critic is non-quadratic. 
Increasing \(\kappa\) or \(n\) can improve expressivity but makes estimation harder by increasing the number of parameters.  Nevertheless, for every $(\kappa,\kappa_\pi)$, we observe that Taylor Centroid is competitive with the other baselines. Another parameter affecting the critic estimation quality is the number of clusters. We study this effect in Appendix~\ref{app:centroid-ablation}.
\begin{figure}[h!]
    \centering
    \includegraphics[width=1\linewidth]{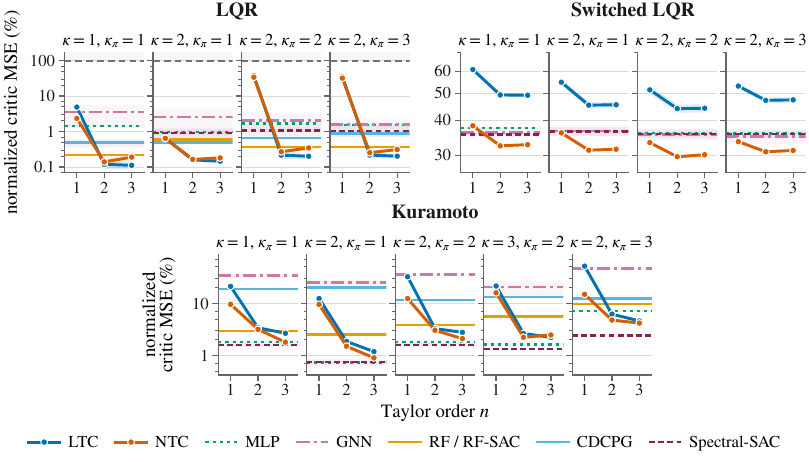}%
    \caption{\textbf{Critic estimation quality.}
Normalized critic MSE across Taylor orders and locality settings on LQR, switched LQR and the Kuramoto oscillator.}
    \label{fig:critic-calibration}
\end{figure}
\paragraph{Policy performance and convergence.}
Figure~\ref{fig:policy-convergence}a shows the evolution of the gap $|J-J^\star|/|J^\star|$ at each epoch on both LQR settings. Grey lines give the best value of $J(\theta)$ found by multistart optimization over each class of affine-mean Gaussian policies with radius $\kappa_\pi$ (a local optimum in general, see Paragraph~\ref{par:policy-perf}). We observe that LTC/NTC matches or outperforms  other baselines across various configurations $(\kappa,\kappa_\pi)$. In particular, on both LQR settings and all configurations $(\kappa,\kappa_\pi)$ but one, LTC/NTC converge to the maximum value mentioned above. Similarly to Figure~\ref{fig:critic-calibration}, we observe that for $\kappa_\pi\ge2$ the performance improves significantly from $n=1$ to $n=2$ (Figure~\ref{fig:taylor-ablation} in Appendix~\ref{app:further-exp}). Figure~\ref{fig:policy-convergence}b shows the evolution of the undiscounted return at each epoch on Kuramoto.  Importantly, NTC matches or outperforms LTC, highlighting the benefit of more expressivity under nonlinear dynamics. Furthermore, in this setting, our methods converge near the greedy controller's value. It is also noteworthy that for both LQR benchmarks, LTC ($n=2$) matches or beats the baselines with few learnable parameters. For instance, LTC $(n=2)$ matches the MLP baseline on LQR and systematically outperforms it on switched LQR, while learning $231$ coefficients per agent at $\kappa=2$ against $17{,}409$ for the MLP (Table~\ref{tab:param-counts}). Finally, our model-free approach systematically matches or outperforms prior spectral baselines across all three environments despite not having access to the transition kernel. Appendix~\ref{app:trajectories}
provides empirical evidence that these policy improvements translate into the intended behavior in the resulting trajectories.
\begin{figure}[h!]
    \centering
    \includegraphics[width=1\linewidth]{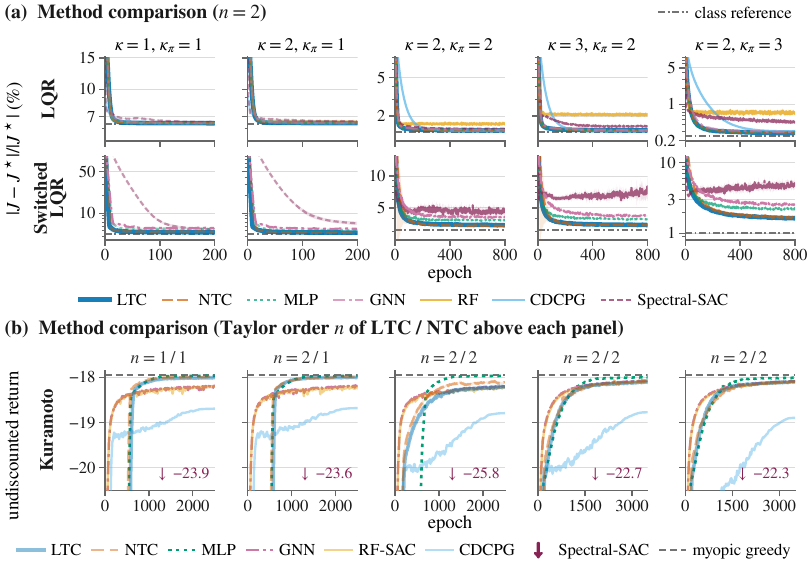}%
    \caption{\textbf{Policy convergence}. Method comparison LQR and switched LQR (relative optimality gap), and policy convergence on Kuramoto (undiscounted return).}
    \label{fig:policy-convergence}
\end{figure}
\paragraph{Scalability.} Figure~\ref{fig:scalability} shows the predicted \(O(N)\) runtime scaling and near-reference performance up to \(N=1000\); in this experiment LTC outperforms RF in runtime.

\begin{figure}[h!]
    \centering
    \includegraphics[width=\linewidth]{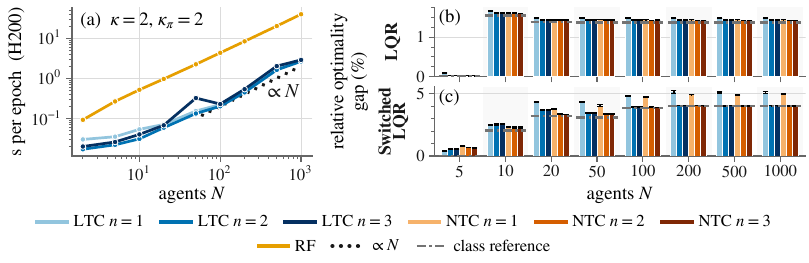}%
    \caption{\textbf{Scalability.}
\textbf{Left:} runtime per epoch on an H200 GPU as a function of the number of agents $N$.
\textbf{Right:} relative optimality gap as a function of $N$ on LQR and switched LQR.}
    \label{fig:scalability}
\end{figure}

\section{Conclusion and limitations}
We showed that graph locality turns otherwise ambient-dimensional Taylor representations into scalable local critics without the transition-model access required by prior constructive spectral approaches. This yields a model-free actor--critic for continuous networked MDPs applicable beyond prior frameworks. While experiments show competitive, scalable performance, some limitations also suggest directions for future work. Our LTC guarantee assumes well-conditioned LSTD systems, whose characterization requires complementary analysis. Moreover, the Taylor-coefficient function class is unknown a priori, precluding sharp control of $\bar\varepsilon_{\mathrm{lin}}$ and requiring empirical feature selection. The policy-optimization guarantee reaches a stationary point of $J$, which may not be a global maximum. $K$-medians can also be challenging in high-dimensional state spaces. Finally, extending finite-sample critic guarantees to NTC is an important but more challenging direction.


\bibliography{biblio}

@article{bradtke-barto-1996,
  title   = {Linear Least-Squares Algorithms for Temporal Difference Learning},
  author  = {Bradtke, Steven J. and Barto, Andrew G.},
  journal = {Machine Learning},
  volume  = {22},
  number  = {1--3},
  pages   = {33--57},
  year    = {1996},
  doi  = {10.1007/BF00114723}
}

@article{lazaric2012lstd,
  title   = {Finite-Sample Analysis of Least-Squares Policy Iteration},
  author  = {Lazaric, Alessandro and Ghavamzadeh, Mohammad and Munos, R{\'e}mi},
  journal = {Journal of Machine Learning Research},
  volume  = {13},
  pages   = {3041--3074},
  year    = {2012}
}

@inproceedings{qu2020localized,
  title     = {Scalable Reinforcement Learning of Localized Policies for Multi-Agent Networked Systems},
  author    = {Qu, Guannan and Wierman, Adam and Li, Na},
  booktitle = {Proceedings of the 2nd Conference on Learning for Dynamics and Control},
  series    = {Proceedings of Machine Learning Research},
  volume    = {120},
  pages     = {256--266},
  publisher = {PMLR},
  year      = {2020}
}

@article{qu2022networked,
  title   = {Scalable Reinforcement Learning for Multiagent Networked Systems},
  author  = {Qu, Guannan and Wierman, Adam and Li, Na},
  journal = {Operations Research},
  volume  = {70},
  number  = {6},
  pages   = {3601--3628},
  year    = {2022},
  doi     = {10.1287/opre.2021.2226}
}

@inproceedings{lin2021stochastic,
  title     = {Multi-Agent Reinforcement Learning in Stochastic Networked Systems},
  author    = {Lin, Yiheng and Qu, Guannan and Huang, Longbo and Wierman, Adam},
  booktitle = {Advances in Neural Information Processing Systems},
  volume    = {34},
  pages     = {7825--7837},
  year      = {2021}
}

@article{zhang2023lpi,
  title   = {Global Convergence of Localized Policy Iteration in Networked Multi-Agent Reinforcement Learning},
  author  = {Zhang, Yizhou and Qu, Guannan and Xu, Pan and Lin, Yiheng and Chen, Zaiwei and Wierman, Adam},
  journal = {Proceedings of the ACM on Measurement and Analysis of Computing Systems},
  volume  = {7},
  number  = {1},
  pages   = {1--51},
  year    = {2023},
  doi     = {10.1145/3579443}
}

@inproceedings{chu2020networked,
  title     = {Multi-Agent Reinforcement Learning for Networked System Control},
  author    = {Chu, Tianshu and Chinchali, Sandeep and Katti, Sachin},
  booktitle = {International Conference on Learning Representations},
  year      = {2020}
}

@inproceedings{olsson2024lqr,
  title     = {Scalable Reinforcement Learning for Linear-Quadratic Control of Networks},
  author    = {Olsson, Johan and Zhang, Runyu and Tegling, Emma and Li, Na},
  booktitle = {2024 American Control Conference},
  pages     = {1813--1818},
  year      = {2024},
  doi       = {10.23919/ACC60939.2024.10644413}
}

@article{wang2026continuous,
  title   = {Scalable Policy Optimization for Networked Multi-Agent Reinforcement Learning with Continuous State-Action Spaces},
  author  = {Wang, Dongming and Dai, Pengcheng and Yu, Wenwu and Ren, Wei},
  journal = {arXiv preprint arXiv:2607.18554},
  year    = {2026}
}

@inproceedings{ren2024spectral,
  title     = {Scalable Spectral Representations for Multiagent Reinforcement Learning in Network {MDP}s},
  author    = {Ren, Zhaolin and Zhang, Runyu and Dai, Bo and Li, Na},
  booktitle = {Proceedings of the 28th International Conference on Artificial Intelligence and Statistics},
  series    = {Proceedings of Machine Learning Research},
  volume    = {258},
  pages     = {550--558},
  publisher = {PMLR},
  year      = {2025}
}

@inproceedings{maran2024smooth,
  title     = {No-Regret Reinforcement Learning in Smooth {MDP}s},
  author    = {Maran, Davide and Metelli, Alberto Maria and Papini, Matteo and Restelli, Marcello},
  booktitle = {Proceedings of the 41st International Conference on Machine Learning},
  series    = {Proceedings of Machine Learning Research},
  volume    = {235},
  pages     = {34760--34789},
  publisher = {PMLR},
  year      = {2024}
}

@inproceedings{maran2024local,
  title     = {Local Linearity: The Key for No-Regret Reinforcement Learning in Continuous {MDP}s},
  author    = {Maran, Davide and Metelli, Alberto Maria and Papini, Matteo and Restelli, Marcello},
  booktitle = {Advances in Neural Information Processing Systems},
  volume    = {37},
   pages = {75986--76029},
  year      = {2024},
  doi       = {10.52202/079017-2419}
}

@inproceedings{qu2020average,
  title     = {Scalable Multi-Agent Reinforcement Learning for Networked Systems with Average Reward},
  author    = {Qu, Guannan and Lin, Yiheng and Wierman, Adam and Li, Na},
  booktitle = {Advances in Neural Information Processing Systems},
  pages = {2074--2086},
  volume    = {33},
  year      = {2020}
}

@inproceedings{ren2022spectral,
  title     = {Spectral Decomposition Representation for Reinforcement Learning},
  author    = {Ren, Tongzheng and Zhang, Tianjun and Lee, Lisa and Gonzalez, Joseph E. and Schuurmans, Dale and Dai, Bo},
  booktitle = {International Conference on Learning Representations},
  year      = {2023}
}

@article{ren2023stochastic,
  title   = {Stochastic Nonlinear Control Via Finite-Dimensional Spectral Dynamics Embedding},
  author  = {Ren, Zhaolin and Ren, Tongzheng and Ma, Haitong and Li, Na and Dai, Bo},
  journal = {IEEE Transactions on Automatic Control},
  volume  = {70},
  number  = {12},
  pages   = {7907--7922},
  year    = {2025},
  doi     = {10.1109/TAC.2025.3580296}
}

@inproceedings{rahimi2007random,
  title     = {Random Features for Large-Scale Kernel Machines},
  author    = {Rahimi, Ali and Recht, Benjamin},
  booktitle = {Advances in Neural Information Processing Systems},
  volume    = {20},
  pages     = {1177--1184},
  year      = {2007}
}

@inproceedings{sutton1999policy,
  title     = {Policy Gradient Methods for Reinforcement Learning with Function Approximation},
  author    = {Sutton, Richard S. and McAllester, David A. and Singh, Satinder P. and Mansour, Yishay},
  booktitle = {Advances in Neural Information Processing Systems},
  volume    = {12},
  pages     = {1057--1063},
  year      = {1999}
}

@book{sutton2018reinforcement,
  title={Reinforcement Learning: An Introduction},
  author={Sutton, Richard S. and Barto, Andrew G.},
  year={2018},
  edition={2},
  publisher={MIT Press}
}

@article{braverman2020taylor,
  title   = {On the Taylor Expansion of Value Functions},
  author  = {Braverman, Anton and Gurvich, Itai and Huang, Junfei},
  journal = {Operations Research},
  volume  = {68},
  number  = {2},
  pages   = {631--654},
  year    = {2020},
  doi     = {10.1287/opre.2019.1903}
}

@inproceedings{xu2020kernel,
  title     = {Kernel Taylor-Based Value Function Approximation for
               Continuous-State Markov Decision Processes},
  author    = {Xu, Junhong and Yin, Kai and Liu, Lantao},
  booktitle = {Robotics: Science and Systems},
  year      = {2020},
  doi       = {10.15607/RSS.2020.XVI.050}
}

@inproceedings{liang2025causality,
  title     = {Causality Meets Locality: Provably Generalizable and Scalable
               Policy Learning for Networked Systems},
  author    = {Liang, Hao and Shi, Shuqing and Zhang, Yudi and Huang, Biwei
               and Du, Yali},
  booktitle = {Advances in Neural Information Processing Systems},
  volume    = {38},
  year      = {2025},
  doi       = {10.52202/085713-5311}
}

@article{kalman1960control,
  title   = {Contributions to the Theory of Optimal Control},
  author  = {Kalman, Rudolf E.},
  journal = {Boletin de la Sociedad Matematica Mexicana},
  volume  = {5},
  pages   = {102--119},
  year    = {1960}
}

@book{costa2005markov,
  title     = {Discrete-Time Markov Jump Linear Systems},
  author    = {Costa, Oswaldo Luiz Valle and Marques, Ricardo Paulino and Fragoso, Marcelo Dutra},
  publisher = {Springer},
  series    = {Probability and Its Applications},
  year      = {2005},
  doi       = {10.1007/b138575}
}

@incollection{kuramoto1975,
  title     = {Self-Entrainment of a Population of Coupled Non-Linear Oscillators},
  author    = {Kuramoto, Yoshiki},
  booktitle = {International Symposium on Mathematical Problems in Theoretical Physics},
  series    = {Lecture Notes in Physics},
  volume    = {39},
  pages     = {420--422},
  publisher = {Springer},
  year      = {1975},
  doi       = {10.1007/BFb0013365}
}

@article{dorfler2014synchronization,
  title   = {Synchronization in Complex Networks of Phase Oscillators: A Survey},
  author  = {D{\"o}rfler, Florian and Bullo, Francesco},
  journal = {Automatica},
  volume  = {50},
  number  = {6},
  pages   = {1539--1564},
  year    = {2014},
  doi     = {10.1016/j.automatica.2014.04.012}
}

@article{dorfler2012power,
  title   = {Synchronization and Transient Stability in Power Networks and Non-Uniform Kuramoto Oscillators},
  author  = {D{\"o}rfler, Florian and Bullo, Francesco},
  journal = {SIAM Journal on Control and Optimization},
  volume  = {50},
  number  = {3},
  pages   = {1616--1642},
  year    = {2012},
  doi  = {10.1137/110851584}
}

@article{wiesenfeld1998josephson,
  title   = {Frequency Locking in Josephson Arrays: Connection with the Kuramoto Model},
  author  = {Wiesenfeld, Kurt and Colet, Pere and Strogatz, Steven H.},
  journal = {Physical Review E},
  volume  = {57},
  number  = {2},
  pages   = {1563--1569},
  year    = {1998},
  doi     = {10.1103/PhysRevE.57.1563}
}

@article{lu2016circadian,
  title   = {Resynchronization of Circadian Oscillators and the East-West Asymmetry of Jet-Lag},
  author  = {Lu, Zhixin and Klein-Carde{\~n}a, Kevin and Lee, Steven and Antonsen, Thomas M. and Girvan, Michelle and Ott, Edward},
  journal = {Chaos},
  volume  = {26},
  number  = {9},
  pages   = {094811},
  year    = {2016},
  doi     = {10.1063/1.4954275}
}

@article{yang2012lqr,
  title   = {Analytic {LQR} Design for Spacecraft Control System Based on Quaternion Model},
  author  = {Yang, Yaguang},
  journal = {Journal of Aerospace Engineering},
  volume  = {25},
  number  = {3},
  pages   = {448--453},
  year    = {2012},
  doi     = {10.1061/(ASCE)AS.1943-5525.0000142}
}

@article{hrovat1997suspension,
  title   = {Survey of Advanced Suspension Developments and Related Optimal Control Applications},
  author  = {Hrovat, Davorin},
  journal = {Automatica},
  volume  = {33},
  number  = {10},
  pages   = {1781--1817},
  year    = {1997},
  doi     = {10.1016/S0005-1098(97)00101-5}
}

@article{zhou2001manipulator,
  title   = {Optimal Construction and Control of Flexible Manipulators: A Case Study Based on {LQR} Output Feedback},
  author  = {Zhou, Pixuan and Wang, Fei-Yue and Chen, Weinong and Lever, Paul},
  journal = {Mechatronics},
  volume  = {11},
  number  = {1},
  pages   = {59--77},
  year    = {2001},
  doi     = {10.1016/S0957-4158(00)00004-0}
}

@article{capron2015process,
  title   = {An Extended Linear Quadratic Regulator with Zone Control and Input Targets},
  author  = {Capron, Bruno Didier Olivier and Odloak, Darci},
  journal = {Journal of Process Control},
  volume  = {29},
  pages   = {33--44},
  year    = {2015},
  doi     = {10.1016/j.jprocont.2015.03.005}
}

@article{carvalho2023power,
  title   = {Linear Quadratic Regulator Design via Metaheuristics Applied to the Damping of Low-Frequency Oscillations in Power Systems},
  author  = {Carvalho, Luis and Neto, Jozias R. L. and Rezende, Jefferson C. and Costa, Marcus V. S. and Fortes, Elenilson V. and Macedo, Leonardo H.},
  journal = {ISA Transactions},
  volume  = {134},
  pages   = {322--335},
  year    = {2023},
  doi     = {10.1016/j.isatra.2022.08.024}
}

@inproceedings{gray2003flight,
  title     = {Closed-Loop Performance Measures for Flight Controllers Subject to Neutron-Induced Upsets},
  author    = {Gray, W. Steven and Zhang, Hong and Gonz{\'a}lez, Oscar R.},
  booktitle = {Proceedings of the 42nd IEEE Conference on Decision and Control},
  volume    = {3},
  pages     = {2465--2470},
  year      = {2003},
  doi       = {10.1109/CDC.2003.1272990}
}

@article{soliman2015power,
  title   = {Robust Stabilisation of Power Systems with Random Abrupt Changes},
  author  = {Soliman, Hisham and Shafiq, Muhammad},
  journal = {IET Generation, Transmission \& Distribution},
  volume  = {9},
  number  = {15},
  pages   = {2159--2166},
  year    = {2015},
  doi     = {10.1049/iet-gtd.2014.1111}
}

@article{yin2017suspension,
  title   = {State Estimation via Markov Switching-Channel Network and Application to Suspension Systems},
  author  = {Yin, Xunyuan and Zhang, Lixian and Ning, Zepeng and Tian, Dapeng and Alsaedi, Ahmed and Ahmad, Bashir},
  journal = {IET Control Theory \& Applications},
  volume  = {11},
  number  = {3},
  pages   = {411--419},
  year    = {2017},
  doi     = {10.1049/iet-cta.2016.1108}
}

@inproceedings{xie2008packet,
  title     = {Stabilizing Sampled-Data Linear Systems with Markovian Packet Losses and Random Sampling},
  author    = {Xie, Li and Xie, Lihua},
  booktitle = {Proceedings of the 17th IFAC World Congress},
  volume    = {41},
  number    = {2},
  pages     = {599--604},
  year      = {2008},
  doi       = {10.3182/20080706-5-KR-1001.00101}
}

@inproceedings{fazel2018global,
  title     = {Global Convergence of Policy Gradient Methods for the Linear Quadratic Regulator},
  author    = {Fazel, Maryam and Ge, Rong and Kakade, Sham M. and Mesbahi, Mehran},
  booktitle = {Proceedings of the 35th International Conference on Machine Learning},
  pages     = {1466--1475},
  volume    = {80},
  year      = {2018},
  publisher = {PMLR}
}

@article{ogura2018hidden,
  title   = {State-feedback Control of Markov Jump Linear Systems with Hidden-Markov Mode Observation},
  author  = {Ogura, Masaki and Cetinkaya, Ahmet and Hayakawa, Tomohisa and Preciado, Victor M.},
  journal = {Automatica},
  volume  = {89},
  pages   = {65--72},
  year    = {2018},
  doi     = {10.1016/j.automatica.2017.11.022}
}

@article{shin2022exponential,
  title   = {Exponential Decay of Sensitivity in Graph-Structured Nonlinear Programs},
  author  = {Shin, Sungho and Anitescu, Mihai and Zavala, Victor M.},
  journal = {SIAM Journal on Optimization},
  volume  = {32},
  number  = {2},
  pages   = {1156--1183},
  year    = {2022},
  doi     = {10.1137/21M1391079}
}

@article{na2020exponential,
  title   = {Exponential Decay in the Sensitivity Analysis of Nonlinear Dynamic Programming},
  author  = {Na, Sen and Anitescu, Mihai},
  journal = {SIAM Journal on Optimization},
  volume  = {30},
  number  = {2},
  pages   = {1527--1554},
  year    = {2020},
  doi     = {10.1137/19M1265065}
}

@article{na2022schwarz,
  title   = {On the Convergence of Overlapping Schwarz Decomposition for Nonlinear Optimal Control},
  author  = {Na, Sen and Shin, Sungho and Anitescu, Mihai and Zavala, Victor M.},
  journal = {IEEE Transactions on Automatic Control},
  volume  = {67},
  number  = {11},
  pages   = {5996--6011},
  year    = {2022},
  doi     = {10.1109/TAC.2022.3194087}
}

@inproceedings{konda2000actor,
  title     = {Actor-Critic Algorithms},
  author    = {Konda, Vijay R. and Tsitsiklis, John N.},
  booktitle = {Advances in Neural Information Processing Systems},
  volume    = {12},
  pages     = {1008--1014},
  year      = {1999}
}

@article{sutton1988learning,
  title={Learning to Predict by the Methods of Temporal Differences},
  author={Sutton, Richard S.},
  journal={Machine Learning},
  volume={3},
  number={1},
  pages={9--44},
  year={1988},
  doi={10.1007/BF00115009}
}

@article{gu2022meanfield,
  title   = {Mean-Field Multiagent Reinforcement Learning: A Decentralized Network Approach},
  author  = {Gu, Haotian and Guo, Xin and Wei, Xiaoli and Xu, Renyuan},
  journal = {Mathematics of Operations Research},
  volume  = {50},
  number  = {1},
  pages   = {506--536},
  year    = {2025},
  doi     = {10.1287/moor.2022.0055}
}

@book{fan1996local,
  title     = {Local Polynomial Modelling and Its Applications},
  author    = {Fan, Jianqing and Gijbels, Irene},
  year      = {1996},
  publisher = {Chapman \& Hall},
  series    = {Monographs on Statistics and Applied Probability},
  volume    = {66}
}

@article{atkeson1997locally,
  title   = {Locally Weighted Learning},
  author  = {Atkeson, Christopher G. and Moore, Andrew W. and Schaal, Stefan},
  journal = {Artificial Intelligence Review},
  volume  = {11},
  number  = {1--5},
  pages   = {11--73},
  year    = {1997},
  doi     = {10.1023/A:1006559212014}
}

@article{byrd1995limited,
  title={A Limited Memory Algorithm for Bound Constrained Optimization},
  author={Byrd, Richard H. and Lu, Peihuang and Nocedal, Jorge and Zhu, Ciyou},
  journal={SIAM Journal on Scientific Computing},
  volume={16},
  number={5},
  pages={1190--1208},
  year={1995},
  publisher={SIAM},
  doi= {10.1137/0916069}
}

@article{cardot2012kmedians,
  title   = {A Fast and Recursive Algorithm for Clustering Large Datasets with K-Medians},
  author  = {Cardot, Herv{\'e} and C{\'e}nac, Peggy and Monnez, Jean-Marie},
  journal = {Computational Statistics \& Data Analysis},
  volume  = {56},
  number  = {6},
  pages   = {1434--1449},
  year    = {2012},
  doi     = {10.1016/j.csda.2011.11.019}
}

@inproceedings{he2015delving,
  title     = {Delving Deep into Rectifiers: Surpassing Human-Level Performance on ImageNet Classification},
  author    = {He, Kaiming and Zhang, Xiangyu and Ren, Shaoqing and Sun, Jian},
  booktitle = {Proceedings of the IEEE International Conference on Computer Vision},
  pages     = {1026--1034},
  year      = {2015},
  doi       = {10.1109/ICCV.2015.123}
}

@inproceedings{kingma2015adam,
  title     = {Adam: A Method for Stochastic Optimization},
  author    = {Kingma, Diederik P. and Ba, Jimmy},
  booktitle = {International Conference on Learning Representations},
  year      = {2015}
}

@article{pinelis1994optimum,
  title   = {Optimum Bounds for the Distributions of Martingales in Banach Spaces},
  author  = {Pinelis, Iosif},
  journal = {The Annals of Probability},
  volume  = {22},
  number  = {4},
  pages   = {1679--1706},
  year    = {1994},
  doi     = {10.1214/aop/1176988477}
}

@inproceedings{jin2020provably,
  title     = {Provably Efficient Reinforcement Learning with Linear Function Approximation},
  author    = {Jin, Chi and Yang, Zhuoran and Wang, Zhaoran and Jordan, Michael I.},
  booktitle = {Proceedings of the 33rd Conference on Learning Theory},
  pages     = {2137--2143},
  year      = {2020},
  volume    = {125},
  series    = {Proceedings of Machine Learning Research}
}
\bibliographystyle{biblio}

\appendix
\numberwithin{assumption}{section}
\numberwithin{theorem}{section}
\numberwithin{proposition}{section}
\numberwithin{lemma}{section}
\numberwithin{corollary}{section}
\numberwithin{definition}{section}

\section{Appendix}
\label{app:overview}
\paragraph{Overview}
Appendix~\ref{app:experimental-details}, \emph{``Definition of the environments''}, describes the three benchmark environments and the role played by each.

Appendix~\ref{app:experimental-protocol}, \emph{``Training and Evaluation Protocols''}, gives the evaluation protocol, implementation details, and hyperparameters of Taylor Centroid and of the competing baselines.

Appendix~\ref{app:further-exp}, \emph{``Further Experimental Results''}, reports the additional
experiments and ablations.

Appendix~\ref{app:regularity},
\emph{``Understanding the Expected Reward Regularity Assumption''}, studies the
regularity condition used throughout the theoretical analysis. It gives its
interpretation, describes a rollout-based empirical stress test,
presents a weaker discounted-summability formulation, and provides concrete
sufficient conditions under which the assumption holds. It also positions the condition relative to the smooth-MDP hierarchy of \citet{maran2024local,maran2024smooth}.

Appendix~\ref{app:proofs},
\emph{``Proofs of the Main-Text Results''}, contains the complete proofs of the
results stated in the main text. It proceeds through finite information
propagation and the exponential derivative decay, the localized Taylor Centroid
approximation, the finite-sample LTC--LSTD guarantee, the localized
policy-gradient and one-step TD analysis, the end-to-end policy-optimization guarantee,
and the computational-complexity result.

Appendix~\ref{app:auxiliary-proofs},
\emph{``Proofs of the Additional Appendix Results''}, contains the proofs of
the auxiliary results used in Appendix~\ref{app:regularity},
\emph{``Understanding the Expected Reward Regularity Assumption''}. These
include the discounted-summability extension, the sufficient conditions based
on smooth stochastic and spectral transition models, and the comparison with
the smoothness assumptions of
\citet{maran2024local,maran2024smooth}.

Finally, Appendix~\ref{app:ntc},
\emph{``Neural Network Implementation Details''}, describes the implementation of
neural network methods, including NTC.

\section{Definition of the environments}
\label{app:experimental-details}

This appendix provides the complete description and motivation of the
experimental benchmarks used in Section~\ref{sec:experiments}. The three
environments are deliberately complementary. Networked LQR provides a
controlled setting in which the exact critic is quadratic and the assumptions
underlying the constructive spectral baseline are satisfied. It is also a canonical environment for the random Fourier feature setting studied by \citet{ren2024spectral}. Switched LQR introduces potentially unobserved operating regimes through $\sigma_t$ and therefore tests the ability of the representation to exploit regime structure without access to the transition model. Finally, the Kuramoto oscillator removes the linear--quadratic structure altogether and tests Taylor Centroid critics on nonlinear graph-coupled dynamics.

\paragraph{Remark.} As in \citet{ren2024spectral}, although some of the benchmarks are naturally defined on unbounded state spaces, over any finite experimental horizon their trajectories lie in a sufficiently large compact set with arbitrarily high probability. One can therefore view the experiments as evolving inside such a compact set; equivalently, the dynamics could be clipped at its boundary without affecting the reported trajectories except on an arbitrarily low-probability event.

\subsection{Environment 1: Networked linear--quadratic regulation}
\label{app:lqr-environment}

\paragraph{Environment.}
We first consider a networked linear--quadratic regulator (LQR)
\citep{kalman1960control}. Here, we consider a ring of $N$ scalar subsystems,
with dynamics
\begin{equation}
s_{t+1}
=
A s_t + B a_t + \varepsilon_t,
\qquad
\varepsilon_t\sim\mathcal N(0,I_N), \qquad \gamma=0.75
\label{eq:exp-lqr}
\end{equation}
over an episode length of $T=20$. The matrices $A$ and $B$ are cyclic and banded, so each subsystem interacts
only with its two immediate neighbors. With indices understood modulo $N$,
the nonzero entries are
\[
A_{ii}=0.8,
\qquad
A_{i,i\pm1}=0.1,
\qquad
B_{ii}=1,
\qquad
B_{i,i\pm1}=0.7.
\]
All remaining entries are zero. The local stage reward is
\begin{equation}
r_i(s_i,a_i)=-(3s_i^2+a_i^2).
\label{eq:exp-lqr-cost}
\end{equation}

Thus, agents are encouraged to minimize their state value. LQR is one of the classical models of optimal control and is routinely used
either directly or as a local approximation of more complicated physical
systems. Applications include spacecraft attitude control
\citep{yang2012lqr}, automotive suspension
\citep{hrovat1997suspension}, robotic manipulators
\citep{zhou2001manipulator}, industrial process control
\citep{capron2015process}, and power-system stabilization
\citep{carvalho2023power}. In many such applications, the LQR model results
from linearizing nonlinear dynamics around a nominal operating point. This environment was also used as a benchmark in~\cite{ren2024spectral}'s released code.

\paragraph{Role of LQR for the benchmark.}
This benchmark serves two purposes. First, under our policy class the
true value function is exactly quadratic. It therefore provides a controlled
test of the Taylor mechanism.

Second, Equation~\ref{eq:exp-lqr} is a known local dynamics map with
additive, state-independent Gaussian noise, precisely the constructive case
for which both the random Fourier features of
\citet{ren2024spectral} and CDCPG of \citet{wang2026continuous} are naturally defined. LQR therefore provides a comparison in which the spectral method is evaluated inside its intended model
class, while Taylor Centroid uses only sampled trajectories.

\subsection{Environment 2: Switched LQR with an unobserved operating mode}
\label{app:switched-lqr-environment}

Many engineered systems do not evolve around a single stationary operating
point. Their dynamics can change abruptly because of faults, topology
changes, communication conditions, workloads, or external operating modes.
Switched LQR systems with Markovian switching provide a classical model for such phenomena
\citep{costa2005markov}. They have been used, for example, in fault-tolerant
flight control \citep{gray2003flight}, power systems subject to abrupt faults
or topology changes \citep{soliman2015power}, automotive systems with
switching communication conditions \citep{yin2017suspension}, and networked
control under correlated packet losses \citep{xie2008packet}.

This setting is particularly relevant to the motivation behind
Taylor Centroid critics: a physical system may repeatedly move between a
small number of qualitatively different operating regimes even though the
underlying state space is continuous. This is the motivation behind the centroids around which our Taylor expansions are performed.

\paragraph{Environment.}
We consider the same sparse ring structure as in the LQR benchmark, but let a
persistent mode
\[
Z_t:=\sigma_t\in\{1,\ldots,M_{\mathrm{mode}}\},
\qquad M_{\mathrm{mode}}=5,
\]
modify the dynamics:
\begin{equation}
s_{t+1}
=
A^{(\sigma_t)}s_t
+
B^{(\sigma_t)}a_t
+
C^{(\sigma_t)}
+
\varepsilon_t,
\qquad
\varepsilon_t\sim\mathcal N(0,I_N).
\label{eq:exp-switched-lqr}
\end{equation}
over an episode length of $T=20$. Each of the five matrices induces graph-local interactions, so switching changes the local physical behavior without changing the interaction graph. 

The mode follows an autonomous Markov chain: at each step, with probability $h$, the mode is redrawn (possibly to its current value). When redrawing, $\sigma'$ is drawn with probability $(6-\sigma')/15$. Hence
\begin{equation}
\Pr(\sigma_{t+1}=\sigma'\mid\sigma_t=\sigma)
=
\begin{cases}
1-h+h\,\dfrac{6-\sigma}{15}, & \sigma'=\sigma,\\[2mm]
h\,\dfrac{6-\sigma'}{15}, & \sigma'\neq\sigma.
\end{cases}
\label{eq:exp-mode-transition}
\end{equation}

 We choose $h\approx0.139$ so that a length-$20$ trajectory contains two mode changes on average. The local rewards and discount factor are identical
to those of the LQR benchmark. Our main experiments keep $\sigma_t$ completely \emph{unobserved}: neither the actor nor the critic receives the current mode.

\paragraph{The constructive RF and CDCPG baselines are not directly applicable here.}
The distinction between observed and unobserved switching is important.
Conditional on a known mode $\sigma_t=\sigma$,
Equation~\ref{eq:exp-switched-lqr} is indeed linear--Gaussian. However, in our experiments, the
controller does not observe $\sigma_t$. After marginalizing the persistent
mode, there is no single observable transition of the form
\begin{equation}
s_{t+1}
=
f(s_t,a_t)+\varepsilon_t,
\qquad
\varepsilon_t\sim\mathcal N(0,\Sigma)
\quad\text{independently over time},
\label{eq:rf-required-form}
\end{equation}
with a known deterministic local map $f$ and fixed homoskedastic noise.

This matters directly for the constructive spectral methods of
\citet{ren2024spectral} and \citet{wang2026continuous}. Both approaches build
their local critic representations from explicit random features of the
transition kernel and require access to the local dynamics map under structured
additive noise. We therefore do not include the constructive RF baseline of
\citet{ren2024spectral} nor CDCPG of \citet{wang2026continuous} on this
benchmark.

\paragraph{Role of switched LQR for the benchmark.}
Switched LQR highlights the flexibility of the model-free approach by evaluating Taylor Centroid in a setting where the dynamics are unknown to the learner. It also tests whether the method can extract from rollout data the information relevant for critic estimation and control when important aspects of the environment, such as the operating regime, are not directly observed.

\subsection{Environment 3: Kuramoto oscillator network}
\label{app:kuramoto-environment}

Our final benchmark removes the linear--quadratic structure entirely. The
Kuramoto model~\citep{kuramoto1975,dorfler2014synchronization} is a canonical model of synchronization in networks of coupled
oscillators. Closely related models arise in electrical power networks \citep{dorfler2012power},
Josephson-junction arrays \citep{wiesenfeld1998josephson}, and biological
oscillator networks \citep{lu2016circadian}.

Following the implementation of the Kuramoto benchmark of \citet{ren2024spectral}, the state of
oscillator $i$ is represented by
\[
s_i=(s_{i,1},s_{i,2})
=(\cos\theta_i,\sin\theta_i)\in\mathbb R^2.
\]
The phase is recovered as
\[
\theta_i=\operatorname{atan2}(s_{i,2},s_{i,1}),
\]
where $\operatorname{atan2}(y,x)\in(-\pi,\pi]$ is the polar angle of the vector
$(x,y)$. Its pulsation is
\[
\dot\theta_i
=
\omega_i+a_i
-
\sum_{j\in\mathcal N_i}
P_{ij}\sin(\theta_i-\theta_j),
\]
where $\omega_i$ is the natural frequency, $a_i$ is the control input, and
$P_{ij}$ is the coupling strength. The state dynamics are then
\begin{equation}
\label{eq:exp-kuramoto}
\begin{aligned}
s_{i,1}^{+}
&=
\cos\!\left(\theta_i+dt\,\dot\theta_i\right)
+\sigma\xi_{i,1},\\
s_{i,2}^{+}
&=
\sin\!\left(\theta_i+dt\,\dot\theta_i\right)
+\sigma\xi_{i,2},
\qquad
(\xi_{i,1},\xi_{i,2})\sim\mathcal N(0,I_2).
\end{aligned}
\end{equation}
As in \citet{ren2024spectral}, we clip the two state coordinates to
$[-1,1]$ in the implementation. We additionally impose a negligible radial
safeguard before evaluating $\operatorname{atan2}$ to avoid its singularity at
the origin. These are purely numerical safeguards and are essentially never
active in practice, since each deterministic update lies on the unit circle
and the process noise is small. We use $dt=0.01$, $\gamma=0.99$, and episodes of length $T=500$. Following the released code of \citet{ren2024spectral}, the oscillators form a ring, the natural frequencies $\omega_i\sim\mathcal U[-0.5,0.5]$ and the symmetric couplings $P_{ij}\sim\mathcal U[0.2,1.2]$ are drawn once per network size and shared by all runs, $\omega^\star=0.2$, $\sigma=0.0025$, and actions take values in $[-1,1]$. Thus the Kuramoto dynamics are evolved in phase coordinates,
mapped back through $(\cos,\sin)$, and then perturbed by isotropic Gaussian
noise directly in the two state coordinates.

The control objective is to synchronize the oscillators at a prescribed
angular frequency $\omega^\star$. Since $\dot\theta_i$ is precisely the
instantaneous angular frequency of oscillator $i$, we use the smooth
frequency-mismatch reward
\[
r_i(s_{\mathcal N_i},a_i)
=
-\sqrt{(\dot\theta_i-\omega^\star)^2+\varepsilon^2},
\]
where $\varepsilon=10^{-6}$ smooths the absolute-value loss; Spectral-SAC and RF-SAC keep $\varepsilon=0$, as in \citet{ren2024spectral}'s code. Maximizing this reward
therefore drives $\dot\theta_i$ toward $\omega^\star$. This reward depends on the neighbors of agent $i$, a case covered by the remark of Section~\ref{sec:setting}.

\paragraph{Role of the Kuramoto oscillator for the benchmark.}
Unlike LQR, the transition dynamics contain nonlinear sinusoidal interactions
and the critic is non-quadratic. The benchmark
therefore tests whether a \emph{local} Taylor representation remains
useful when the value function is more complex than in LQR. It also provides a natural stress test of the regime mechanism. Because the true critic is non-polynomial, the number of clusters plays a significant role. 

\section{Training and Evaluation Protocols}
\label{app:experimental-protocol}

This appendix gives the training, implementation, and evaluation details for
the experiments of Section~\ref{sec:experiments}. Environment definitions and
their role in the benchmark are given in
Appendix~\ref{app:experimental-details}. Detailed neural architectures are
given separately in Appendix~\ref{app:ntc}.

\subsection{Evaluation protocols}

\paragraph{Experimental outline.}
We compare LTC and NTC with five reference baselines whenever applicable:
(i) the constructive random-feature (RF) algorithm of
\citet{ren2024spectral},
(ii) their model-free Spectral-SAC alternative,
(iii) CDCPG of \citet{wang2026continuous},
(iv) a local MLP critic with our one-step TD actor, and
(v) a GNN critic paired with the actor of \citet{ren2024spectral} on LQR and Kuramoto (SAC) and our actor on Switched LQR.
The first three baselines compare Taylor Centroid against methods designed
specifically for networked MARL. The MLP provides a generic function-approximation baseline while the GNN uses information from the graph. For the Kuramoto oscillator, we use the SAC variant of RF (RF-SAC) following the approach of~\cite{ren2024spectral}. Unless stated otherwise, we use $N=50$ agents on LQR and switched LQR and $N=40$ on Kuramoto.

\paragraph{Critic estimation.}
Figure~\ref{fig:critic-calibration} reports the normalized MSE of the estimated global value
$V(s)=N^{-1}\sum_i V_i(s)$,
\[
100\,
\frac{\mathbb E[(\widetilde V(S)-V(S))^2]}
{\operatorname{Var}(V(S))},
\]
evaluated on an independent rollout.
For LQR, the target is the exact $V^\pi$; for switched LQR, it is the exact
mode-conditioned value (i.e. the value function that can observe $\sigma_t$ and which we want to recover); and for
Kuramoto it is estimated by Monte Carlo.

For each environment and $(\kappa,\kappa_\pi)$, we first train a policy with
Algorithm~\ref{alg:tc-actor-critic} for $80$ epochs on LQR, $120$ on switched
LQR, and $20$ on Kuramoto, and then fit the critic with this policy. For each configuration, we train five critics with independent random seeds and evaluate each of them on a separate rollout.

\paragraph{Policy performance.}
\label{par:policy-perf}
For LQR and switched LQR, Figure~\ref{fig:policy-convergence} reports the relative optimality gap

$$
100\,\frac{|J-J^\star|}{|J^\star|},
$$

where \(J\) is the expected sum of discounted rewards of the deterministic policy. For LQR, \(J^\star\) is given by the discounted Riccati solution; for switched LQR, where \(\sigma_t\) is unobserved, it is obtained by directly maximizing $J$ over mode-blind affine controllers with circulant gains. Each $\kappa_\pi$-class reference is similarly obtained by multistart L-BFGS-B~\citep{byrd1995limited} on the exact $J$ and its exact gradient ($483$ starts per $\kappa_\pi$), and is a strict local maximum (vanishing gradient, negative-definite Hessian on the admissible gains). Structured static feedback is non-convex, so these values, and $J^\star$ on switched LQR, are only lower bounds on the corresponding optima.
For Kuramoto, where $J^\star$ is not available in closed form, we instead
report the undiscounted return of the deterministic policy, following
\citet{ren2024spectral}, with the myopic greedy controller (which chooses the actions so as to cancel the instantaneous frequency error of each oscillator) as reference.
Results are averaged over five independent seeds on LQR and switched LQR, and over three seeds on Kuramoto.

We evaluate the following configurations
\[
(\kappa,\kappa_\pi)
\in
\{(1,1),(2,1),(2,2),(3,2),(2,3)\},
\]
covering $\kappa<\kappa_\pi$, $\kappa=\kappa_\pi$, and
$\kappa>\kappa_\pi$.
We note that CDCPG requires $\kappa\ge\kappa_\pi+1$ for its policy-gradient estimator
\citep{wang2026continuous}. Hence, for CDCPG only, we use $\kappa'=\max\{\kappa,\kappa_\pi+1\}$, the smallest radius allowed by its analysis, while keeping $\kappa_\pi$ and
therefore the policy class unchanged across methods. For this figure, we use $N=50$ for LQR and switched LQR and $N=40$ for the Kuramoto oscillator.

\paragraph{Scalability.}
Runtime measurements use a single dedicated NVIDIA H200 GPU at
$(\kappa,\kappa_\pi)=(2,2)$ and exclude the first epoch.
Policy-scaling experiments use three seeds, up to $N=1000$ for LQR and
switched LQR. Performance is compared with $J^\star$ and with the $\kappa_\pi=2$ class reference described above.

\subsection{Common actor and training protocol}
\label{app:common-protocol}
On LQR, we use a Gaussian policy with linear mean \[ \mu_\theta(s)=Fs, \] and fixed covariance $\sigma_{\mathrm{pol}}^2 I$, where the $i$th row of $F$ depends only on $s_{\mathcal N_i^{\kappa_\pi}}$. On switched LQR, we instead use a Gaussian policy with affine mean \[ \mu_\theta(s)=Fs+f, \] with the same locality restriction on $F$ and fixed covariance $\sigma_{\mathrm{pol}}^2 I$. The switching mode $\sigma_t$ remains unobserved by both the actor and the critic. In both LQR environments, $\sigma_{\mathrm{pol}}=0.3$. On Kuramoto, each agent uses a Gaussian policy with local affine mean $\mu_{\theta_i}(o_i)$, and actions are sampled as \[ a_i = \operatorname{clip} \bigl( \mu_{\theta_i}(o_i) +\sigma_{\mathrm{pol}}\varepsilon_i, -1,1 \bigr), \qquad \varepsilon_i\sim\mathcal N(0,1), \] where $\sigma_{\mathrm{pol}}$ decreases log-linearly from $0.1$ to $0.012$ over the first $55\%$ of iterations, then stays constant. Actor updates use $\eta_m=\eta/\sqrt{m+1}$; the policy gradient is normalized to unit norm on LQR and Kuramoto and clipped at norm $1$ on switched LQR. The runs of Figure~\ref{fig:policy-convergence} last $200$ actor iterations for $\kappa_\pi=1$ and $800$ for $\kappa_\pi\ge2$ on LQR and switched LQR, and $2500$ on Kuramoto ($3500$ at $(3,2)$ and $(2,3)$). \\
Advantages are computed using the one-step TD estimator described in Section~\ref{sec:learning}. Critic and actor rollouts are sampled independently. Unless stated otherwise, each LQR and switched-LQR iteration uses $2500$ episodes in total, split between fitting the critic and updating the actor. LTC, NTC, MLP, and RF use $1250$ episodes for each. The GNN uses $1670/830$ critic/actor episodes for $\kappa\le2$ and $1750/750$ for $\kappa=3$. On Kuramoto, each iteration uses $300$ critic (raised when needed to reach $50$ critic transitions per critic parameter) and $2000$ actor episodes of length $500$; Spectral-SAC, RF-SAC, and the GNN keep the training schedule of the released code of \citet{ren2024spectral}.

\subsection{Method-specific details}

Architectures and training objectives of the neural critics are described in
Appendix~\ref{app:ntc}.

\paragraph{Taylor Centroid critics.}
LTC and NTC use all monomials of order $n\in\{1,2,3\}$ supported on
$\mathcal N_i^\kappa$, expanded around $K$-medians centroids. LTC is fitted by
LSTD with ridge $\lambda=10^{-6}$ on LQR and switched LQR, and with the ridge
of Table~\ref{tab:kur-hyper} on Kuramoto.NTC keeps the same basis, with its own number of centroids (Tables~\ref{tab:lqr-hyper}--\ref{tab:kur-hyper}), and replaces the affine coefficient map by a neural network trained
by semi-gradient TD. On LQR and switched LQR, its value estimates average $16$
action samples, and it uses $1500$ TD updates in the standalone
critic-estimation experiment.

\paragraph{Generic critics.}
The MLP critic is paired with the same model-free actor as Taylor Centroid.
It has two hidden layers of width $128$ on LQR and switched LQR and $64$ on
Kuramoto, and is trained with Adam~\citep{kingma2015adam} at learning rate $2\times10^{-3}$, using
$200$ TD updates per iteration on batches of $4096$ transitions ($50$ updates
on batches of $1024$ on Kuramoto) and a target network synchronized every $25$
updates; on LQR and switched LQR, gradients are clipped at norm $1$. The GNN
critic, whose receptive field matches $\kappa$ on LQR and switched LQR and is $\min\{\kappa,2\}$ on Kuramoto, is paired with the actor of the RF baseline on LQR and Kuramoto (and ours on switched LQR): RF's model-based policy gradient with
stepsize $\eta=0.2$ on LQR, and the SAC training loop of
RF-SAC on Kuramoto, where the GNN replaces the fixed random-feature map of its
critic. On LQR and switched LQR, it has $\max(2,\kappa)$ layers of width $128$ shared across agents and it is trained like the MLP ($200$ TD
updates, batches of up to $4096$ transitions, target synchronization every
$25$ updates, gradient clipping at $1$); on Kuramoto, it has width $64$ and
follows the RF-SAC training loop. 

\paragraph{Networked-MARL baselines.}
Following~\citep{ren2024spectral}, on LQR, RF uses $200$ cosine features at $\kappa=1$ and $800$ at $\kappa\ge2$,
with actor stepsize $\eta=0.2$; on Kuramoto, its SAC variant RF-SAC uses
$1024$ features. Spectral-SAC uses $512$ random features, hidden width $256$, batch size $128$, $128$ parallel environments, one update per environment step, and automatic entropy tuning with initial coefficient $0.1$. On LQR and switched LQR, its actor and critic learning rates are $2\times10^{-4}$, its actor is linear and unbounded with action scale $3$, and its replay capacity is $10^7$ for $\kappa_\pi=1$ and $4\times10^7$ for $\kappa_\pi\ge2$. On Kuramoto, its actor is a tanh-Gaussian MLP with two hidden layers of width $256$ (three for RF-SAC and the GNN), its replay capacity is $2\times10^7$, and the learning rates are $10^{-3}$ for $(\kappa,\kappa_\pi)\in\{(1,1),(2,1),(2,2)\}$ and, at $(3,2)$ and $(2,3)$, $5\times10^{-4}$ for Spectral-SAC and $3\times10^{-4}$ for RF-SAC and the GNN.
On LQR, CDCPG uses ridge $10^{-4}$, policy standard deviation $0.3$, $\theta_{\max}=1$ and actor stepsize $3\times10^{-4}$; its $12{,}500$ trajectories per iteration, of mean length $1/(1-\gamma)=4$, match our $2500\times20$ transitions. On Kuramoto, it uses policy standard deviation $0.05$, $\theta_{\max}=1.5$ and stepsize $10^{-5}$. Its
critic radius is $\max\{\kappa,\kappa_\pi+1\}$, the smallest value allowed by
its analysis, while $\kappa_\pi$, and hence the policy class, is unchanged.

\paragraph{Parameter count} 
Table~\ref{tab:param-counts} reports the number of learned parameters of every critic.

\begin{table}[h!]
\centering
\caption{Learned parameters of each critic in the policy experiments of Figure~\ref{fig:policy-convergence}, per agent unless marked as total ($N=50$ on LQR and switched LQR, $N=40$ on Kuramoto). LTC counts its coefficients, RF, RF-SAC and CDCPG their linear weights, and the other critics their network weights. The GNNs share their weights across agents and are reported as totals; twin critics count both copies.}
\label{tab:param-counts}
\small
\setlength{\tabcolsep}{5pt}
\begin{tabular}{lrrrrr}
\toprule
$(\kappa,\kappa_\pi)$ & $(1,1)$ & $(2,1)$ & $(2,2)$ & $(3,2)$ & $(2,3)$ \\
\midrule
\multicolumn{6}{l}{\emph{LQR and switched LQR}} \\
LTC, $n=1/2/3$ & 28/70/140 & 66/231/616 & 66/231/616 & 120/540/1{,}800 & 66/231/616 \\
NTC, $n=1$ & 4{,}932 & 5{,}318 & 5{,}318 & 5{,}704 & 5{,}318 \\
NTC, $n=2$ & 5{,}322 & 6{,}293 & 6{,}293 & 7{,}524 & 6{,}293 \\
NTC, $n=3$ & 5{,}972 & 8{,}568 & 8{,}568 & 12{,}984 & 8{,}568 \\
MLP & 17{,}153 & 17{,}409 & 17{,}409 & 17{,}665 & 17{,}409 \\
GNN (total) & 17{,}025 & 33{,}409 & 33{,}409 & 66{,}305 & 33{,}409 \\
RF (LQR) & 200 & 800 & 800 & 800 & 800 \\
CDCPG (LQR) & 120 & 120 & 190 & 190 & 276 \\
Spectral-SAC & 200{,}192 & 201{,}216 & 201{,}216 & 202{,}240 & 201{,}216 \\
\midrule
\multicolumn{6}{l}{\emph{Kuramoto}} \\
LTC, $n=1$ & 70 & 176 & 176 & 330 & 176 \\
LTC, $n=2$ & 280 & 1{,}056 & 1{,}056 & 2{,}640 & 1{,}056 \\
LTC, $n=3$ & 840 & 4{,}576 & 4{,}576 & 14{,}960 & 4{,}576 \\
NTC, $n=1$ & 5{,}319 & 5{,}963 & 5{,}963 & 6{,}607 & 5{,}963 \\
NTC, $n=2$ & 6{,}684 & 9{,}538 & 9{,}538 & 13{,}432 & 9{,}538 \\
NTC, $n=3$ & 10{,}324 & 23{,}838 & 23{,}838 & 49{,}832 & 23{,}838 \\
MLP & 4{,}673 & 4{,}929 & 4{,}929 & 5{,}185 & 4{,}929 \\
GNN (total) & 17{,}282 & 17{,}282 & 17{,}282 & 17{,}282 & 17{,}282 \\
RF-SAC & 4{,}096 & 4{,}096 & 4{,}096 & 4{,}096 & 4{,}096 \\
CDCPG & 257 & 257 & 257 & 257 & 257 \\
Spectral-SAC & 200{,}960 & 202{,}496 & 202{,}496 & 204{,}032 & 202{,}496 \\
\bottomrule
\end{tabular}
\end{table}

\subsection{Hyperparameter selection}

Tables~\ref{tab:lqr-hyper}--\ref{tab:kur-hyper} report the selected
hyperparameters. On both LQR settings, the GNN keeps the fixed stepsize $\eta=0.2$, while we follow the configuration designs of \citet{ren2024spectral} for RF, RF-SAC and Spectral-SAC.

\begin{table}[h]
\centering
\caption{Selected $(\eta,K)$ on LQR.}
\label{tab:lqr-hyper}
\small
\setlength{\tabcolsep}{5pt}
\begin{tabular}{lcccc}
\toprule
$(\kappa,\kappa_\pi)$ & $(1,1)$ & $(2,1)$ & $(2,2)$, $(3,2)$ & $(2,3)$ \\
\midrule
LTC $n=1$ & $(0.50,40)$  & $(0.35,40)$  & $(0.50,100)$  & $(0.50,100)$ \\
LTC $n=2$ & $(0.35,25)$  & $(0.35,25)$  & $(0.50,40)$   & $(0.50,40)$ \\
LTC $n=3$ & $(0.35,25)$  & $(0.35,40)$  & $(0.50,40)$   & $(0.50,40)$ \\
NTC $n=1$ & $(0.50,400)$ & $(0.50,400)$ & $(0.50,2000)$ & $(0.50,2000)$ \\
NTC $n=2$ & $(0.35,40)$  & $(0.35,25)$  & $(0.50,40)$   & $(0.50,40)$ \\
NTC $n=3$ & $(0.35,40)$  & $(0.35,25)$  & $(0.50,25)$   & $(0.50,25)$ \\
MLP $\eta$ & $0.35$ & $0.35$ & $0.50$ & $0.50$ \\
\bottomrule
\end{tabular}
\end{table}

\begin{table}[h]
\centering
\caption{Selected $(\eta,K)$ on switched LQR.}
\label{tab:mjls-hyper}
\small
\setlength{\tabcolsep}{5pt}
\begin{tabular}{lcccc}
\toprule
$(\kappa,\kappa_\pi)$ & $(1,1)$ & $(2,1)$ & $(2,2)$, $(3,2)$ & $(2,3)$ \\
\midrule
LTC $n=1$ & $(0.35,40)$ & $(0.35,40)$ & $(0.50,40)$ & $(0.50,40)$ \\
LTC $n=2$ & $(0.50,25)$ & $(0.50,25)$ & $(0.50,25)$ & $(0.50,25)$ \\
LTC $n=3$ & $(0.50,25)$ & $(0.50,25)$ & $(0.50,25)$ & $(0.50,25)$ \\
NTC $n=1$ & $(0.35,40)$ & $(0.35,40)$ & $(0.50,40)$ & $(0.50,40)$ \\
NTC $n=2$ & $(0.50,40)$ & $(0.50,40)$ & $(0.50,25)$ & $(0.50,40)$ \\
NTC $n=3$ & $(0.50,40)$ & $(0.50,40)$ & $(0.50,40)$ & $(0.50,40)$ \\
MLP $\eta$ & $0.35$ & $0.35$ & $0.50$ & $0.50$ \\
\bottomrule
\end{tabular}
\end{table}

\begin{table}[h]
\centering
\caption{Selected hyperparameters on Kuramoto, at the Taylor order $n$ shown
in each panel of Figure~\ref{fig:policy-convergence}; $\lambda$ is the LSTD
ridge of LTC.}
\label{tab:kur-hyper}
\small
\setlength{\tabcolsep}{5pt}
\begin{tabular}{lccccc}
\toprule
$(\kappa,\kappa_\pi)$ & $(1,1)$ & $(2,1)$ & $(2,2)$ & $(3,2)$ & $(2,3)$ \\
\midrule
LTC $n$ & $1$ & $2$ & $2$ & $2$ & $2$ \\
LTC $(\eta,K)$ & $(0.25,16)$ & $(0.25,16)$ & $(0.50,4)$ & $(0.50,16)$ & $(0.50,16)$ \\
LTC $\lambda$ & $10^{-3}$ & $10^{-1}$ & $10^{-3}$ & $10^{-1}$ & $10^{-1}$ \\
NTC $n$ & $1$ & $1$ & $2$ & $2$ & $2$ \\
NTC $(\eta,K)$ & $(0.25,16)$ & $(0.25,16)$ & $(0.50,16)$ & $(0.50,16)$ & $(0.50,16)$ \\
MLP $\eta$ & $0.25$ & $0.25$ & $0.25$ & $0.50$ & $0.50$ \\
\bottomrule
\end{tabular}
\end{table}

\section{Further Experimental Results}
\label{app:further-exp}

This appendix provides additional experiments supporting the structural claims of the paper. We first verify empirically the exponential decay of critic derivatives predicted by Lemma~\ref{lem:graded-derivatives}. We then study the main design choices of the Taylor Centroid representation: the number and location of expansion points, the dependence of approximation error on distance to the assigned centroid, and the effect of the Taylor order. We next report representative trajectories on all three environments to complement the aggregate policy metrics. Finally, on hidden switched LQR, we examine whether centroids fitted from rollout data alone recover structure associated with the unobserved operating regimes, and how this structure evolves under the learned policy.

\subsection{Empirical exponential derivative decay}
\label{app:exp-graded-decay}

We empirically examine the central structural prediction of
Lemma~\ref{lem:graded-derivatives}: derivatives of an agent's action-value
function should become smaller when they involve state variables located
farther away in the interaction graph. For a derivative order $p$ and graph
reach $d$, define
\[
H_p(d)
:=
\max_i
\max_{\substack{|\beta|=p\\ d_i(\beta)=d}}
\sup_{z,s,a}
\left|
\partial_s^\beta Q_i^\pi(z,s,a)
\right|.
\]
Lemma~\ref{lem:graded-derivatives} gives
\[
H_p(d)\le L_Q\rho^d,
\qquad
\rho=\gamma^{1/(1+\kappa_\pi)}.
\]
Since the multiplicative constant $L_Q$ is unknown, the experiment is designed
to test the \emph{geometric dependence on graph reach}, rather than the
absolute magnitude of the bound. We therefore plot $H_p(d)/H_p(0)$ and fit an
empirical rate $\widehat\rho_p$ through
\[
\log H_p(d)
=
\alpha_p+d\log\widehat\rho_p.
\]
Normalizing by $H_p(0)$ only fixes the common value at $d=0$ and does not force
the measured curves to satisfy the theoretical rate.

For LQR and switched LQR, we evaluate the converged LTC($n=2$) affine-mean Gaussian policy at
each $\kappa_\pi\in\{0,1,2,3\}$. Under such a policy, the corresponding
action-value functions are quadratic (conditionally on $\sigma$ for switched LQR). In fact, we can compute the real critics, and therefore their derivatives. The first-order supremum is evaluated exactly over the
declared state--action box, second-order derivatives are constant in $(s,a)$,
and third- and higher-order derivatives vanish. For Kuramoto, we evaluate the converged LTC $(n=2)$ policy at $\kappa_\pi=1$; in this experiment, its critic uses $\gamma=0.75$, as on LQR, instead of the benchmark value $0.99$, hence $\rho=\sqrt{0.75}$. Since no closed-form
critic is available, we enumerate all distinct state-derivative multi-indices
up to order three and numerically maximize their magnitudes over states and
actions using multi-start projected ascent. The resulting Kuramoto statistics
are therefore numerical lower bounds on the corresponding suprema.

\begin{figure}[h!]
    \centering
    \includegraphics[width=\linewidth]{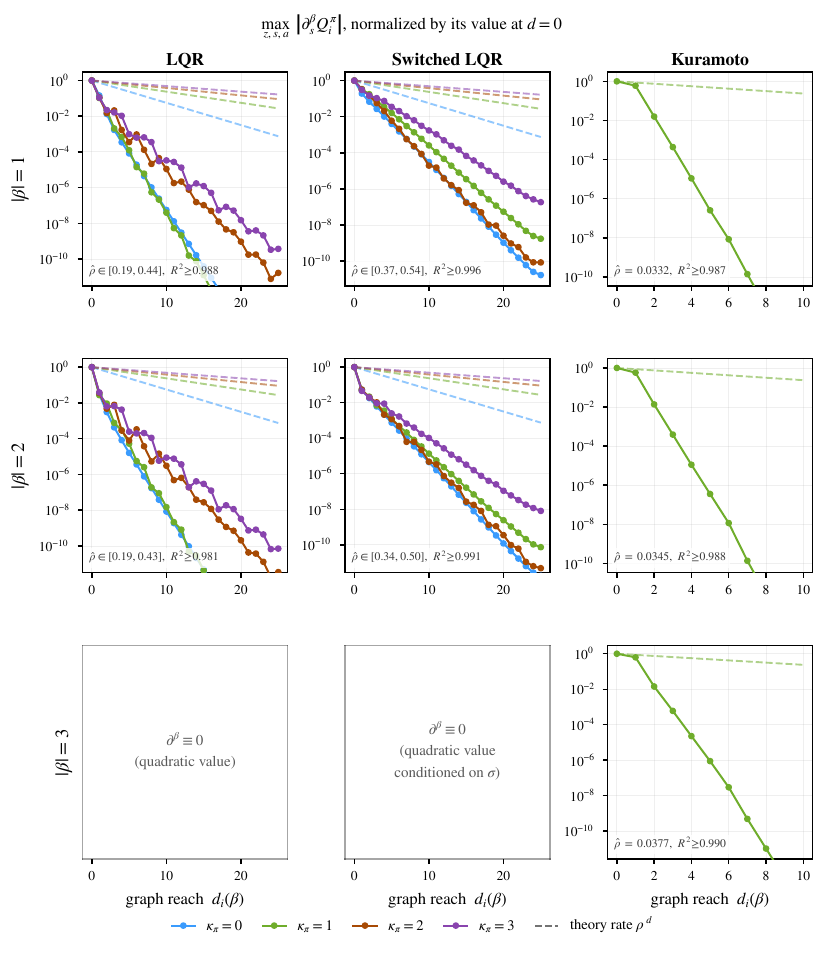}
    \caption{\textbf{Empirical exponential decay of action-value derivatives.}
    For each derivative order $p=|\beta|$, we plot $H_p(d)/H_p(0)$ against
    the graph reach $d_i(\beta)$. Solid curves show measured derivative
    magnitudes and dashed curves show the explicit theoretical rate
    $\rho^d$, with $\rho=\gamma^{1/(1+\kappa_\pi)}$; colors indicate the
    policy locality radius $\kappa_\pi$. LQR and switched LQR derivatives are
    computed exactly from their quadratic and quadratic conditioned on $\sigma$ value functions, so their
    third-order derivatives vanish. 
    For Kuramoto ($N=20$), every derivative up to order three is maximized over states and actions: all multi-indices are enumerated at $256$ visited states, and the $4$ best starts per graph reach are refined by projected signed-gradient ascent (step $0.05$, fixed budget, no tolerance), giving lower bounds on the suprema. Reported
    $\widehat\rho$ values are least-squares fits of $\log H_p(d)$ over reaches
    above the numerical precision threshold.}
    \label{fig:graded-decay}
\end{figure}

Figure~\ref{fig:graded-decay} exhibits the exponential decay structure predicted by
the theory across all three environments. At every nonzero derivative order,
sensitivities involving increasingly distant state variables decrease
approximately geometrically with graph reach. The phenomenon therefore extends
beyond ordinary spatial localization of $Q_i^\pi$: graph distance influences
the individual higher-order sensitivities of the critic, including the
derivatives that form its Taylor coefficients.

 Across the LQR experiments, the fitted rates lie in
$\widehat\rho\in[0.19,0.44]$, while switched LQR gives
$\widehat\rho\in[0.34,0.54]$.

We emphasize that the explicit rate
\[
\rho=\gamma^{1/(1+\kappa_\pi)}
\]
is not intended to be sharp. The measured curves lie far below the dashed ones: the theoretical rate is conservative, and the fitted rates $\widehat\rho$ are much smaller in practice.

The gap is largest on the nonlinear Kuramoto system, where
\[
\widehat\rho_1=0.0332,\qquad
\widehat\rho_2=0.0345,\qquad
\widehat\rho_3=0.0377,
\]
with $R^2\ge0.987$, whereas the explicit theorem gives
$\rho=\sqrt{0.75}\simeq0.866$ for $\kappa_\pi=1$. We observe additionally that the fitted rates are also
nearly unchanged across derivative orders.

\subsection{Centroids and local approximation quality}
\label{app:centroid-ablation}

The finite-sample LTC bound of Theorem~2 contains the estimation error term
\[
\mathcal E_i^{(\kappa)}
=
\bar\varepsilon_{\mathrm{lin}}
\sum_{p=0}^{n}\frac{\mathcal R_p}{p!},
\]
which exposes two competing effects of the centroid construction.
Increasing the number of centroids $K$ makes the Taylor expansion more local,
thereby reducing the centroid radii and thus $\mathcal R_p$. At the same time, LTC uses
a single shared affine coefficient map across all clusters, so increasing $K$
can make the collection of centroid-dependent Taylor coefficients harder to
represent and thus can increase $\bar\varepsilon_{\mathrm{lin}}$. Hence
$\mathcal E_i^{(\kappa)}$ need not decrease monotonically with $K$. 

Figure~\ref{fig:centroid-ablation}a directly illustrates this tradeoff.
For LTC, increasing $K$ initially improves the held-out critic fit, consistent
with smaller local Taylor regions, but performance eventually deteriorates as
 estimation error becomes dominant. 
The turnover occurs earlier at larger Taylor orders: richer local polynomials require fewer expansion
points.

In contrast, NTC does not exhibit the same
pronounced degradation at large $K$, consistent with its more expressive
neural coefficient map being able to accommodate the variation of the Taylor
coefficients across centroids.

Figure~\ref{fig:centroid-ablation}b isolates the complementary geometric
effect underlying the factors $\mathcal R_p$. On LQR, critic error increases with the
distance to the assigned centroid, as expected from a local Taylor
approximation, while increasing the Taylor order typically attenuates
this dependence. Together, the two panels empirically illustrate the two
components entering $\mathcal E_i^{(\kappa)}$: centroids control the size of
the local Taylor regions, whereas the coefficient parameterization controls
how accurately the corresponding local Taylor coefficients can be represented.

\begin{figure}[h!]
    \centering
    \includegraphics[width=1\linewidth]{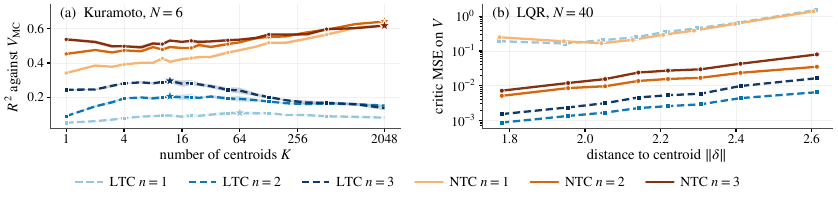}%
    \caption{\textbf{Centroid and local-approximation ablations.}
(a) held-out critic fit versus the number of centroids $K$ on Kuramoto ($N=6$, with encoding $s_i=\theta_i$), measured by $R^2=1-\sum_{s,i}\bigl(\tilde{V_i}(s)-V^{\mathrm{MC}}_i(s)\bigr)^2/\sum_{s,i}\bigl(V^{\mathrm{MC}}_i(s)-\bar V^{\mathrm{MC}}\bigr)^2$ over $4096$ held-out states, where $V^{\mathrm{MC}}_i(s)$ averages the discounted return of agent $i$ over $20$ Monte Carlo continuations of $500$ steps from $s$. (b) Mean squared error of the critic on the exact agent-mean value $\bar V=\frac1N\sum_iV_i$ on LQR ($N=40$), as a function of the distance of the evaluated state to its assigned centroid, for LTC and NTC with $n=1,2,3$.}
\label{fig:centroid-ablation}
\end{figure}

\subsection{Taylor Order Ablation}

In LQR and switched LQR, under our affine-mean Gaussian policies, the critic is a quadratic (more precisely, quadratic conditioned on $\sigma$ for switched LQR) function. As a result, we expect $n=2$ to yield far better results than $n=1$ when training our policy and $n=3$ to provide relatively little improvement. We made this observation in the case of critic fitting in Figure~\ref{fig:critic-calibration}. Figure~\ref{fig:taylor-ablation} shows the same pattern during policy training as soon as $\kappa_\pi\geq 2$: the gap to the optimum is much lower for $n=2$ than for $n=1$, and $n=3$ provides no improvement.

\begin{figure}[h!]
    \centering
    \includegraphics[width=1\linewidth]{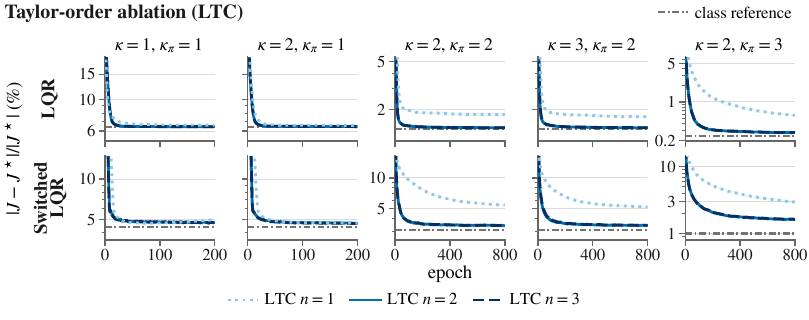}
    \caption{\textbf{Taylor order ablation.} Convergence of LTC for $n \in \{1,2,3\}$ for different configurations $(\kappa,\kappa_\pi)$ on LQR and switched LQR.}
    \label{fig:taylor-ablation}
\end{figure}

\subsection{Episode trajectories on the three environments}
\label{app:trajectories}

Figure~\ref{fig:trajectories} complements the aggregate performance metrics
with representative trajectories. The trajectories are run on one seed simply as a qualitative overview of the policy's behavior in practice. On LQR, all methods stabilize
the network rapidly around the target. On switched LQR, the trajectories exhibit the expected
abrupt deviations at mode changes, yet LTC and NTC repeatedly recover stable behavior
despite the switching mode being unobserved by the controller. Finally, on
Kuramoto, Taylor Centroid rapidly synchronizes the agents' frequencies and
maintains synchronization over the rollout. Hence, the policy obtained at convergence leads to the expected trajectories for these environments.

\begin{figure}[h!]
    \centering
    \includegraphics[width=\linewidth]{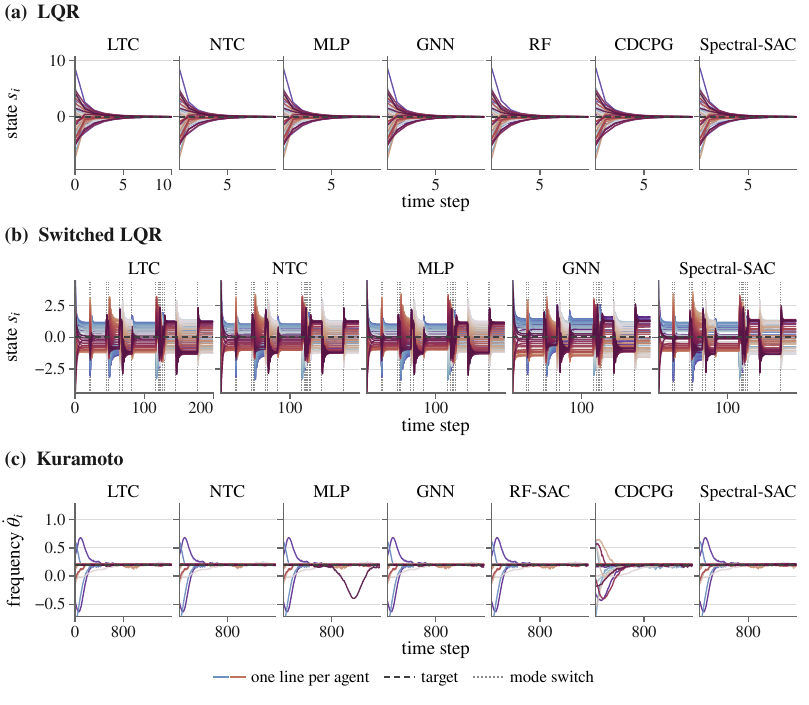}
    \caption{\textbf{Representative trajectories.} Trajectories of the learned policies on networked LQR, hidden-mode switched LQR, and the Kuramoto oscillator.}
    \label{fig:trajectories}
\end{figure}

\subsection{The latent modes appear as data-driven operating regimes.}
\label{app:switched-clusters}

A useful feature of switched LQR is that clusters can often be identified with interpretable regimes. In this environment, one can think of each $\sigma_t$ as an operating regime for the system. Here, the meaning of a ``regime'' is known to the experimenter while remaining hidden from the learning algorithm. This
allows us to check whether the data-driven centroids introduced in our method
recover meaningful structure rather than merely producing an arbitrary
partition of state space.
\begin{figure}[h!]
    \centering
    \includegraphics[width=\linewidth]{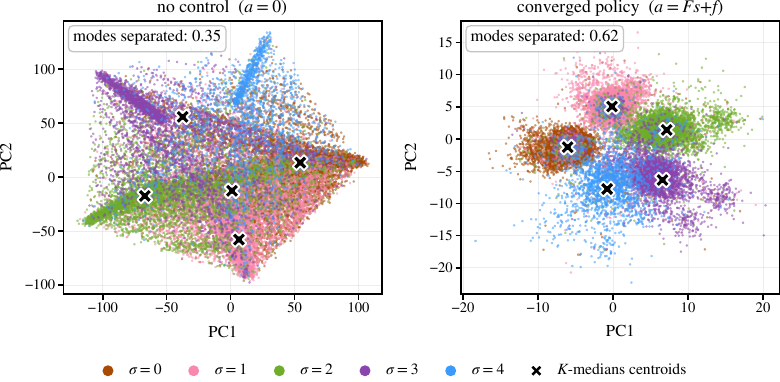}%
    \caption{\textbf{Operating regimes in hidden switched LQR.}
PCA projection of state trajectories, colored by latent mode $\sigma_t$ for visualization; crosses denote the $K=5$ Taylor Centroid medians. Mode separation, the fraction of the variance of the two principal-component scores $z_t$ explained by the latent mode, $\sum_m n_m\|\bar z_m-\bar z\|^2/\sum_t\|z_t-\bar z\|^2$ (with $\bar z_m$ the mean of the $n_m$ points with $\sigma_t=m$ and $\bar z$ the overall mean; $0$ when all modes share the same mean, $1$ when each mode collapses to a point), increases from $0.35$ without control (left) to $0.62$ after training with $\sigma_t$ unobserved (right), as trajectories form clearer recurrent regions and the centroids align with the latent regimes.
    }
    \label{fig:switched-clusters}
\end{figure}

In this regard, Figure~\ref{fig:switched-clusters} provides a direct empirical motivation for
the cluster-based representation. The $K$-medians centroids are fitted solely
from physical states, yet, after control, the regions of state space visited
under different latent modes become clearly separated and the learned
centroids lie inside these recurrent regions. Thus the centroids used by the critic can acquire an interpretable operational meaning even without access
to the variable generating the regimes. Figure~\ref{fig:switched-clusters} also reveals a significant state-distribution shift during training. Crucially, this motivates refitting the centroids throughout training.

\section{Understanding the Expected Reward Regularity Assumption}
\label{app:regularity}

Throughout this appendix, $q\ge2$ denotes the regularity order in
Assumption~\ref{asm:orbit}, while $1\le n\le q-1$ denotes the Taylor order
used in the critic representation. For a sufficiently differentiable
function $f(z,s,a)$, define
\begin{equation}
\|f\|_{C_s^q}
:=
\max_{|\beta|\le q}
\sup_{z,s,a}
\left|
\partial_s^\beta f(z,s,a)
\right|.
\label{eq:app-Cs-norm}
\end{equation}

For convenience, recall the expected rewards at time $t$:
\begin{equation}
g_{i,t}^\pi(z,s,a)
:=
\mathbb E_\pi
\left[
r_i(S_{i,t},A_{i,t})
\mid
Z_0=z,S_0=s,A_0=a
\right].
\label{eq:app-reg-forward}
\end{equation}
Assumption~\ref{asm:orbit} can then be written as
\begin{equation}
\sup_{t\ge0}
\|g_{i,t}^\pi\|_{C_s^q}
\le \bar L
\label{eq:app-orbit-recall}
\end{equation}
uniformly in the network size, agent, and policy in the considered class.

We first give an operational interpretation of
Equation~\ref{eq:app-orbit-recall} and describe how it can be stress-tested from
rollouts. We then show that its uniform-in-time form can be weakened to a
discounted-summability condition. Finally, we provide several sufficient
conditions and position the assumption relative to the smooth-MDP hierarchy
of \citet{maran2024local,maran2024smooth}.

\subsection{Interpretation}
\label{app:regularity-interpretation}

Assumption~\ref{asm:orbit} has a direct physical interpretation. Consider a
small perturbation of the current physical state. The assumption requires that,
as this perturbation propagates through the system dynamics, it does not create
arbitrarily sharp variations in the expected local reward of agent $i$.
Thus, nearby physical configurations may evolve differently, but the expected
reward received after a finite number of steps must remain a sufficiently
smooth function of the current state.

Letting
\[
g_{i,t}^\pi(z,s,a)
=
\mathbb E_\pi\!\left[
r_i(S_{i,t},A_{i,t})
\mid Z_0=z,S_0=s,A_0=a
\right]
\]
The assumption requires for the derivatives of $g_{i,t}^\pi$ in $s$ to remain bounded for all $t$.



\subsection{A rollout-based stress test}
\label{app:regularity-stress}

A practical advantage of Assumption~\ref{asm:orbit} is that it is stated in
terms of a physically interpretable quantity, the future reward predictions $g_{i,t}^\pi$ of Equation~\ref{eq:app-reg-forward}, that can be probed directly from rollouts. This is considerably easier to inspect empirically than smoothness conditions
imposed on an entire Bellman operator, such as Mild smoothness in
\citet{maran2024local}.


\begin{figure}[h!]
    \centering
    \includegraphics[width=1\linewidth]{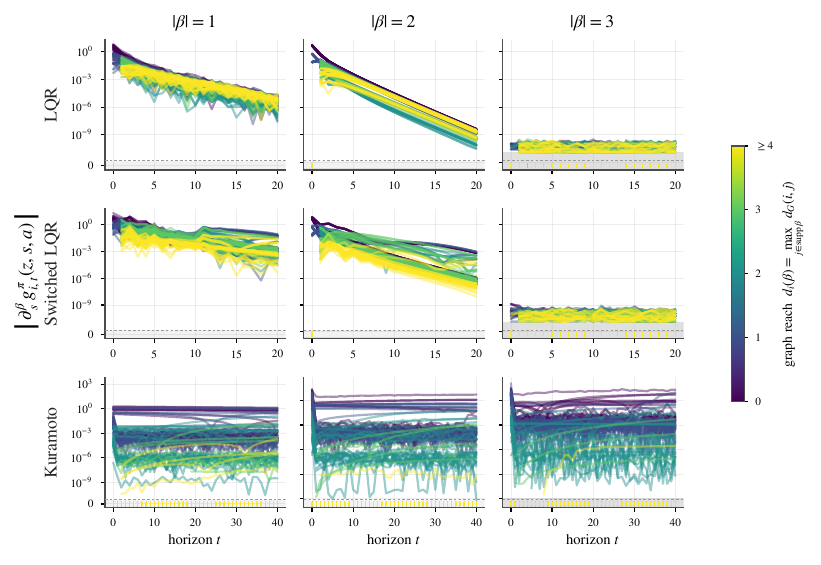}%
    \caption{
\textbf{Empirical stress test of expected reward regularity.}
$|\partial_s^\beta g_{i,t}^{\pi}(z,s,a)|$ versus horizon $t$ for $|\beta|\in\{1,2,3\}$ on LQR and switched LQR ($N=50$, learned policies with $\kappa_\pi=2$) and Kuramoto ($N=40$, greedy controller), at $16$ visited anchors $(z,s,a)$ and $5$ agents per environment, with $256$ rollouts per expectation. Colors: graph reach $d_i(\beta)=\max_{j\in\operatorname{supp}_G(\beta)} d_G(i,j)$; marks in the bottom strip: numerically zero derivatives; gray: finite-difference resolution range.
}
    \label{fig:assumption-stress-test}
\end{figure}

Figure~\ref{fig:assumption-stress-test} evaluates$
\left|
\partial_s^\beta g_{i,t}^{\pi}(z,s,a)
\right|
$ for derivative orders $|\beta|\in\{1,2,3\}$ on LQR, switched LQR, and
Kuramoto. Each curve follows one sampled initial value $(z,s,a)$, reference
agent $i$, and multi-index $\beta$ over the prediction horizon. Finite
differences perturb only the initial physical state $s$, while the initial
context $z$ and action $a$ are held fixed.

Derivatives are estimated by symmetric finite differences around states
visited by the policy. Each state coordinate is perturbed by $h$ times its empirical standard deviation, where $h$ is a dimensionless step size.
We repeat the computation for
$h\in\{0.01,0.02,0.05\}$ and display $h=0.02$ in the figure. The estimates are
generally stable across these choices. Relative differences across $h$ should
be interpreted cautiously when a derivative is itself extremely small, since
tiny absolute changes can then correspond to large relative variations; this
is most visible for the third-order Kuramoto derivatives, which are somewhat
less stable across $h$ than the other panels while remaining controlled over
the tested horizon. 

Curves are colored by the graph reach
\(
d_i(\beta)
:=
\max_{j\in\operatorname{supp}_G(\beta)} d_G(i,j),
\)
from derivatives involving agent $i$ itself to derivatives involving more
distant agents. Colored marks in the lower strip indicate numerically zero
derivatives, while the gray region marks values below the numerical resolution
of the finite-difference estimate and is not interpreted quantitatively.

The results are consistent with the regularity assumption. Across all three
environments, derivative magnitudes remain controlled as the horizon grows,
with no systematic amplification in $t$. On LQR and switched LQR,
first- and second-order sensitivities decrease over the prediction horizon,
while third-order derivatives lie at the numerical resolution floor, as
expected from the quadratic (and quadratic conditioned on $\sigma$) dependence of the expected rewards on the initial
state under the affine-mean policies considered here. Kuramoto, whose dynamics are
nonlinear, retains nonzero derivatives through third order while their
magnitudes remain controlled over the tested horizon. The plots also exhibit
the expected spatial structure: derivatives involving more distant agents are
typically smaller.

Because Assumption~\ref{asm:orbit} is uniform over states, policies, agents,
and prediction horizons, no finite experiment can verify it formally.
Figure~\ref{fig:assumption-stress-test} should instead be viewed as a stress
test: systematic growth of these expected reward sensitivities with the
prediction horizon on highly visited states would provide direct evidence
against the assumption. We observe no such behavior on any of the three
benchmarks; instead, expected reward sensitivities remain controlled over the
tested horizons, together with the distance-dependent attenuation expected
from graph-local dynamics.

\subsection{A weaker discounted-summability condition}
\label{app:summable-regularity}

For simplicity of exposition, Assumption~\ref{asm:orbit} is stated as a uniform-in-$t$ bound. This is stronger than mathematically necessary: it suffices that the smoothness
of the expected reward predictions grows slowly enough to be dominated by
discounting as shown in Assumption~\ref{asm:app-summable}.

\begin{assumption}[Discounted-summable expected reward regularity]
\label{asm:app-summable}
There exists a nonnegative sequence $\bar L(t)$, uniform in the network size,
agent, initial global state, and policy in the considered class, such that
\begin{equation}
\|g_{i,t}^\pi\|_{C_s^q}
\le
\bar L(t),
\qquad t\ge0,
\label{eq:app-summable-bound}
\end{equation}
and
\begin{equation}
\sum_{t=0}^{\infty}\gamma^t\bar L(t)<\infty.
\label{eq:app-summability}
\end{equation}
\end{assumption}

Assumption~\ref{asm:orbit} is recovered by taking
$\bar L(t)\equiv\bar L$. More generally, the condition allows finite-horizon
sensitivities to grow with time, provided that discounting dominates this
growth. We now show that we recover the main results of our main theory under this weaker assumption.

\begin{proposition}[Regularity under a summable envelope]
\label{prop:app-summable}
Under Assumption~\ref{asm:app-summable}, for every multi-index
$|\beta|\le q$,
\begin{equation}
\partial_s^\beta Q_i^\pi(z,s,a)
=
\sum_{t=0}^{\infty}
\gamma^t
\partial_s^\beta g_{i,t}^\pi(z,s,a),
\label{eq:app-summable-diff}
\end{equation}
and
\begin{equation}
\sup_{z,s,a}
\left|
\partial_s^\beta Q_i^\pi(z,s,a)
\right|
\le
\sum_{t=0}^{\infty}
\gamma^t\bar L(t).
\label{eq:app-summable-Q}
\end{equation}
\end{proposition}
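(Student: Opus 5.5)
The plan is to write $Q_i^\pi$ as the discounted series of expected rewards and differentiate term by term. The series exchange is then justified by uniform convergence of the differentiated series on the compact convex state space.

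First, by the tower property and boundedness of $r_i$ (so Fubini applies, since $\sum_t\gamma^t R_{\max}<\infty$),
\[
Q_i^\pi(z,s,a)=\sum_{t=0}^{\infty}\gamma^t g_{i,t}^\pi(z,s,a).
\]
This series converges uniformly in $(z,s,a)$ by the Weierstrass M-test with majorant $\gamma^t R_{\max}$.

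Second, for each multi-index $|\beta|\le q$, I will consider the formally differentiated series $\sum_t\gamma^t\partial_s^\beta g_{i,t}^\pi$. By Assumption~\ref{asm:app-summable}, each term is continuous in $s$ (each $g_{i,t}^\pi$ is $C^q$) and bounded by $\gamma^t\bar L(t)$. Since $\sum_t\gamma^t\bar L(t)<\infty$, the M-test again gives uniform convergence over $(z,s,a)$ for every such $\beta$ simultaneously.

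Third, I will apply the classical theorem on differentiating series of functions, inductively on $|\beta|$. The one-variable form says: if $\sum f_t$ converges at a point and $\sum f_t'$ converges uniformly, then the sum is differentiable with derivative $\sum f_t'$. I apply it along a single coordinate $s_{j,r}$, holding $z$, $a$ and the other coordinates fixed. This is possible because $\mathcal S$ is convex, so line segments in a coordinate direction stay in the domain; at boundary points, derivatives are one-sided, as for $C^q$ on a closed convex set. The inductive step runs as follows. Suppose $\partial_s^{\beta}Q_i^\pi=\sum_t\gamma^t\partial_s^\beta g_{i,t}^\pi$ has been established for some $|\beta|<q$. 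Applying the theorem to $f_t=\gamma^t\partial_s^\beta g_{i,t}^\pi$ in the direction $e_{j,r}$ gives the identity for $\beta+e_{j,r}$. The base case $\beta=0$ is the first step. Uniform limits of continuous functions are continuous, so all these derivatives are continuous. This yields Equation~\ref{eq:app-summable-diff} and shows that $Q_i^\pi$ is $C^q$ in $s$.

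Finally, the bound Equation~\ref{eq:app-summable-Q} follows from the triangle inequality applied to the identity:
\[
\bigl|\partial_s^\beta Q_i^\pi\bigr|\le\sum_t\gamma^t\bigl|\partial_s^\beta g_{i,t}^\pi\bigr|\le\sum_t\gamma^t\bar L(t),
\]
uniformly in $(z,s,a)$.

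The only delicate point is the justification of termwise differentiation at each order. Two details need care there: the uniform convergence must hold uniformly in the frozen variables, which the sup in the $C_s^q$ norm provides; and the convexity of $\mathcal S$ is what makes the coordinatewise mean-value argument valid up to the boundary. Everything else is direct.
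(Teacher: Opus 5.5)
Your proposal is correct and follows the paper's proof. Both arguments use the Weierstrass M-test: the majorant $\gamma^t R_{\max}$ gives uniform convergence of the value series, and the majorant $\gamma^t\bar L(t)$ gives uniform convergence of each differentiated series; termwise differentiation and the triangle inequality then yield both claims. Your coordinate-by-coordinate induction just makes explicit the ``successive'' termwise differentiation that the paper invokes without detail (see its proof of Proposition~\ref{prop:smoothness}).
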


The exponential decay extends to this setting. For $d\ge0$, define
\begin{equation}
\tau_d
:=
\left\lceil
\frac{d}{1+\kappa_\pi}
\right\rceil,
\qquad
\Lambda_d
:=
\sum_{t=\tau_d}^{\infty}
\gamma^t\bar L(t).
\label{eq:app-Lambda}
\end{equation}
The sequence $(\Lambda_d)_{d\ge0}$ is nonincreasing and tends to $0$ as $d\to +\infty$.

\begin{proposition}[Exponential decay under a summable envelope]
\label{prop:app-summable-graded}
Under Assumption~\ref{asm:app-summable}, for every nonzero multi-index
$|\beta|\le q$,
\begin{equation}
\left|
\partial_s^\beta Q_i^\pi(z,s,a)
\right|
\le
\Lambda_{d_i(\beta)}.
\label{eq:app-summable-graded}
\end{equation}
\end{proposition}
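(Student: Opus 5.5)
Combining Proposition~\ref{prop:app-summable} with a finite-propagation argument: if $t<\tau_{d_i(\beta)}$, then $g_{i,t}^\pi$ does not depend on the states of agents at distance $d_i(\beta)$ from $i$, so $\partial_s^\beta g_{i,t}^\pi\equiv 0$ for those $t$. Plugging this into Equation~\ref{eq:app-summable-diff} truncates the series to $t\ge\tau_{d_i(\beta)}$. Bounding each surviving term by $\gamma^t\bar L(t)$ via Equation~\ref{eq:app-summable-bound} then gives exactly $\Lambda_{d_i(\beta)}$.

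\textbf{Step 1 (finite propagation).} Fix $j$ with $d_G(i,j)=d$. We show by induction on $t$ that the conditional law of $(S_{i,t},A_{i,t})$ given $(Z_0,S_0,A_0)=(z,s,a)$ does not depend on $s_j$ whenever $t(1+\kappa_\pi)<d$. More precisely, the claim to carry through the induction is that for every $t\ge 1$ the joint law of $\bigl(Z_{1:t},\,S_{\mathcal N_i^{r},t}\bigr)$, with $r=d-1-(t-1)(1+\kappa_\pi)-1$ or a comparable radius, is unaffected by $s_j$.

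The mechanism is as follows. By the factorization of $P$, $S_{\ell,1}$ depends on $s_{\mathcal N_\ell}$ and $a_{\mathcal N_\ell}$ and on $z$. The action $a$ is held fixed, and $Z$ evolves autonomously, so $z$ carries no information about $s$. Each later step widens the dependence cone by $1$ through the transition and by $\kappa_\pi$ through the policy. So after time $t$, $S_{i,t}$ depends on initial states within radius $1+(t-1)(1+\kappa_\pi)$, and $A_{i,t}$ on those within radius $1+(t-1)(1+\kappa_\pi)+\kappa_\pi=t(1+\kappa_\pi)$.

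At $t=0$, $r_i(S_{i,0},a_i)=r_i(s_i,a_i)$ depends only on $s_i$. Hence for $d\ge1$, $g_{i,t}^\pi$ is independent of $s_j$ whenever $t(1+\kappa_\pi)<d$, i.e.\ for $t<\tau_d=\lceil d/(1+\kappa_\pi)\rceil$. The case $t(1+\kappa_\pi)=d$ is exactly $t=\tau_d$ when it is an integer; that term is kept, which is consistent with the ceiling.

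To make this rigorous, we write $g_{i,t}^\pi$ as an iterated integral against the product kernels. Variables outside the cone integrate to one (each $P_\ell$ and $\pi_{\theta_\ell}$ is a probability kernel), so $g_{i,t}^\pi(z,s,a)=\tilde g(z,s_{\mathcal N_i^{t(1+\kappa_\pi)}},a)$.

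\textbf{Step 2.} Take $\beta\ne0$ with $d_i(\beta)=d$ and choose $j\in\operatorname{supp}_G(\beta)$ attaining $d$. For $t<\tau_d$, $g_{i,t}^\pi$ is constant in $s_j$. Since $g_{i,t}^\pi$ is $C^q$, we may commute the $\partial_{s_j}$ factor to act first, which gives $\partial_s^\beta g_{i,t}^\pi=0$. If $d=0$ then $\tau_0=0$ and there is nothing to truncate. Then
\[
\bigl|\partial_s^\beta Q_i^\pi\bigr|\le\sum_{t\ge\tau_d}\gamma^t\bigl|\partial_s^\beta g_{i,t}^\pi\bigr|\le\sum_{t\ge\tau_d}\gamma^t\bar L(t)=\Lambda_d .
\]

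The main obstacle is the bookkeeping of the cone radius in Step 1, specifically whether the policy's action at time $t$ adds $\kappa_\pi$ exactly once per step. The rest is routine once the termwise differentiation in Proposition~\ref{prop:app-summable} is available.
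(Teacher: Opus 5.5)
Your proof is correct and follows the same route as the paper. Finite propagation forces $\partial_s^\beta g_{i,t}^\pi=0$ for $t<\tau_{d_i(\beta)}$; the termwise-differentiated series of Proposition~\ref{prop:app-summable}, bounded term by term with the envelope $\bar L(t)$, then yields exactly $\Lambda_{d_i(\beta)}$. The only difference is that the paper cites Lemma~\ref{lem:app-light-cone} for the cone radius $R_t=t(1+\kappa_\pi)$, whereas you re-derive it by integrating out the probability kernels outside the backward cone, which is a valid and slightly more explicit substitute.
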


Thus the geometric envelope used in the main text is only one particular
case. The decay rate is determined by the discounted sum of $\bar L(t)$.


For the uniform envelope $\bar L(t)\equiv\bar L$, this reduces to
\[
\Lambda_d
=
\frac{\bar L}{1-\gamma}\gamma^{\tau_d}
\le
\frac{\bar L}{1-\gamma}\rho^d
=
L_Q\rho^d,
\]
recovering Lemma~\ref{lem:graded-derivatives}.

The Taylor Centroid approximation can be extended explicitly to this
general envelope. For $0\le d\le\kappa$, define the same local centroid radii
as in Theorem~\ref{thm:trunc},
\[
W_d
:=
\sup_s
\left\|
s_{\mathcal N_i^d}
-\mu_{k(s),\mathcal N_i^d}
\right\|_1,
\]
and, for every integer $m\ge1$, define
\begin{equation}
\mathcal R_m^\Lambda
:=
\Lambda_\kappa W_\kappa^m
+
\sum_{d=0}^{\kappa-1}
\bigl(\Lambda_d-\Lambda_{d+1}\bigr)W_d^m.
\label{eq:app-general-remainder}
\end{equation}
Because $(\Lambda_d)_d$ is nonincreasing, all coefficients in
Equation~\ref{eq:app-general-remainder} are nonnegative.

\begin{proposition}[Scalable local Taylor representation under a summable envelope]
\label{prop:app-summable-tc}
Under Assumption~\ref{asm:app-summable}, let $1\le n\le q-1$ and let
$\widehat Q_i^\pi$ be the order-$n$ Taylor Centroid approximation defined in
Equation~\ref{eq:ideal-tc}. Then
\begin{equation}
\sup_{z,s,a}
\left|
Q_i^\pi(z,s,a)-\widehat Q_i^\pi(z,s,a)
\right|
\le
\frac{2R_{\max}}{1-\gamma}\rho^{\kappa+1}
+
\frac{\mathcal R_{n+1}^\Lambda}{(n+1)!},
\qquad
\rho:=\gamma^{1/(1+\kappa_\pi)}.
\label{eq:app-summable-tc}
\end{equation}
Moreover, $\widehat Q_i^\pi$ contains
\[
\binom{|\mathcal N_i^\kappa|d_s+n}{n}
\]
state monomials and therefore remains independent of $N$ for bounded graph
degree and fixed $(\kappa,n,d_s)$.
\end{proposition}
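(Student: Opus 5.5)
We follow the structure of the proof of Theorem~\ref{thm:trunc}, replacing the geometric envelope $L_Q\rho^d$ by $\Lambda_d$ from Proposition~\ref{prop:app-summable-graded}. We split the error at any $(z,s,a)$ through the marginalized critic $\bar Q_i^\pi=\mathscr D_{i,0}$:
\[
|Q_i^\pi-\widehat Q_i^\pi|\le |Q_i^\pi-\bar Q_i^\pi|+|\bar Q_i^\pi-\widehat Q_i^\pi|.
\]
The first term is a pure localization error and uses only bounded rewards and the factorized dynamics. By finite propagation, the law of $(S_{i,t},A_{i,t})$ does not depend on the states and actions outside $\mathcal N_i^\kappa$ for $t<\tau_{\kappa+1}$. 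Hence only rewards from time $\tau_{\kappa+1}$ onward differ, and the difference is at most $2R_{\max}\gamma^{\tau_{\kappa+1}}/(1-\gamma)\le \frac{2R_{\max}}{1-\gamma}\rho^{\kappa+1}$. This part is unchanged from the main-text argument and does not use Assumption~\ref{asm:app-summable}.

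For the second term, we use that $\widehat Q_i^\pi$ is exactly the order-$n$ Taylor polynomial of $\bar Q_i^\pi$ at $\mu:=\mu_{k(s)}$ in the local variables $x:=s_{\mathcal N_i^\kappa}$. Differentiation under the integral in Equation~\ref{eq:local-coeff} is justified by the uniform bounds of Proposition~\ref{prop:app-summable}. Since $\mathcal S$ is convex, the segment between the two points lies in the domain, and the integral (or Lagrange) remainder gives
\[
|\bar Q_i^\pi-\widehat Q_i^\pi|\le \sum_{|\beta|=n+1,\ \mathrm{supp}_G(\beta)\subseteq\mathcal N_i^\kappa}\frac{\sup|\partial_s^\beta \bar Q_i^\pi|}{\beta!}\,|(x-\mu)^\beta|.
\]
Marginalization preserves sup bounds, so Proposition~\ref{prop:app-summable-graded} gives $\sup|\partial^\beta\bar Q_i^\pi|\le\Lambda_{d_i(\beta)}$; this needs $n+1\le q$, which holds.

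It remains to bound the weighted sum of monomials by $\mathcal R_{n+1}^\Lambda/(n+1)!$. Group the multi-indices by graph reach: let $S_d$ be the sum of $|(x-\mu)^\beta|/\beta!$ over $|\beta|=n+1$ supported in $\mathcal N_i^d$. By the multinomial identity, $S_d=\|x_{\mathcal N_i^d}-\mu_{\mathcal N_i^d}\|_1^{n+1}/(n+1)!\le W_d^{n+1}/(n+1)!$. The indices of reach exactly $d$ contribute $S_d-S_{d-1}$, with $S_{-1}:=0$. Summation by parts then gives
\[
\sum_{d=0}^{\kappa}\Lambda_d(S_d-S_{d-1})=\Lambda_\kappa S_\kappa+\sum_{d=0}^{\kappa-1}(\Lambda_d-\Lambda_{d+1})S_d.
\]
Because $\Lambda$ is nonincreasing, every coefficient here is nonnegative, so we may replace each $S_d$ by its upper bound $W_d^{n+1}/(n+1)!$. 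This yields exactly $\mathcal R^\Lambda_{n+1}/(n+1)!$.

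The main obstacle is this Abel summation step, specifically keeping the coefficients nonnegative so the $W_d$ bounds can be inserted termwise; monotonicity of $\Lambda_d$ is exactly what makes it work. A secondary point needing care is justifying that $\partial^\beta$ commutes with both the series in $t$ and the marginalizing integral, which follows from the uniform summable envelope by dominated convergence.

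The monomial count is immediate: $\mathcal M^n_{\mathcal N_i^\kappa}$ consists of the multi-indices of total degree at most $n$ in $|\mathcal N_i^\kappa|d_s$ variables, so there are $\binom{|\mathcal N_i^\kappa|d_s+n}{n}$ of them. This number is bounded independently of $N$ since $|\mathcal N_i^\kappa|\le 1+\Delta+\dots+\Delta^\kappa$.
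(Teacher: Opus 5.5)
Your proposal is correct and matches the paper's proof essentially step by step. The localization term is the finite-propagation bound of Lemma~\ref{lem:app-localization-error}; your $\tau_{\kappa+1}$ equals the paper's $\lfloor \kappa/(1+\kappa_\pi)\rfloor+1$. The Taylor term is handled exactly as in the paper: the integral remainder of $\bar Q_i^\pi$ with the envelope $\Lambda_{d_i(\beta)}$ passed through the marginalizing integral, the multinomial identity for the reach-cumulative sums, and summation by parts, whose nonnegative coefficients $\Lambda_d-\Lambda_{d+1}$ allow the $W_d$ bounds to be inserted termwise.
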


The two terms in Equation~\ref{eq:app-summable-tc} have different origins.
The localization term is unchanged because spatial decay  uses
only bounded rewards, graph locality, and discounting. The Taylor term instead
inherits the derivative envelope $(\Lambda_d)_d$. The latter is obtained by
grouping Taylor monomials according to their graph reach and applying
summation by parts:
\begin{equation}
\sum_{d=0}^{\kappa}
\Lambda_d\bigl(A_{m,d}-A_{m,d-1}\bigr)
=
\Lambda_\kappa A_{m,\kappa}
+
\sum_{d=0}^{\kappa-1}
(\Lambda_d-\Lambda_{d+1})A_{m,d},
\label{eq:app-general-shell-sbp}
\end{equation}
where
\[
A_{m,d}
=
\frac{
\|h_{\mathcal N_i^d}\|_1^m
}{m!},
\qquad A_{m,-1}:=0.
\]
For the uniform envelope $\bar L(t)\equiv\bar L$, the geometric majorant
$\Lambda_d\le L_Q\rho^d$ gives
\[
\mathcal R_m^\Lambda
\le
L_Q
\left[
\rho^\kappa W_\kappa^m
+
(1-\rho)
\sum_{d=0}^{\kappa-1}\rho^dW_d^m
\right]
=
L_Q\mathcal R_m,
\]
so Proposition~\ref{prop:app-summable-tc} recovers the bound of
Theorem~\ref{thm:trunc}.

The remaining results adapt in the same way. Value-localization
terms retain their geometric factor $\rho$, while every use of the exponential decay $L_Q\rho^{d_i(\beta)}$ is replaced by the corresponding term
$\Lambda_{d_i(\beta)}$. The LSTD concentration argument itself is unchanged, and the actor
analysis only uses the fitted critic through its final error certificate
$\Delta_Q$.


\subsection{When does expected reward regularity hold?}
\label{app:regularity-sufficient}
 We now give sufficient conditions under which Assumption~\ref{asm:orbit} holds.

\subsubsection{Smooth stochastic dynamics.}

In many cases, the regularity we require is provided by the transition kernel itself. Indeed, a broad instance of problems satisfying Assumption~\ref{asm:orbit} consists of graph-local systems with smooth stochastic transitions. Consider
\begin{equation}
S_i'
=
f_i\!\left(
Z,S_{\mathcal N_i},A_{\mathcal N_i}
\right)
+
M_i\!\left(
Z,S_{\mathcal N_i},A_{\mathcal N_i}
\right)\eta_i,
\label{eq:app-smooth-stochastic}
\end{equation}
where $\eta_i$ is independent of the current state.

\begin{proposition}[Smooth stochastic dynamics are sufficient]
\label{prop:app-smooth-stochastic}
Suppose first that, uniformly in $i$ and $a_i$,
$r_i(\cdot,a_i)$ is $C^q$ with bounded derivatives through order $q$. Suppose further that, uniformly in $i$ and $z$, the maps
\[
f_i(z,\cdot,\cdot),\qquad
M_i(z,\cdot,\cdot),\qquad
M_i(z,\cdot,\cdot)^{-1}
\]
are $C^q$ in the physical-state variables with bounded derivatives through
order $q$, and that $M_i$ is uniformly invertible. Suppose finally that the
density $p_i$ of $\eta_i$ satisfies
\[
\max_{|\alpha|\le q}
\int_{\mathbb R^{d_s}}
(1+\|u\|)^q
\left|
\partial^\alpha p_i(u)
\right|\,du
<\infty
\]
uniformly in $i$. Then Assumption~\ref{asm:orbit} holds with a constant
independent of $N$ whenever the graph degree is bounded.
\end{proposition}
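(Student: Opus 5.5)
We need $\sup_t\|g_{i,t}^\pi\|_{C_s^q}\le\bar L$ uniformly in $N$, $i$ and $\pi$. The plan is to transfer smoothness from the initial state onto the noise density by a change of variables. We then use the Markov property to show that $g_{i,t}^\pi$ is a fixed, uniformly integrable transform of a bounded function of the post-transition state.

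\textbf{Step 1: $t=0$ and $t\ge1$ decomposition.} For $t=0$ we have $g_{i,0}^\pi=r_i(s_i,a_i)$, which is bounded in $C_s^q$ by hypothesis. For $t\ge1$ we condition on $(Z_1,S_1)$ and write
\[
g_{i,t}^\pi(z,s,a)=\mathbb E\bigl[\phi_{t}(Z_1,S_1)\mid Z_0=z,S_0=s,A_0=a\bigr],
\]
where $\phi_t(z',s')$ is the conditional expected reward at step $t$ given the state at time $1$ (averaged over $A_1\sim\pi$). Crucially, $|\phi_t|\le R_{\max}$ uniformly; no smoothness of $\phi_t$ is needed. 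Since $Z_1$ is drawn independently of $s$, only the law of $S_1$ depends on $s$. By the factorized kernel, $S_1$ has density
\[
\prod_j p_j^{\mathrm{tr}}(s_j'\mid s_{\mathcal N_j},a_{\mathcal N_j},z).
\]
Under \eqref{eq:app-smooth-stochastic}, each factor equals
\[
|\det M_j|^{-1}\,p_j\bigl(M_j^{-1}(s_j'-f_j)\bigr),
\]
with $M_j,f_j$ evaluated at $(z,s_{\mathcal N_j},a_{\mathcal N_j})$.

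\textbf{Step 2: differentiate under the integral.} For $|\beta|\le q$, we differentiate under the integral sign, justified by dominated convergence and the integrability hypothesis on $\partial^\alpha p_i$. This gives
\[
\partial_s^\beta g_{i,t}^\pi=\int\phi_t(z',s')\,\partial_s^\beta\Bigl[\prod_j p_j^{\mathrm{tr}}\Bigr]\,ds'\,P_g(dz'\mid z),
\]
so that $|\partial_s^\beta g_{i,t}^\pi|\le R_{\max}\bigl\|\partial_s^\beta\prod_j p_j^{\mathrm{tr}}\bigr\|_{L^1(ds')}$. Expand the derivative by Leibniz. A coordinate $s_k$ appears only in the factors $j$ with $k\in\mathcal N_j$, and there are at most $\Delta+1$ of these. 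The support of $\beta$ therefore touches at most $|\beta|(\Delta+1)\le q(\Delta+1)$ factors; all other factors integrate to $1$. Each touched factor is handled in one of two ways. Through Faà di Bruno, its derivative up to order $q$ is a sum of terms $(\partial^\alpha p_j)(u)$ times polynomials in $u=M_j^{-1}(s_j'-f_j)$ of degree at most $q$, times derivatives of $f_j$, $M_j^{-1}$ and $\det M_j^{-1}$, all of which are bounded. Changing variables $s_j'\mapsto u$ removes the Jacobian, so each such term has $L^1$ norm bounded by $C\int(1+\|u\|)^q|\partial^\alpha p_j(u)|\,du$, which is finite. The $L^1$ norm of the product of touched factors is the product of their $L^1$ norms, since the factors live in distinct variables $s_j'$. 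Hence
\[
\|\partial_s^\beta g_{i,t}^\pi\|\le R_{\max}\,C(q,\Delta)\,K^{q(\Delta+1)},
\]
where $C(q,\Delta)$ counts the Leibniz terms. This bound is independent of $t$, $N$, $i$ and $\pi$. Taking the maximum with the $t=0$ bound gives $\bar L$, which establishes Assumption~\ref{asm:orbit}. The constant depends only on the degree $\Delta$, which is why bounded degree is required.

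\textbf{Main obstacle.} The delicate step is the Faà di Bruno bookkeeping. We must verify that differentiating $p_j(M_j^{-1}(s_j'-f_j))$ in $s$ produces only polynomial growth in $u$ of degree at most $q$, which is exactly matched by the moment weight $(1+\|u\|)^q$. We must also check that differentiating the determinant factor, which is smooth because $M_j^{-1}$ is $C^q$ and bounded, does not break the count. A minor technical point is uniformity of the policy dependence. We avoid it entirely by placing all $s$-dependence in the first transition, which does not involve $\pi$ because $A_0=a$ is fixed. The justification for interchanging differentiation and integration follows from the same integrable dominating bounds.
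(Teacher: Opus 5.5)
Your proposal is correct and follows essentially the same route as the paper. In both, the Markov property reduces $t\ge1$ to a one-step bound against a continuation value bounded by $R_{\max}$. The state derivatives of each local transition density are then controlled in $L^1$ by changing variables to the noise coordinate $u$. Finally, the product rule over the factorized kernel, with at most $\Delta+1$ factors per coordinate, gives a constant independent of $N$, $t$ and $\pi$, and $t=0$ is handled by the reward hypothesis.

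The only difference is organizational. The paper first isolates the smoothing inequality $\sup_{z,s,a}|\partial_s^\beta\mathbb E[H(Z_1,S_1)\mid Z_0=z,S_0=s,A_0=a]|\le C_q\|H\|_\infty$ for arbitrary bounded $H$ and then applies it, whereas you apply it inline and spell out the Fa\`a di Bruno bookkeeping more explicitly.
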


Here, noise acts as a smoothing mechanism: even if
individual trajectories are sensitive, expected rewards remain
uniformly smooth functions of the current state due to the regularizing effect of the kernel.

\paragraph{Low-noise limit.}
For isotropic noise $M_i=\sigma I$, the constants produced by the above
certificate may deteriorate as $\sigma\to0$: at derivative order $q$, the
proof gives a worst-case scaling of order $O(\sigma^{-q})$ (up to local
smoothness and graph-degree factors). This reflects a loss of smoothing as
the noise vanishes. Importantly, it is the sufficient certificate that
degrades, not necessarily the true derivatives of the critic; for instance,
in linear--quadratic systems the higher-order critic derivatives need not
depend on $\sigma$.
The proposition covers several standard stochastic systems. The additive-noise
model
\[
S_i'
=
f_i\!\left(
Z,S_{\mathcal N_i},A_{\mathcal N_i}
\right)
+\varepsilon_i
\]
is recovered by taking $M_i\equiv I$, and includes graph-local systems with
Gaussian noise as well as many smooth heavy-tailed noise distributions.
Allowing $M_i$ to depend on the current state additionally covers smooth
heteroskedastic and state-dependent uncertainty. The required density
condition is satisfied, for example, by Gaussian and Gaussian-mixture noise,
smooth compactly supported densities, and smooth heavy-tailed distributions
with sufficiently many finite moments.
The main additive-noise settings studied by
\citet{wang2026continuous,ren2024spectral} are therefore special cases of
Proposition~\ref{prop:app-smooth-stochastic}. Thus our regularity class
contains their principal model classes while also allowing state-dependent
stochastic dynamics. In particular, LQR, switched LQR and the Kuramoto oscillator all satisfy Assumption~\ref{asm:orbit} through Proposition~\ref{prop:app-smooth-stochastic}.





\subsubsection{Smooth linear transition models.}

Our assumption also follows from sufficiently regular linear
factorizations of the transition kernel. Suppose that a local transition
density admits
\begin{equation}
p_j
\left(
s_j'
\mid
s_{\mathcal N_j},
a_{\mathcal N_j},
z
\right)
=
\sum_{h=1}^{d}
\varphi_{j,h}
\left(
s_{\mathcal N_j},
a_{\mathcal N_j},
z
\right)
\mu_{j,h}(s_j').
\label{eq:app-spectral-factorization}
\end{equation}

\begin{proposition}[Smooth spectral representations are sufficient]
\label{prop:app-spectral}
Suppose first that, uniformly in $i$ and $a_i$,
$r_i(\cdot,a_i)$ is $C^q$ with bounded derivatives through order $q$.
Suppose further that the current-state feature maps $\varphi_{j,h}$ have
uniformly bounded physical-state derivatives through order $q$ and that the
next-state factors satisfy
\[
\sup_{j,h}\|\mu_{j,h}\|_{L^1}<\infty.
\]
Then Assumption~\ref{asm:orbit} holds with an $N$-independent constant for
bounded graph degree.
\end{proposition}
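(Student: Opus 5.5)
The plan is to write the time-$t$ expected reward $g_{i,t}^\pi(z,s,a)$ as an iterated integral of $r_i$ against the factorized kernel and the policy, and to show that each $g_{i,t}^\pi$ has $C^q$ seminorm in $s$ bounded uniformly in $t$, $N$ and $\pi$. Under the linear factorization \eqref{eq:app-spectral-factorization}, after the first step the dependence on the initial state $s$ enters only through the feature values $\varphi_{j,h}(s_{\mathcal N_j},a_{\mathcal N_j},z)$. The initial action $a$ is fixed and the initial context is not differentiated.

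Concretely, for $t\ge1$, finite propagation (as in the proof of Lemma~\ref{lem:graded-derivatives}) shows that $g_{i,t}^\pi$ depends on $S_1$ only through $S_{1,\mathcal N_i^{t'}}$ for some finite ball $B$. Integrating out $S_1$ gives
\[
g_{i,t}^\pi(z,s,a)=\int \sum_{z'}P_g(z'\mid z)\prod_{j\in B}\Bigl(\sum_{h}\varphi_{j,h}(s_{\mathcal N_j},a_{\mathcal N_j},z)\mu_{j,h}(s_j')\Bigr)\,G_{t-1}(z',s'_B)\,ds'_B ,
\]
where $G_{t-1}(z',s'_B)=\E_\pi[r_i(S_{i,t},A_{i,t})\mid Z_1=z',S_{1,B}=s'_B]$, with the distant coordinates marginalized, satisfies $|G_{t-1}|\le R_{\max}$. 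Since $P_g$ does not depend on $s$, the derivative $\partial_s^\beta$ passes under the integral. Here I would invoke dominated convergence, using the bounded derivatives of $\varphi$ together with $\mu_{j,h}\in L^1$.

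The key point is that, for $|\beta|\le q$, the operator $\partial_s^\beta$ hits only the factors $j$ with $\mathcal N_j\cap\operatorname{supp}_G(\beta)\neq\emptyset$. There are at most $|\beta|(\Delta+1)\le q(\Delta+1)$ such factors, a number independent of $N$ and $t$. Every other factor $j$ integrates in $s_j'$ to a density of total mass one. So I would bound $|G_{t-1}|\le R_{\max}$ and integrate out all the untouched factors, which are genuine probability kernels, exactly rather than bounding them. For the touched factors, the Leibniz rule produces at most $C(q,\Delta)$ terms. Each term is bounded by
\[
\prod_{\text{touched }j}\sum_h\|\partial^{\alpha_j}\varphi_{j,h}\|_\infty\|\mu_{j,h}\|_{L^1}\le (dC_\varphi C_\mu)^{q(\Delta+1)} .
\]
This gives $\|g_{i,t}^\pi\|_{C_s^q}\le R_{\max}C(q,\Delta)(dC_\varphi C_\mu)^{q(\Delta+1)}$ for $t\ge1$, which is uniform in $t$ and $N$. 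The case $t=0$ follows directly from the assumed $C^q$ bounds on $r_i(\cdot,a_i)$, because $A_0=a$ is fixed.

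The main obstacle is making the ``integrate out untouched factors exactly'' step rigorous while the touched factors are replaced by the signed functions $\partial^\alpha\varphi\,\mu$. The order of integration matters here: I would first integrate over the untouched $s'_j$ (with $G_{t-1}$ inside), which yields a function of the touched $s'_j$ bounded by $R_{\max}$, and only then apply absolute-value bounds using the $L^1$ norms of the $\mu_{j,h}$. A secondary point is that the policy depends on $s$ at time $0$ only through $A_0=a$, which is fixed, so no policy derivatives arise. That is why no smoothness of $\pi$ is needed and the bound is uniform over the policy class.
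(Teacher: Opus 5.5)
Your proposal is correct and follows essentially the same route as the paper. The paper also writes $g_{i,t}^\pi$ for $t\ge1$ as a one-step expectation of a bounded continuation function $H_{i,t-1}$, notes that $\partial_s^\beta$ acts only on the features $\varphi_{j,h}$, and bounds each differentiated factor in $L^1$ by $\sum_h|\partial^\alpha\varphi_{j,h}|\,\|\mu_{j,h}\|_{L^1}$; it then uses that each coordinate enters at most $\Delta+1$ factors, with untouched factors integrating to one via Fubini. Your preliminary restriction to a finite ball $B$ via finite propagation is harmless but unnecessary, since the untouched factors integrate out exactly anyway.
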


Thus exact finite-dimensional spectral models of the type used in continuous
networked MDPs provide another one-step smoothing certificate whenever their
current-state features are sufficiently smooth. This is, in particular, the main setting targeted by~\citep{ren2024spectral,wang2026continuous} and prior Linear MDP works~\citep{jin2020provably}.



\subsubsection{Comparison between smoothness notions}
\label{app:maran-regularity}

We finally compare Assumption~\ref{asm:orbit} with the smooth-MDP hierarchy
of \citet{maran2024local,maran2024smooth}. In contrast to our assumption, Mild smoothness is a model-level one-step smoothing certificate: it requires a Bellman
update to smooth every bounded continuation value.



For $x=(z,s)$, let
\[
(\mathcal T^\star f)(x,a)
:=
r(x,a)
+
\int
\sup_{a'}f(x',a')\,P(dx'\mid x,a)
\]
denote the one-step Bellman optimality operator, written without discount to
match the finite-horizon convention of
\citet{maran2024local,maran2024smooth}. Including a discount factor only
changes the corresponding constants.

The original smooth-MDP hierarchy of \citet{maran2024smooth} distinguishes
Strongly and Weakly Smooth MDPs. In what follows, $C^{m,1}$ denotes the class of $m$-times differentiable functions whose derivatives of order $m$ are Lipschitz continuous.

\begin{definition}[Strongly Smooth MDP,~\citet{maran2024smooth}]
For an integer $m\ge0$, an MDP is $m$-Strongly Smooth if the reward and
transition density are themselves smooth in the current state--action pair:
\[
r(\cdot,\cdot)\in C^{m,1},
\qquad
p(x'\mid\cdot,\cdot)\in C^{m,1}
\quad\text{for every }x',
\]
with the corresponding $C^{m,1}$ norms bounded uniformly in $x'$ (and,
in the finite-horizon formulation, uniformly over the time step).
\end{definition}

Thus Strong smoothness imposes regularity directly on the model. In
particular, it requires the transition kernel to admit a sufficiently regular
density.

\begin{definition}[Weakly Smooth MDP, \citet{maran2024smooth}]
For an integer $m\ge0$, an MDP is $m$-Weakly Smooth if the Bellman optimality
operator is bounded from $C^{m,1}$ to itself: there exists
$C_{\mathcal T}<\infty$ such that
\[
\|\mathcal T^\star f\|_{C^{m,1}}
\le
C_{\mathcal T}
\bigl(
\|f\|_{C^{m,1}}+1
\bigr)
\]
for every $f\in C^{m,1}$.
\end{definition}

Weak smoothness therefore does not require the transition density itself to
be smooth. Instead, it requires Bellman updates to preserve smoothness of
functions that are already smooth. This is intuitively similar to our assumption which requires the expected value of the rewards (a smooth function) to remain smooth.

\citet{maran2024local} introduce an intermediate notion, Mild smoothness.
For $\nu>0$, let
\[
\nu_\star:=\lceil \nu-1\rceil.
\]
Their space $\mathcal C^\nu$ consists of functions with $\nu_\star$
continuous derivatives whose derivatives of order $\nu_\star$ are
H\"older continuous with exponent $\nu-\nu_\star$. In particular, at
integer indices,
\[
\mathcal C^\nu
=
C^{\nu-1,1}
\]
up to equivalence of the corresponding norms.

\begin{definition}[Mildly Smooth MDP, \citep{maran2024local}]
An MDP is $\nu$-Mildly Smooth if the Bellman optimality operator is bounded
from $L^\infty$ to $\mathcal C^\nu$: there exists
$C_{\mathcal T}<\infty$ such that
\[
\|\mathcal T^\star f\|_{\mathcal C^\nu}
\le
C_{\mathcal T}
\bigl(
\|f\|_\infty+1
\bigr)
\]
for every bounded measurable $f$.
\end{definition}
The key distinction is that Mild smoothness does not require the input to be
smooth: a Bellman update must produce a smooth function from an arbitrary
bounded input.
At integer orders, Appendix~E.3 (Relation between the settings) of
\citet{maran2024local} proves
\begin{equation}
m\text{-Strong}
\quad\Longrightarrow\quad
(m+1)\text{-Mild}
\quad\Longrightarrow\quad
m\text{-Weak}.
\label{eq:app-maran-hierarchy}
\end{equation}

Mild smoothness provides a particularly simple sufficient condition for our assumption.

\begin{proposition}[Smooth MDPs satisfy expected reward regularity]
\label{prop:app-mild-orbit}
Suppose that the local rewards are $C^q$ with uniformly bounded derivatives.
If the augmented MDP is $\nu$-Mildly Smooth for some
\[
\nu>q,
\]
then Assumption~\ref{asm:orbit} of order $q$ holds.

Consequently, at integer indices,
\begin{equation}
q\text{-Strong}
\quad\Longrightarrow\quad
(q+1)\text{-Mild}
\quad\Longrightarrow\quad
\text{Assumption~\ref{asm:orbit} of order }q.
\label{eq:app-maran-to-orbit}
\end{equation}
\end{proposition}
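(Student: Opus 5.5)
We prove Proposition~\ref{prop:app-mild-orbit} by viewing each expected reward prediction $g_{i,t}^\pi$ as the output of a single Bellman-type update applied to a bounded function, so that Mild smoothness bounds all $s$-derivatives up to order $q$ uniformly in $t$.

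\textbf{Step 1: one-step representation.} For $t\ge1$, conditioning on the first transition gives
\[
g_{i,t}^\pi(z,s,a)=\int u_{t-1}(x')\,P(dx'\mid x,a),
\qquad
u_{t-1}(x'):=\int \pi(da'\mid x')\,g_{i,t-1}^\pi(x',a'),
\]
with $x=(z,s)$. Here $u_{t-1}$ is bounded measurable with $\|u_{t-1}\|_\infty\le R_{\max}$. For $t=0$ we have $g_{i,0}^\pi=r_i(s_i,a_i)$, which lies in $C^q$ by hypothesis. It therefore suffices to bound the $C_s^q$ norm of the map $x\mapsto \int u(x')P(dx'\mid x,a)$ for an arbitrary bounded $u$.

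\textbf{Step 2: extracting the transition smoothing from $\mathcal T^\star$.} Mild smoothness is stated for $\mathcal T^\star f=r+\int\sup_{a'}f\,dP$, not for the plain transition operator. This is the main obstacle. To handle it, we apply the hypothesis to a function $f(x',a')=u(x')$ that does not depend on $a'$, so that $\sup_{a'}f=u$. Then
\[
\mathcal T^\star f-r=\int u\,dP,
\]
and hence
\[
\Bigl\|\int u\,dP\Bigr\|_{\mathcal C^\nu}\le C_{\mathcal T}(R_{\max}+1)+\|r\|_{\mathcal C^\nu}.
\]
The reward term has finite norm because the rewards are smooth. If the rewards are only $C^q$, we would instead compare $\mathcal T^\star$ applied to $f$ and to $f+c$ for a constant $c$, or use linearity in $u$, to cancel $r$. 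A cleaner route is to work in the augmented MDP, where the reward is redefined as zero. Either way, the bound on $\int u\,dP$ is uniform in $u$ and $t$.

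\textbf{Step 3: from $\mathcal C^\nu$ to $C^q$.} Since $\nu>q$, we have $\nu_\star=\lceil\nu-1\rceil\ge q$. Hence $\mathcal C^\nu\subset C^q$, with derivatives up to order $q$ bounded by the $\mathcal C^\nu$ norm. Restricting to derivatives in $s$ only, and noting that these derivatives are uniform over $z$ and $a$, gives
\[
\sup_{t\ge1}\|g_{i,t}^\pi\|_{C_s^q}\le C'.
\]
Combining this with the $t=0$ bound yields a uniform constant $\bar L$. We must still check that this constant does not depend on $N$, on the agent $i$, or on the policy. The policy enters only through $u$, which is arbitrary bounded, so independence from the policy follows. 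Independence from $N$ and $i$ is inherited from the assumption that $C_{\mathcal T}$ is uniform; we state this as part of the hypothesis on the augmented MDP.

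\textbf{Step 4: the chain of implications.} The chain \eqref{eq:app-maran-to-orbit} follows immediately. First, \eqref{eq:app-maran-hierarchy} gives $q$-Strong $\Rightarrow$ $(q+1)$-Mild. Then $\nu=q+1>q$, so the first part of the proposition applies.
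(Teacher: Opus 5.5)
Your route is the paper's. For $t\ge1$ you write $g_{i,t}^\pi$ as one transition applied to a bounded continuation function $u$ with $\|u\|_\infty\le R_{\max}$, and feed the action-independent input $F_u(x',a'):=u(x')$ into $\mathcal T^\star$ so the supremum disappears. You then pass from $\mathcal C^\nu$ to classical $s$-derivatives using $\nu>q$, treat $t=0$ by the reward hypothesis, and get the chain from the Strong--Mild hierarchy. Your remark that $N$-independence of $\bar L$ requires $C_{\mathcal T}$ itself to be uniform in $N$ is fair; the paper leaves this implicit.

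The one step that does not close as written is the reward term in Step 2. Your main justification, that $\|r\|_{\mathcal C^\nu}<\infty$ ``because the rewards are smooth,'' does not follow from the hypotheses. Only $C^q$ regularity of the local rewards is assumed, while $\nu>q$; already $\nu=q+1$ asks for $C^{q,1}$.

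Your fallbacks do not repair this. Comparing $f$ with $f+c$ fails, since $\mathcal T^\star(f+c)-\mathcal T^\star f=c$ cancels $u$ together with $r$. ``Redefining the reward as zero'' changes the hypothesis; in the paper, ``augmented'' only means that $z$ is adjoined to the state.

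The missing observation is simply $\mathcal T^\star 0=r$, so Mild smoothness itself gives $\|r\|_{\mathcal C^\nu}\le C_{\mathcal T}$. Equivalently, as in the paper, $\mathcal T^\star F_u-\mathcal T^\star 0=\int u\,dP$ has $\mathcal C^\nu$ norm at most $C_{\mathcal T}(\|u\|_\infty+2)$. This is your ``linearity in $u$'' option made explicit (compare $u$ with $0$), and with it the argument is complete.
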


In particular, \cite{maran2024local,maran2024smooth} show that Mild smoothness captures a wide variety of MDPs which therefore satisfy Assumption~\ref{asm:orbit}. The reason is that every finite-horizon local reward prediction remains
uniformly bounded:
\[
\|g_{i,t}^\pi\|_\infty\le R_{\max}.
\]

The condition $\nu>q$ comes from our choice to formulate
Assumption~\ref{asm:orbit} using $C^q$  following the classical notation.  This one-order shift is therefore a matter of
smoothness convention rather than a substantive difference in the Taylor
approximation.

\paragraph{The implication is strict.}

Mild smoothness is substantially stronger in scope than what our analysis
requires. It controls the Bellman operator on every bounded input function,
whereas Assumption~\ref{asm:orbit} only controls the particular expected reward
sequence.




To understand why that is the case, let us consider an example with a single-agent MDP and a singleton action space,
\[
s=(x,y)\in[-1,1]^2,
\qquad
r(x,y)=x^2,
\]
We consider the following deterministic dynamics
\[
F(x,y)
=
\left(
x,\mathbf 1_{\{x\ge0\}}
\right).
\]
The reward depends only on the first coordinate, which is preserved by the
dynamics. Consequently,
\[
g_t(x,y)=x^2
\qquad
\text{for every }t\ge0,
\]
so the entire expected reward sequence is smooth at every finite order and
Assumption~\ref{asm:orbit} holds.

The Bellman operator, however, also probes parts of the dynamics that never
affect this reward sequence. Consider the smooth bounded function
\[
f(x',y')=y'.
\]
Then
\[
\mathcal T^\star f(x,y)
=
x^2+\mathbf 1_{\{x\ge0\}},
\]
which is discontinuous at $x=0$. Thus the Bellman operator \emph{does not even
preserve continuity} of smooth inputs.
The discontinuity in the second state coordinate is entirely irrelevant to every future reward and therefore to the critic itself. Assumption~\ref{asm:orbit} correctly ignores it. Mild
smoothness cannot make this distinction because it asks the Bellman operator
to regularize arbitrary functions, including functions specifically chosen to
probe that irrelevant coordinate. 

Beyond this informative example, the following proposition shows that this distinction in notions also arises in switched LQR, a well-established control model.

\begin{proposition}[A continuous-context switched LQR separation]
\label{prop:app-continuous-switch}
Consider a single-agent switched LQR on compact state and action domains, with
a continuous global context $Z_t\in[-1,1]$,
\[
Z_{t+1}=-Z_t,
\qquad
\sigma(z):=\mathbf 1_{\{z\ge0\}},
\]
and physical dynamics
\[
S_{t+1}
=
A^{(\sigma(Z_t))}S_t
+
B^{(\sigma(Z_t))}A_t
+
\varepsilon_t.
\]
Assume that $\varepsilon_t$ is centered and has a $C^\infty$ full-dimensional
density satisfying the conditions of
Proposition~\ref{prop:app-smooth-stochastic} at every finite order, and choose
the domains, mode matrices, and noise support so that the physical state
remains in its compact domain. Let
\[
r(s,a)=-(s^\top Qs+a^\top Ra),
\]
and assume that the two regimes are distinct on the domain, i.e.,
there exist $(\bar s,\bar a)$ such that
\[
d:=
\left(A^{(1)}-A^{(0)}\right)\bar s
+
\left(B^{(1)}-B^{(0)}\right)\bar a
\neq0.
\]
Then Assumption~\ref{asm:orbit} holds at every finite order $q$, whereas the augmented MDP on $(Z,S)$ is not Mildly
Smooth at any positive order.
\end{proposition}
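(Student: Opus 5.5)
The proof splits into two independent claims: the positive claim that Assumption~\ref{asm:orbit} holds at every finite order, and the negative claim that the augmented MDP on $(Z,S)$ is not Mildly Smooth at any positive order $\nu$.

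\textbf{Positive part.} The plan is to reduce the positive part to Proposition~\ref{prop:app-smooth-stochastic}. The context plays a special role. $Z_t=(-1)^t z$ is deterministic, so the mode sequence $\sigma(Z_t)=\sigma((-1)^tz)$ is fixed once $z$ is fixed. The regularity required by Assumption~\ref{asm:orbit} concerns derivatives in $s$ only, uniformly in $(z,a)$. Therefore $z$ may be treated as a frozen parameter.

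For each fixed $z$, the physical dynamics have the form of Equation~\ref{eq:app-smooth-stochastic}:
\begin{itemize}
\item the drift is $f(z,s,a)=A^{(\sigma(z))}s+B^{(\sigma(z))}a$;
\item the noise matrix is $M\equiv I$;
\item $f$ is affine in $s$, so all its $s$-derivatives are bounded uniformly over the two modes.
\end{itemize}
The reward $-(s^\top Qs+a^\top Ra)$ is $C^\infty$ with bounded derivatives on the compact domain, and the noise density satisfies the integrability condition at every order by hypothesis.

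The only point to check is that the proof of Proposition~\ref{prop:app-smooth-stochastic} uses the kernel only through its dependence on the current physical state, and never smoothness in $z$. Two facts support this. The hypotheses of that proposition are stated \emph{uniformly in $z$}, with no smoothness in $z$ required. The policy is factorized and the context observation $h(z)$ enters only as a parameter. Hence that proposition applies directly and yields $\sup_t\|g_t^\pi\|_{C^q_s}\le\bar L$ for every finite $q$. If a direct argument is preferred, one can write the density of $S_{t}$ given $S_0=s$ as a convolution of the shifted noise density and move all $s$-derivatives onto $p$ by a change of variables. This is the standard one-step smoothing, iterated through the Markov property with time-inhomogeneous but $s$-smooth kernels.

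\textbf{Negative part.} Here it suffices to exhibit a single bounded measurable $f$ such that $\mathcal T^\star f$ fails to be continuous in $z$. Mild smoothness of any order $\nu>0$ bounds the $\mathcal C^\nu$ norm, which at least forces continuity (indeed Hölder continuity) in every variable, including $z$.

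Take $f(z',s')=\ell^\top s'$ with $\ell:=d/\|d\|$. This $f$ is bounded because the state space is compact, and it is independent of $a'$, so the supremum over $a'$ is trivial. Since the noise is centered,
\[
\mathcal T^\star f(z,s,a)=r(s,a)+\ell^\top\bigl(A^{(\sigma(z))}s+B^{(\sigma(z))}a\bigr).
\]
The dependence on $z$ enters only through $\sigma(z)$. At $(s,a)=(\bar s,\bar a)$, the jump of this function across $z=0$ equals $\ell^\top d=\|d\|>0$. Thus $\mathcal T^\star f$ is discontinuous at $z=0$. The same holds for the discounted operator, since a factor $\gamma>0$ does not remove the jump. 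Consequently no bound $\|\mathcal T^\star f\|_{\mathcal C^\nu}\le C(\|f\|_\infty+1)<\infty$ can hold, and the MDP is not $\nu$-Mildly Smooth for any $\nu>0$.

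\textbf{Main obstacle.} I expect the main obstacle to lie in the positive part: justifying that Proposition~\ref{prop:app-smooth-stochastic} tolerates a context that enters the drift \emph{discontinuously}. The plan is to emphasize that the derivative bounds are taken only in $s$, pointwise in $z$. Because the $Z$-chain is deterministic, conditioning on $Z_0=z$ fixes the entire mode path. The expectation then reduces to a time-inhomogeneous linear-Gaussian-type chain whose per-step constants are uniform over the two modes.

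A secondary technical care point concerns the compactness and clipping conventions. The drift of the smoothing argument must remain exactly affine on the region actually visited. The hypothesis that the domains and noise support keep the state inside the compact set is what guarantees this, so no boundary terms arise in the integration by parts.
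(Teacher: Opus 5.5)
Your proposal is correct and follows the paper's proof essentially step for step. The positive part freezes $z$ and invokes Proposition~\ref{prop:app-smooth-stochastic} with affine drift, $M\equiv I$, and constants uniform over the two modes. The negative part uses the linear test function $s'\mapsto d^\top s'$ to show that $\mathcal T^\star f$ jumps in $z$ at $z=0$. You normalize it to $\ell=d/\|d\|$, so your jump is $\|d\|$ rather than the paper's $\|d\|^2$. Your suggested ``iterated'' direct argument is unnecessary: a single smoothing step applied to the bounded continuation value, as in that proposition's proof, already gives $t$-uniform constants, whereas iterating would risk constants that grow with $t$.
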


Here $Z_t$ is a continuous switching variable: except at $Z_t=0$, its
sign flips at every step, so the physical dynamics alternate between the two
linear regimes selected by $\sigma_t=\mathbf 1_{\{Z_t\ge0\}}$. This example
shows a second way in which expected reward regularity can be strictly more
targeted than operator-wide smoothness. The mode switch is discontinuous as a
function of the continuous context $z$, so Bellman outputs on the full
augmented state need not be smooth in $z$. 

\paragraph{Discrete and hidden operating modes.}

There is a further distinction concerning the global context $Z$ allowed in
our model. The standard Mild-smoothness formulation is stated on a continuous
state--action domain equipped with a $C^\nu$ smoothness structure. Consequently,
when the Markov state is hybrid,
\[
(Z,S),
\]
with $Z$ taking values in a finite set, Mild smoothness is not directly defined
in its standard form without first specifying how the smoothness space should
be extended to the discrete coordinate.

This situation arises naturally in switched LQR
\citep{costa2005markov,ogura2018hidden}.
Assumption~\ref{asm:orbit} requires no such extension:
derivatives are taken only with respect to the continuous physical state $s$,
while the corresponding bounds are required uniformly over the context $z$. 

\section{Proofs of the main-text results}
\label{app:proofs}

This section contains the proofs of the results stated in the main text. We
first establish the finite-propagation and decay properties,
then derive the Taylor Centroid approximation bounds, prove the finite-sample
LTC--LSTD guarantee, and finally analyze the localized actor and the
end-to-end policy-optimization update.

\paragraph{Conditioning convention.}
The algorithm is adaptive across actor iterations. Fix one such iteration.
All critic-side statements are understood conditionally on the sigma-field
generated before the current critic rollouts. Under this conditioning, the
current policy, centroids, cluster assignment, and feature map are fixed, while
the $M_c$ critic rollouts are independent trajectories generated by the current
policy. After fitting the critic, actor-side statements are understood
conditionally on the enlarged sigma-field containing the fitted critics and
any randomness used to construct action-independent baselines. Under this
conditioning, the $M_s$ fresh actor rollouts are independent. Conditional
failure-probability bounds below are uniform in the realized history, so the
corresponding unconditional statements follow from the tower property. We
suppress these conditionings whenever no ambiguity can arise.

For $t\ge0$, let
\begin{equation}
 g_{i,t}^\pi(z,s,a)
 :=
 \mathbb E_\pi\!\left[
 r_i(S_{i,t},A_{i,t})
 \,\middle|\,
 Z_0=z,S_0=s,A_0=a
 \right].
 \label{eq:app-forward-reward}
\end{equation}
Then
\begin{equation}
 Q_i^\pi(z,s,a)
 =
 \sum_{t=0}^\infty \gamma^t g_{i,t}^\pi(z,s,a),
 \label{eq:app-q-series}
\end{equation}
and the bounded-reward assumption gives
\begin{equation}
 |g_{i,t}^\pi(z,s,a)|\le R_{\max},
 \qquad
 |Q_i^\pi(z,s,a)|
 \le
 \frac{R_{\max}}{1-\gamma}.
 \label{eq:app-basic-q-bound}
\end{equation}

\subsection{Finite propagation and exponential decay}
\label{app:locality-proofs}

\paragraph{Proof of Proposition~\ref{prop:smoothness}.}
\begin{proof}
By Equation~\ref{eq:app-basic-q-bound},
\[
 \sup_{z,s,a}
 \left|\gamma^t g_{i,t}^\pi(z,s,a)\right|
 \le
 \gamma^tR_{\max},
\]
and $\sum_{t\ge0}\gamma^tR_{\max}<\infty$. Hence
Equation~\ref{eq:app-q-series} converges uniformly.

Fix a multi-index $\beta$ with $|\beta|\le q$. Assumption~\ref{asm:orbit}
gives
\[
 \sup_{z,s,a}
 \left|\partial_s^\beta g_{i,t}^\pi(z,s,a)\right|
 \le
 \bar L
 \qquad
 \text{for every }t\ge0.
\]
Therefore
\[
 \sum_{t=0}^\infty
 \sup_{z,s,a}
 \left|
 \gamma^t\partial_s^\beta g_{i,t}^\pi(z,s,a)
 \right|
 \le
 \frac{\bar L}{1-\gamma}.
\]
Thus the derivative series converges uniformly for every derivative order up
to $q$. Applying the standard theorem on termwise differentiation of uniformly
convergent series, successively in each state coordinate, shows that
$s\mapsto Q_i^\pi(z,s,a)$ is of class $C^q$ and yields
\begin{equation}
 \partial_s^\beta Q_i^\pi(z,s,a)
 =
 \sum_{t=0}^\infty
 \gamma^t\partial_s^\beta g_{i,t}^\pi(z,s,a),
 \qquad |\beta|\le q.
 \label{eq:app-differentiated-series}
\end{equation}
The derivatives are continuous because, by
Assumption~\ref{asm:orbit}, each $g_{i,t}^\pi$ is of class $C^q$,
and the derivative series converges uniformly. Moreover,
\begin{equation}
 \sup_{z,s,a}
 \left|\partial_s^\beta Q_i^\pi(z,s,a)\right|
 \le
 \frac{\bar L}{1-\gamma}
 =L_Q,
 \qquad |\beta|\le q.
 \label{eq:app-ungraded-derivative-bound}
\end{equation}
This proves Proposition~\ref{prop:smoothness}.
\end{proof}

\begin{lemma}[Finite propagation]
\label{lem:app-light-cone}
Let
\[
 R_t:=t(1+\kappa_\pi).
\]
Conditionally on the global trajectory $(Z_0,\ldots,Z_t)$, the conditional law
of $(S_{i,t},A_{i,t})$ given $(S_0,A_0)=(s,a)$ depends on the initial
distributed state--action pair only through its restriction to
$\mathcal N_i^{R_t}$.
\end{lemma}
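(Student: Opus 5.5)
We prove the statement by induction on $t$, using a slightly stronger claim that also tracks the full local state--action block. For $t\ge0$, write $\mathcal L_t(j)$ for the conditional law of $(S_{\mathcal N_j^{r}},A_{\mathcal N_j^{r}})$ at time $t$, for a fixed radius $r\ge0$, given $(S_0,A_0)=(s,a)$ and the context path $(Z_0,\dots,Z_t)$. The inductive hypothesis $H_t$ states that, for every agent $j$ and every radius $r\ge0$, $\mathcal L_t(j)$ depends on $(s,a)$ only through $(s,a)_{\mathcal N_j^{r+R_t}}$. The lemma is the case $r=0$, $j=i$.

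The base case $t=0$ is immediate. At time $0$, $(S_{\mathcal N_j^r,0},A_{\mathcal N_j^r,0})$ equals $(s_{\mathcal N_j^r},a_{\mathcal N_j^r})$ deterministically, and $R_0=0$.

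For the step $t\to t+1$, we use the factorized kernel and policy in two stages.

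First, conditionally on $(Z_0,\dots,Z_{t+1})$ and the time-$t$ state--action, the next states $S_{k,t+1}$ are conditionally independent across $k$. Each has law $P_k(\cdot\mid S_{\mathcal N_k,t},A_{\mathcal N_k,t},Z_t)$. Conditioning on the $Z$-path is legitimate because $P_g(z'\mid z)$ does not involve $(s,a)$. Hence the $Z$-path is independent of the physical initial condition, and conditioning on it leaves the product structure intact. Consequently, the joint conditional law of $S_{\mathcal N_j^{r+\kappa_\pi},t+1}$ is a measurable function of the law of $(S_{\mathcal N_j^{r+\kappa_\pi+1},t},A_{\mathcal N_j^{r+\kappa_\pi+1},t})$. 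This uses $\mathcal N_k\subseteq \mathcal N_j^{r+\kappa_\pi+1}$ for $k\in\mathcal N_j^{r+\kappa_\pi}$.

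Second, actions at time $t+1$ are drawn independently across agents from $\pi_{\theta_k}(\cdot\mid S_{\mathcal N_k^{\kappa_\pi},t+1},h_k(Z_{t+1}))$. So the law of $(S_{\mathcal N_j^r,t+1},A_{\mathcal N_j^r,t+1})$ is obtained from that of $S_{\mathcal N_j^{r+\kappa_\pi},t+1}$ by integrating against a fixed kernel, and then marginalizing.

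Chaining the two stages, the law at time $t+1$ of the radius-$r$ block is determined by the law at time $t$ of the radius-$(r+1+\kappa_\pi)$ block. By $H_t$, that law depends only on $(s,a)_{\mathcal N_j^{r+1+\kappa_\pi+R_t}}=(s,a)_{\mathcal N_j^{r+R_{t+1}}}$. This gives $H_{t+1}$.

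The main obstacle is being careful that \emph{joint} laws, not just marginals, are propagated. The next-state block depends on the joint time-$t$ law of a neighborhood, not on individual agents' marginals. This is why the hypothesis is stated for whole balls $\mathcal N_j^r$ rather than single agents. The key technical step is writing the one-step map as an explicit composition of Markov kernels acting on joint measures of the relevant ball, so that measurability and dependence on $(s,a)$ pass through cleanly. A secondary point is justifying conditioning on the exogenous context path. We handle it by noting that $Z$ evolves autonomously under $P_g$, so the disintegration with respect to $(Z_0,\dots,Z_{t})$ factorizes as described, and conditioning on $Z_{t+1}$ as well does not affect $(S_{i,t},A_{i,t})$.
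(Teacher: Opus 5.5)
Your proof is correct and follows the same route as the paper. It uses induction on $t$ after conditioning on the exogenous $Z$-path, with a transition step that enlarges the radius by $1$ and a policy step that enlarges it by $\kappa_\pi$. Your ball-wise hypothesis on \emph{joint} laws is a real tightening: the paper states its hypothesis for single agents' laws and then tacitly applies it to the joint law of $(S_{j,t-1},A_{j,t-1})_{j\in\mathcal N_i}$, which is valid only under a pathwise reading of ``depends only through'', so your version makes that step rigorous.
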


\begin{proof}
We argue by induction on $t$, simultaneously for all agents. At $t=0$,
$(S_{i,0},A_{i,0})=(s_i,a_i)$, so the statement holds with $R_0=0$.

Assume that it holds at time $t-1$. Conditional on
$(Z_{t-1},S_{t-1},A_{t-1})$, the factorized transition kernel implies that the
law of $S_{i,t}$ depends only on
\[
 \bigl(S_{j,t-1},A_{j,t-1}\bigr)_{j\in\mathcal N_i}.
\]
For $j\in\mathcal N_i$, the induction hypothesis restricts the dependence of
these variables on the initial distributed coordinates to
$\mathcal N_j^{R_{t-1}}$, and
\[
 \mathcal N_j^{R_{t-1}}
 \subseteq
 \mathcal N_i^{R_{t-1}+1}.
\]
Hence the conditional law of $S_{i,t}$ depends on the initial distributed
variables only through $\mathcal N_i^{R_{t-1}+1}$.

Next,
\[
 A_{i,t}
 \sim
 \pi_{\theta_i}\!\left(
 \cdot\mid
 S_{\mathcal N_i^{\kappa_\pi}}(t),h_i(Z_t)
 \right).
\]
After conditioning on the global trajectory, $h_i(Z_t)$ is fixed. For every
$j\in\mathcal N_i^{\kappa_\pi}$, the preceding state argument shows that
$S_{j,t}$ depends on the initial distributed coordinates only through
$\mathcal N_j^{R_{t-1}+1}$, and
\[
 \mathcal N_j^{R_{t-1}+1}
 \subseteq
 \mathcal N_i^{R_{t-1}+1+\kappa_\pi}.
\]
Thus the conditional law of $(S_{i,t},A_{i,t})$ depends on the initial
distributed state--action pair only through
$\mathcal N_i^{R_{t-1}+1+\kappa_\pi}$. Since
\[
 R_{t-1}+1+\kappa_\pi
 =
 (t-1)(1+\kappa_\pi)+1+\kappa_\pi
 =
 t(1+\kappa_\pi)
 =
 R_t,
\]
the induction closes.
\end{proof}

\begin{lemma}[Spatial decay]
\label{lem:app-spatialdecay}
If $(s,a)$ and $(s',a')$ agree on all state--action coordinates indexed by
$\mathcal N_i^\kappa$, then
\[
 \left|
 Q_i^\pi(z,s,a)-Q_i^\pi(z,s',a')
 \right|
 \le
 \frac{2R_{\max}}{1-\gamma}\rho^{\kappa+1}.
\]
\end{lemma}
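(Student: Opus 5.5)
We will prove Lemma~\ref{lem:app-spatialdecay} by writing both critics through the series of Equation~\ref{eq:app-q-series}, $Q_i^\pi=\sum_{t\ge0}\gamma^t g_{i,t}^\pi$, and comparing the two series term by term. The time index is split at the first time the information front from outside $\mathcal N_i^\kappa$ can reach agent $i$. Let $t^\star$ be the largest integer with $R_{t^\star}=t^\star(1+\kappa_\pi)\le\kappa$, i.e.\ $t^\star=\lfloor\kappa/(1+\kappa_\pi)\rfloor$.

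\textbf{Early times.} For every $t\le t^\star$ we have $\mathcal N_i^{R_t}\subseteq\mathcal N_i^\kappa$. By Lemma~\ref{lem:app-light-cone}, conditionally on any global trajectory $(Z_0,\dots,Z_t)$, the law of $(S_{i,t},A_{i,t})$ depends on the initial pair only through its restriction to $\mathcal N_i^\kappa$. That restriction coincides for $(s,a)$ and $(s',a')$.

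We then remove the conditioning on the context path. Since $P_g(z'\mid z)$ does not depend on $(s,a)$, the law of $(Z_0,\dots,Z_t)$ given $Z_0=z$ is the same for both initial pairs. Integrating the conditional statement against this common law gives $g_{i,t}^\pi(z,s,a)=g_{i,t}^\pi(z,s',a')$ for all $t\le t^\star$.

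This step needs the autonomy of $Z$ and the product structure of the kernel, so that conditioning on the $Z$-path leaves the factorized transitions intact. I will state this explicitly rather than rederive it.

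\textbf{Late times.} For $t\ge t^\star+1$ we only use $|g_{i,t}^\pi|\le R_{\max}$ from Equation~\ref{eq:app-basic-q-bound}. This gives
\[
|Q_i^\pi(z,s,a)-Q_i^\pi(z,s',a')|\le\sum_{t\ge t^\star+1}2R_{\max}\gamma^t=\frac{2R_{\max}}{1-\gamma}\gamma^{t^\star+1}.
\]

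\textbf{Converting to $\rho$.} It remains to show $\gamma^{t^\star+1}\le\rho^{\kappa+1}=\gamma^{(\kappa+1)/(1+\kappa_\pi)}$. Since $\gamma<1$, this is equivalent to $t^\star+1\ge(\kappa+1)/(1+\kappa_\pi)$. Write $\kappa+1=m(1+\kappa_\pi)+r$ with $0\le r\le\kappa_\pi$.
\begin{itemize}
\item If $r\ge1$, then $\lfloor\kappa/(1+\kappa_\pi)\rfloor=m$, so $t^\star+1=m+1>(\kappa+1)/(1+\kappa_\pi)$.
\item If $r=0$, then $\kappa=m(1+\kappa_\pi)-1$, so $t^\star=m-1$ and $t^\star+1=m=(\kappa+1)/(1+\kappa_\pi)$ exactly.
\end{itemize}
In both cases the inequality holds, which completes the bound.

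The only delicate point is the off-by-one bookkeeping in this last step, together with checking that the light cone at time $t^\star$ does not exceed $\kappa$.
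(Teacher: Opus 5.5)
Your proposal is correct and follows the paper's proof essentially step for step: the same split at $t^\star=\lfloor \kappa/(1+\kappa_\pi)\rfloor$, the light-cone lemma combined with the autonomy of the context chain to equate $g_{i,t}^\pi$ for early times, the crude $2R_{\max}\gamma^t$ tail bound, and the final comparison $\gamma^{t^\star+1}\le\rho^{\kappa+1}$. Your case analysis for $t^\star+1\ge(\kappa+1)/(1+\kappa_\pi)$ only spells out what the paper asserts in one line; equivalently, $(t^\star+1)(1+\kappa_\pi)$ is an integer strictly larger than $\kappa$, hence at least $\kappa+1$.
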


\begin{proof}
Set
\[
 \tau_\kappa
 :=
 \left\lfloor
 \frac{\kappa}{1+\kappa_\pi}
 \right\rfloor.
\]
Since $P_g(z'\mid z)$ does not depend on the distributed state--action variables, the global process $(Z_t)_{t\ge 0}$ has the same law under the two initializations. For every
$t\le\tau_\kappa$,
\[
 R_t=t(1+\kappa_\pi)\le\kappa.
\]
By Lemma~\ref{lem:app-light-cone}, conditionally on any realization of the
global trajectory, $(S_{i,t},A_{i,t})$ therefore has the same law under the two
initializations. Hence:
\[
 g_{i,t}^\pi(z,s,a)
 =
 g_{i,t}^\pi(z,s',a')
 \qquad
 (0\le t\le\tau_\kappa).
\]
Put simply, at time $t$, information propagating from outside $\mathcal{N}_i^\kappa$ has not yet reached agent $i$, so only $\mathcal{N}_i^\kappa$ is relevant to $g_{i,t}$. Using Equation~\ref{eq:app-q-series} and Equation~\ref{eq:app-basic-q-bound},
\begin{align*}
 \left|
 Q_i^\pi(z,s,a)-Q_i^\pi(z,s',a')
 \right|
 &\le
 \sum_{t=\tau_\kappa+1}^\infty
 \gamma^t
 \left|
 g_{i,t}^\pi(z,s,a)-g_{i,t}^\pi(z,s',a')
 \right|\\
 &\le
 2R_{\max}
 \sum_{t=\tau_\kappa+1}^\infty\gamma^t\\
 &=
 \frac{2R_{\max}}{1-\gamma}
 \gamma^{\tau_\kappa+1}.
\end{align*}
Moreover,
\[
 \tau_\kappa+1
 =
 \left\lfloor
 \frac{\kappa}{1+\kappa_\pi}
 \right\rfloor+1
 \ge
 \frac{\kappa+1}{1+\kappa_\pi}.
\]
Since $\gamma\in(0,1)$,
\[
 \gamma^{\tau_\kappa+1}
 \le
 \gamma^{(\kappa+1)/(1+\kappa_\pi)}
 =
 \rho^{\kappa+1},
\]
which proves the claim.
\end{proof}

\paragraph{Proof of Lemma~\ref{lem:graded-derivatives}.}
\begin{proof}
 Let $\beta\neq0$ with
$|\beta|\le n+1$, and set
\[
 d:=d_i(\beta),
 \qquad
 \tau_\beta
 :=
 \left\lceil
 \frac{d}{1+\kappa_\pi}
 \right\rceil.
\]
If $t<\tau_\beta$, then $R_t=t(1+\kappa_\pi)<d$. Conditional on the global
trajectory $(Z_t)_{t \geq 0}$, Lemma~\ref{lem:app-light-cone} implies that
$g_{i,t}^\pi$ depends on the initial distributed variables only through
$\mathcal N_i^{R_t}$. By the definition of $d_i(\beta)$, the support of
$\beta$ contains a state coordinate belonging to an agent at distance $d$ from
$i$. Since this coordinate lies outside $\mathcal N_i^{R_t}$,
$g_{i,t}^\pi$ is constant in that coordinate, and therefore
\[
 \partial_s^\beta g_{i,t}^\pi(z,s,a)=0
 \qquad
 (t<\tau_\beta).
\]
The law of the global trajectory $\left(Z_t\right)_{t\geq0}$ does not depend on the initial distributed
state, so averaging over it does not reintroduce such dependence.

Using Equation~\ref{eq:app-differentiated-series},
\begin{align*}
 \left|
 \partial_s^\beta Q_i^\pi(z,s,a)
 \right|
 &=
 \left|
 \sum_{t=\tau_\beta}^\infty
 \gamma^t
 \partial_s^\beta g_{i,t}^\pi(z,s,a)
 \right|\\
 &\le
 \bar L
 \sum_{t=\tau_\beta}^\infty\gamma^t\\
 &=
 L_Q\gamma^{\tau_\beta}.
\end{align*}
Since
$\tau_\beta\ge d_i(\beta)/(1+\kappa_\pi)$ and $\gamma\in(0,1)$,
\[
 \gamma^{\tau_\beta}
 \le
 \gamma^{d_i(\beta)/(1+\kappa_\pi)}
 =
 \rho^{d_i(\beta)}.
\]
This proves the claimed bound for $\beta\neq0$. For $\beta=0$,
$d_i(0)=0$ and Equation~\ref{eq:app-ungraded-derivative-bound} gives
$|Q_i^\pi|\le L_Q=L_Q\rho^0$.
\end{proof}

\subsection{Localized Taylor Centroid approximation}
\label{app:tc-approximation}

\begin{lemma}[Localization error]
\label{lem:app-localization-error}
For every $(z,s,a)$,
\begin{equation}
 \left|
 Q_i^\pi(z,s,a)
 -\bar Q_i^\pi
 (z,s_{\mathcal N_i^\kappa},a_{\mathcal N_i^\kappa})
 \right|
 \le
 \frac{2R_{\max}}{1-\gamma}\rho^{\kappa+1}.
 \label{eq:app-localization-error}
\end{equation}
\end{lemma}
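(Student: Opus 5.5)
The plan is to write $\bar Q_i^\pi$ as an average of $Q_i^\pi$ over the far coordinates and then apply Lemma~\ref{lem:app-spatialdecay} pointwise inside the integral. Recall that $\bar Q_i^\pi=\mathscr D_{i,0}$ is defined by Equation~\ref{eq:local-coeff} with $\beta=0$:
\[
\bar Q_i^\pi(z,s_{\mathcal N_i^\kappa},a_{\mathcal N_i^\kappa})
=
\int Q_i^\pi\bigl(z,(s_{\mathcal N_i^\kappa},\tilde s),(a_{\mathcal N_i^\kappa},\tilde a)\bigr)\,d\nu_{i,\mathrm{far}}(\tilde s,\tilde a).
\]
Here $(s_{\mathcal N_i^\kappa},\tilde s)$ denotes the global state that agrees with $s$ on $\mathcal N_i^\kappa$ and equals $\tilde s$ on $\mathcal N_{-i}^\kappa$, and similarly for actions.

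First, since $\nu_{i,\mathrm{far}}$ is a probability measure, we can write $Q_i^\pi(z,s,a)=\int Q_i^\pi(z,s,a)\,d\nu_{i,\mathrm{far}}(\tilde s,\tilde a)$. The difference then becomes a single integral of
\[
Q_i^\pi(z,s,a)-Q_i^\pi\bigl(z,(s_{\mathcal N_i^\kappa},\tilde s),(a_{\mathcal N_i^\kappa},\tilde a)\bigr).
\]
Second, for each fixed $(\tilde s,\tilde a)$, the two arguments agree on all state--action coordinates indexed by $\mathcal N_i^\kappa$. Lemma~\ref{lem:app-spatialdecay} therefore bounds the integrand in absolute value by $\frac{2R_{\max}}{1-\gamma}\rho^{\kappa+1}$, uniformly in $(\tilde s,\tilde a)$. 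Third, we move the absolute value inside the integral (Jensen) and use total mass one, which gives Equation~\ref{eq:app-localization-error}.

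The only technical point is measurability and integrability of the integrand, so that $\bar Q_i^\pi$ is well defined. It is bounded by $R_{\max}/(1-\gamma)$ by Equation~\ref{eq:app-basic-q-bound}. It is jointly measurable as a series of conditional expectations, which I take as implicit in the definition of Equation~\ref{eq:local-coeff}. With that granted, the argument is essentially immediate, and there is no substantive obstacle beyond invoking Lemma~\ref{lem:app-spatialdecay}.
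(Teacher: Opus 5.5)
Your proposal is correct and follows the paper's proof essentially step for step. Like the paper, you express $\bar Q_i^\pi=\mathscr D_{i,0}$ as a $\nu_{i,\mathrm{far}}$-average, write the difference as a single integral of $Q_i^\pi$ evaluated at two arguments that agree on $\mathcal N_i^\kappa$, bound the integrand by Lemma~\ref{lem:app-spatialdecay}, and conclude using total mass one. Your remark on measurability and boundedness of the integrand is a harmless supplement, and the step you call Jensen is just the triangle inequality for integrals.
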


\begin{proof}
Write
\[
 x:=(s_{\mathcal N_i^\kappa},a_{\mathcal N_i^\kappa}),
 \qquad
 y:=(s_{\mathcal N_{-i}^{\kappa}},a_{\mathcal N_{-i}^{\kappa}}).
\]
By definition of $\bar Q_i^\pi=\mathscr D_{i,0}$,
\[
 \bar Q_i^\pi(z,x)
 =
 \int Q_i^\pi(z,x,\widetilde y)\,
 \nu_{i,\mathrm{far}}(d\widetilde y).
\]
Hence
\[
 Q_i^\pi(z,x,y)-\bar Q_i^\pi(z,x)
 =
 \int
 \left[
 Q_i^\pi(z,x,y)-Q_i^\pi(z,x,\widetilde y)
 \right]
 \nu_{i,\mathrm{far}}(d\widetilde y).
\]
For every $\widetilde y$, the two arguments inside the brackets agree on all
state--action coordinates in $\mathcal N_i^\kappa$. Lemma~\ref{lem:app-spatialdecay}
therefore bounds the absolute value of the integrand by
$2R_{\max}(1-\gamma)^{-1}\rho^{\kappa+1}$. Since
$\nu_{i,\mathrm{far}}$ is a probability measure, the same bound holds after
integration.
\end{proof}

This is the same idea as the one used by~\citep{qu2020localized,ren2024spectral,wang2026continuous}.

\begin{lemma}[Localized derivatives]
\label{lem:app-localized-derivatives}
For every multi-index $\beta$ supported in $\mathcal N_i^\kappa$ with
$|\beta|\le n+1$,
\begin{equation}
 \partial_s^\beta \bar Q_i^\pi
 (z,s_{\mathcal N_i^\kappa},a_{\mathcal N_i^\kappa})
 =
 \mathscr D_{i,\beta}
 (z,s_{\mathcal N_i^\kappa},a_{\mathcal N_i^\kappa}),
 \label{eq:app-localized-derivative}
\end{equation}
and
\begin{equation}
 \left|
 \mathscr D_{i,\beta}
 (z,s_{\mathcal N_i^\kappa},a_{\mathcal N_i^\kappa})
 \right|
 \le
 L_Q\rho^{d_i(\beta)}.
 \label{eq:app-G-bound}
\end{equation}
\end{lemma}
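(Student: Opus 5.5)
We will prove the identity by differentiating under the integral sign in the definition of the localized critic, and then deduce the bound from Lemma~\ref{lem:graded-derivatives} via the triangle inequality for integrals. Write $x=(s_{\mathcal N_i^\kappa},a_{\mathcal N_i^\kappa})$ and $y=(s_{\mathcal N_{-i}^{\kappa}},a_{\mathcal N_{-i}^{\kappa}})$. By definition $\bar Q_i^\pi(z,x)=\mathscr D_{i,0}(z,x)=\int Q_i^\pi(z,x,\widetilde y)\,\nu_{i,\mathrm{far}}(d\widetilde y)$. Here $\nu_{i,\mathrm{far}}$ is a fixed probability measure that does not depend on $x$. Since $\beta$ is supported in $\mathcal N_i^\kappa$, the operator $\partial_s^\beta$ only involves coordinates of $x$, so the question is purely whether $\partial_s^\beta$ commutes with this integral.

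\textbf{Step 1 (interchange).} Since $|\beta|\le n+1\le q$, Proposition~\ref{prop:smoothness} gives that $s\mapsto Q_i^\pi(z,s,a)$ is $C^q$. Moreover, Equation~\ref{eq:app-ungraded-derivative-bound} bounds every partial derivative of order at most $q$ uniformly by $L_Q$ over all $(z,s,a)$. We will proceed one coordinate at a time, by induction on $|\beta|$.

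Suppose $\partial^\alpha\bar Q_i^\pi=\int\partial_s^\alpha Q_i^\pi\,d\nu_{i,\mathrm{far}}$ holds for some $\alpha\le\beta$, and let $e$ be one further local coordinate. We form the difference quotient of $\int\partial_s^\alpha Q_i^\pi\,d\nu_{i,\mathrm{far}}$ in direction $e$. By the mean value theorem along the segment, which stays in $\mathcal S$ because the state spaces are convex, the integrand quotient is bounded by $L_Q$ uniformly in $\widetilde y$. It converges pointwise to $\partial_e\partial_s^\alpha Q_i^\pi$. Dominated convergence, with the constant $L_Q$ integrable against the probability measure, then yields the formula for $\alpha+e$.

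Continuity of the resulting integral in $x$ follows by the same dominated convergence argument. This gives Equation~\ref{eq:app-localized-derivative}.

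\textbf{Step 2 (bound).} By Step~1 and Lemma~\ref{lem:graded-derivatives},
\[
|\mathscr D_{i,\beta}(z,x)|\le\int|\partial_s^\beta Q_i^\pi(z,x,\widetilde y)|\,\nu_{i,\mathrm{far}}(d\widetilde y)\le L_Q\rho^{d_i(\beta)}.
\]
This works because the graph reach $d_i(\beta)$ depends only on $\beta$, not on the integration variable, and $\nu_{i,\mathrm{far}}$ has unit mass.

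\textbf{Main obstacle.} The only delicate point is the justification in Step~1. It requires a bound on the derivatives that is uniform in the far variables, and segments that remain inside the domain. Both are supplied by Assumption~\ref{asm:orbit}, through Equation~\ref{eq:app-ungraded-derivative-bound}, and by the convexity of the $\mathcal S_j$.

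The action coordinates in $y$ cause no issue. We never differentiate in actions, and the bounds are uniform over $a$. The marginalization over actions is also harmless, as long as $\widetilde y\mapsto\partial_s^\beta Q_i^\pi(z,x,\widetilde y)$ is measurable. This holds because it is a pointwise limit of difference quotients of the measurable function $Q_i^\pi$.
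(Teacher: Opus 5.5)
Your proposal is correct and follows the paper's route. Both proofs use dominated convergence with a uniform derivative bound to differentiate under the $\nu_{i,\mathrm{far}}$-integral; the paper takes that bound from Lemma~\ref{lem:graded-derivatives}, you take it from Equation~\ref{eq:app-ungraded-derivative-bound}. Both then obtain the estimate by integrating $L_Q\rho^{d_i(\beta)}$ against a probability measure, and your version just spells out the mean-value and convexity step, the continuity, and the measurability that the paper summarizes as ``repeated application of dominated convergence.''
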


\begin{proof}
For every such $\beta$, Lemma~\ref{lem:graded-derivatives} gives
\[
 \left|
 \partial_s^\beta Q_i^\pi(z,s,a)
 \right|
 \le
 L_Q\rho^{d_i(\beta)}
 \le
 L_Q
\]
uniformly in all far coordinates. This uniform bound is integrable with
respect to the probability measure $\nu_{i,\mathrm{far}}$. Repeated
application of dominated convergence therefore permits differentiation under
the integral defining $\bar Q_i^\pi$ and yields
Equation~\ref{eq:app-localized-derivative}. The same bound gives
\[
 \left|\mathscr D_{i,\beta}\right|
 \le
 \int
 \left|\partial_s^\beta Q_i^\pi\right|
 d\nu_{i,\mathrm{far}}
 \le
 L_Q\rho^{d_i(\beta)},
\]
which proves Equation~\ref{eq:app-G-bound}.
\end{proof}

\begin{lemma}[Graph-shell identity]
\label{lem:app-shell}
Let
\[
 h:=s_{\mathcal N_i^\kappa}
 -\mu_{k(s),\mathcal N_i^\kappa}.
\]
For every integer $p\ge0$ and $0\le d\le\kappa$, define
\[
 A_{p,d}
 :=
 \sum_{\substack{|\beta|=p\\ d_i(\beta)\le d}}
 \frac{|h^\beta|}{\beta!},
 \qquad
 A_{p,-1}:=0.
\]
Then
\begin{equation}
 A_{p,d}
 =
 \frac{\|h_{\mathcal N_i^d}\|_1^p}{p!},
 \label{eq:app-shell-cumulative}
\end{equation}
and
\begin{equation}
 \sum_{d=0}^\kappa
 \rho^d(A_{p,d}-A_{p,d-1})
 =
 \rho^\kappa A_{p,\kappa}
 +(1-\rho)
 \sum_{d=0}^{\kappa-1}\rho^dA_{p,d}.
 \label{eq:app-shell-sbp}
\end{equation}
Consequently,
\begin{equation}
 \sum_{\substack{|\beta|=p\\
 \operatorname{supp}_G(\beta)\subseteq\mathcal N_i^\kappa}}
 \rho^{d_i(\beta)}
 \frac{|h^\beta|}{\beta!}
 \le
 \frac{\mathcal R_p}{p!}.
 \label{eq:app-shell-R}
\end{equation}
\end{lemma}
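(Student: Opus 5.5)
The proof has three parts: the multinomial theorem gives the cumulative identity, Abel summation gives the rearrangement, and combining the two with the definition of $W_d$ gives \eqref{eq:app-shell-R}.

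\textbf{Step 1: the identity \eqref{eq:app-shell-cumulative}.} The condition $d_i(\beta)\le d$ means exactly that $\operatorname{supp}_G(\beta)\subseteq\mathcal N_i^d$. So $A_{p,d}$ is the sum over multi-indices of order $p$ that are supported on the coordinates $(j,r)$ with $j\in\mathcal N_i^d$. Apply the multinomial theorem to the vector $(|h_{j,r}|)_{j\in\mathcal N_i^d,\,r\in[d_s]}$:
\[
\Bigl(\sum_{j,r}|h_{j,r}|\Bigr)^p=\sum_{|\beta|=p}\frac{p!}{\beta!}\prod|h_{j,r}|^{\beta_{j,r}}.
\]
Since $|h^\beta|=\prod|h_{j,r}|^{\beta_{j,r}}$, dividing by $p!$ gives $A_{p,d}=\|h_{\mathcal N_i^d}\|_1^p/p!$. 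The case $p=0$ also holds, since both sides equal $1$ (only $\beta=0$ contributes).

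\textbf{Step 2: the rearrangement \eqref{eq:app-shell-sbp}.} This is Abel summation, using $A_{p,-1}=0$. Expand the left side as
\[
\sum_{d=0}^\kappa\rho^dA_{p,d}-\sum_{d=0}^{\kappa-1}\rho^{d+1}A_{p,d}.
\]
Separate the $d=\kappa$ term of the first sum. The remaining terms combine to $\sum_{d=0}^{\kappa-1}(\rho^d-\rho^{d+1})A_{p,d}=(1-\rho)\sum_{d=0}^{\kappa-1}\rho^dA_{p,d}$.

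\textbf{Step 3: the bound \eqref{eq:app-shell-R}.} Group the multi-indices of order $p$ supported in $\mathcal N_i^\kappa$ into shells according to the exact value $d_i(\beta)=d\in\{0,\dots,\kappa\}$. On shell $d$ the weight is $\rho^d$, and the shell's contribution $\sum_{d_i(\beta)=d}|h^\beta|/\beta!$ equals $A_{p,d}-A_{p,d-1}$ by definition. Hence the left side of \eqref{eq:app-shell-R} equals the left side of \eqref{eq:app-shell-sbp}.

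By Step 2 this is a combination of the $A_{p,d}$ with nonnegative coefficients, because $0<\rho<1$. Step 1 gives $A_{p,d}=\|h_{\mathcal N_i^d}\|_1^p/p!$. The definition of $W_d$ gives $\|h_{\mathcal N_i^d}\|_1=\|s_{\mathcal N_i^d}-\mu_{k(s),\mathcal N_i^d}\|_1\le W_d$, and $t\mapsto t^p$ is monotone on $[0,\infty)$, so $A_{p,d}\le W_d^p/p!$. Substituting these bounds yields $\mathcal R_p/p!$.

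The argument is routine. The points that need care are the following:
\begin{itemize}
\item the equivalence between $d_i(\beta)\le d$ and $\operatorname{supp}_G(\beta)\subseteq\mathcal N_i^d$;
\item the $\beta=0$ convention $d_i(0)=0$ when $p=0$;
\item the nonnegativity of the summation-by-parts coefficients, which is what allows the $A_{p,d}$ to be replaced by their upper bounds $W_d^p/p!$.
\end{itemize}
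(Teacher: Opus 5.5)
Your proposal is correct and follows the paper's proof step for step. You apply the multinomial theorem to $(|h_{j,r}|)_{j\in\mathcal N_i^d}$ for the cumulative identity, use summation by parts with $A_{p,-1}=0$, and group monomials into shells, bounding each term with $\|h_{\mathcal N_i^d}\|_1\le W_d$. Your explicit check that the coefficients $\rho^\kappa$ and $(1-\rho)\rho^d$ are nonnegative, which is what allows each $A_{p,d}$ to be replaced by $W_d^p/p!$, makes precise a step the paper leaves implicit.
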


\begin{proof}
The condition $d_i(\beta)\le d$ is equivalent to
$\operatorname{supp}_G(\beta)\subseteq\mathcal N_i^d$. The multinomial
theorem therefore gives
\begin{align*}
 A_{p,d}
 &=
 \sum_{\substack{|\beta|=p\\
 \operatorname{supp}_G(\beta)\subseteq\mathcal N_i^d}}
 \frac{|h^\beta|}{\beta!}\\
 &=
 \frac1{p!}
 \left(
 \sum_{j\in\mathcal N_i^d}
 \sum_{c=1}^{d_s}|h_{j,c}|
 \right)^p
 =
 \frac{\|h_{\mathcal N_i^d}\|_1^p}{p!},
\end{align*}
which proves Equation~\ref{eq:app-shell-cumulative}.

The difference $A_{p,d}-A_{p,d-1}$ is precisely the contribution of
monomials of graph reach $d$. Hence
\begin{align*}
 \sum_{d=0}^\kappa
 \rho^d(A_{p,d}-A_{p,d-1})
 &=
 \sum_{d=0}^\kappa\rho^dA_{p,d}
 -
 \sum_{d=0}^{\kappa-1}\rho^{d+1}A_{p,d}\\
 &=
 \rho^\kappa A_{p,\kappa}
 +(1-\rho)
 \sum_{d=0}^{\kappa-1}\rho^dA_{p,d},
\end{align*}
which is Equation~\ref{eq:app-shell-sbp}. Finally, if
$s\in C_{k(s)}$ (where $C_1,\dots, C_K$ are the fitted clusters), the definition of $W_d$ gives
$\|h_{\mathcal N_i^d}\|_1\le W_d$. Substituting
Equation~\ref{eq:app-shell-cumulative} into Equation~\ref{eq:app-shell-sbp} and applying
these bounds yields
\[
 \sum_{\substack{|\beta|=p\\
 \operatorname{supp}_G(\beta)\subseteq\mathcal N_i^\kappa}}
 \rho^{d_i(\beta)}
 \frac{|h^\beta|}{\beta!}
 \le
 \frac1{p!}
 \left[
 \rho^\kappa W_\kappa^p
 +(1-\rho)
 \sum_{d=0}^{\kappa-1}\rho^dW_d^p
 \right]
 =
 \frac{\mathcal R_p}{p!}.
\]
\end{proof}

\begin{proposition}[Approximation error of the ideal Taylor Centroid critic]
\label{prop:app-tc-error}
Under the assumptions of Lemma~\ref{lem:graded-derivatives},
\begin{equation}
 \sup_{z,s,a}
 \left|
 Q_i^\pi(z,s,a)-\widehat Q_i^\pi(z,s,a)
 \right|
 \le
 \Delta_{\rm TC}
 =
 \frac{2R_{\max}}{1-\gamma}\rho^{\kappa+1}
 +\frac{L_Q}{(n+1)!}\mathcal R_{n+1}.
 \label{eq:app-delta-tc}
\end{equation}
\end{proposition}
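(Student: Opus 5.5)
The plan is to split the error with the triangle inequality through the localized critic $\bar Q_i^\pi=\mathscr D_{i,0}$:
\[
|Q_i^\pi-\widehat Q_i^\pi|
\le
|Q_i^\pi-\bar Q_i^\pi(z,s_{\mathcal N_i^\kappa},a_{\mathcal N_i^\kappa})|
+|\bar Q_i^\pi(z,s_{\mathcal N_i^\kappa},a_{\mathcal N_i^\kappa})-\widehat Q_i^\pi(z,s,a)| .
\]
The first term is bounded directly by Lemma~\ref{lem:app-localization-error}, which gives the localization contribution $\frac{2R_{\max}}{1-\gamma}\rho^{\kappa+1}$. The remaining work is to show that the second term is at most $\frac{L_Q}{(n+1)!}\mathcal R_{n+1}$.

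For the second term, fix $(z,a)$ and $s$, set $\mu:=\mu_{k(s),\mathcal N_i^\kappa}$, and let $h:=s_{\mathcal N_i^\kappa}-\mu$ as in Lemma~\ref{lem:app-shell}. By Lemma~\ref{lem:app-localized-derivatives}, $\bar Q_i^\pi$ is $C^{n+1}$ in the local state with $\partial^\beta\bar Q_i^\pi=\mathscr D_{i,\beta}$ (this uses $n+1\le q$). Hence $\widehat Q_i^\pi(z,s,a)$ is exactly the order-$n$ Taylor polynomial of $\bar Q_i^\pi(z,\cdot,a_{\mathcal N_i^\kappa})$ at $\mu$, evaluated at $s_{\mathcal N_i^\kappa}$. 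Here I use that $\mathcal S$ is a product of convex sets, so it is convex and the segment $[\mu,s_{\mathcal N_i^\kappa}]$ lies in the local state space. The Lagrange form of the multivariate remainder then gives
\[
\bar Q_i^\pi(z,s_{\mathcal N_i^\kappa},a_{\mathcal N_i^\kappa})-\widehat Q_i^\pi(z,s,a)
=\sum_{|\beta|=n+1,\ \operatorname{supp}_G(\beta)\subseteq\mathcal N_i^\kappa}
\frac{\mathscr D_{i,\beta}(z,\xi,a_{\mathcal N_i^\kappa})}{\beta!}h^\beta
\]
for some $\xi=\mu+\theta h$ with $\theta\in(0,1)$. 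I will obtain this from the one-variable Taylor theorem applied to $t\mapsto\bar Q_i^\pi(z,\mu+th,a)$, expanding its $(n+1)$-th derivative by the multinomial formula.

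Next, bound each coefficient by Lemma~\ref{lem:app-localized-derivatives}: $|\mathscr D_{i,\beta}|\le L_Q\rho^{d_i(\beta)}$. The remainder is then at most $L_Q\sum_{|\beta|=n+1}\rho^{d_i(\beta)}|h^\beta|/\beta!$. By \eqref{eq:app-shell-R} of Lemma~\ref{lem:app-shell} with $p=n+1$, this sum is at most $\mathcal R_{n+1}/(n+1)!$. That bound relies on $\|h_{\mathcal N_i^d}\|_1\le W_d$ for every $d\le\kappa$, which holds by the definition of $W_d$ as a supremum over $s$ with the same assignment $k(s)$. Taking the supremum over $(z,s,a)$ and adding the two terms gives $\Delta_{\rm TC}$.

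The main obstacle is making sure the graded decay survives at the intermediate point $\xi$. Lemma~\ref{lem:app-localized-derivatives} is uniform in the local arguments, so this is fine, but I will also check that the remainder is expanded in monomials $h^\beta$ rather than in the norm, since a norm bound would lose the shell structure. This also requires $\operatorname{supp}_G(\beta)\subseteq\mathcal N_i^\kappa$, which is automatic because $\bar Q_i^\pi$ depends only on local coordinates. A minor point is that $k(s)$ is piecewise constant, so I will carry out the whole argument pointwise with $k(s)$ frozen; no smoothness in $s$ across cluster boundaries is needed.
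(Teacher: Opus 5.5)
Your proposal is correct and follows the paper's proof. Both arguments split the error through $\bar Q_i^\pi$, bound the first term with Lemma~\ref{lem:app-localization-error}, use Lemma~\ref{lem:app-localized-derivatives} and convexity to identify $\widehat Q_i^\pi$ as the order-$n$ Taylor polynomial of $\bar Q_i^\pi$ at the centroid, and finish with the graded bound $L_Q\rho^{d_i(\beta)}$ and Lemma~\ref{lem:app-shell}. The only cosmetic difference is that you use the Lagrange form of the remainder at an intermediate point $\xi$, while the paper uses the integral form with $p\int_0^1(1-u)^{p-1}\,du=1$; since $\bar Q_i^\pi$ is scalar-valued and the derivative bounds are uniform in the local arguments, both give the same bound.
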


\begin{proof}
Decompose
\[
 Q_i^\pi-\widehat Q_i^\pi
 =
 \bigl(Q_i^\pi-\bar Q_i^\pi\bigr)
 +
 \bigl(\bar Q_i^\pi-\widehat Q_i^\pi\bigr).
\]
The first term is bounded by Lemma~\ref{lem:app-localization-error}.

For the second term, fix $(z,s,a)$ and set
\[
 \mu:=\mu_{k(s),\mathcal N_i^\kappa},
 \qquad
 h:=s_{\mathcal N_i^\kappa}-\mu,
 \qquad
 p:=n+1.
\]
By Lemma~\ref{lem:app-localized-derivatives},
Equation~\ref{eq:ideal-tc} is exactly the order-$n$ Taylor polynomial of
$\bar Q_i^\pi$ in the local state variables at $\mu$. Since the state spaces
are convex, the line segment $\mu+uh$, $u\in[0,1]$, remains in the state
domain. The multivariate Taylor formula with integral remainder therefore
gives
\begin{align*}
 &\bar Q_i^\pi
 (z,s_{\mathcal N_i^\kappa},a_{\mathcal N_i^\kappa})
 -
 \widehat Q_i^\pi(z,s,a)\\
 &\quad=
 p
 \sum_{\substack{|\beta|=p\\
 \operatorname{supp}_G(\beta)\subseteq\mathcal N_i^\kappa}}
 \frac{h^\beta}{\beta!}
 \int_0^1
 (1-u)^{p-1}
 \partial_s^\beta\bar Q_i^\pi
 (z,\mu+uh,a_{\mathcal N_i^\kappa})\,du.
\end{align*}
Using Equation~\ref{eq:app-G-bound} and
$p\int_0^1(1-u)^{p-1}\,du=1$,
\begin{align*}
 \left|\bar Q_i^\pi-\widehat Q_i^\pi\right|
 &\le
 L_Q
 \sum_{\substack{|\beta|=p\\
 \operatorname{supp}_G(\beta)\subseteq\mathcal N_i^\kappa}}
 \rho^{d_i(\beta)}
 \frac{|h^\beta|}{\beta!}\\
 &\le
 \frac{L_Q}{(n+1)!}\mathcal R_{n+1},
\end{align*}
where the last inequality is Lemma~\ref{lem:app-shell}. Adding the
localization error proves Equation~\ref{eq:app-delta-tc}.
\end{proof}

\begin{corollary}[Distribution-weighted Taylor Centroid error]
\label{cor:distributional-tc}
Let $\nu$ be any distribution over states and put $p=n+1$. Define
\[
\mathcal R_p(s)
:=
\rho^\kappa
\left\|
s_{\mathcal N_i^\kappa}
-\mu_{k(s),\mathcal N_i^\kappa}
\right\|_1^p
+
(1-\rho)
\sum_{d=0}^{\kappa-1}
\rho^d
\left\|
s_{\mathcal N_i^d}
-\mu_{k(s),\mathcal N_i^d}
\right\|_1^p .
\]
Then, for every $z\in\mathcal Z$ and $a\in\mathcal A$,
\[
\mathbb E_{S\sim\nu}
\left[
\left|
Q_i^\pi(z,S,a)-\widehat Q_i^\pi(z,S,a)
\right|
\right]
\le
\frac{2R_{\max}}{1-\gamma}\rho^{\kappa+1}
+
\frac{L_Q}{(n+1)!}
\mathbb E_{S\sim\nu}
\!\left[
\mathcal R_{n+1}(S)
\right].
\]
Moreover, if, for every $0\le d\le\kappa$,
\[
\left\|
S_{\mathcal N_i^d}
-\mu_{k(S),\mathcal N_i^d}
\right\|_1
\le D_d
\qquad\text{almost surely},
\]
then
\[
\begin{aligned}
\mathbb E[\mathcal R_p(S)]
\le\;&
\rho^\kappa D_\kappa^{p-1}
\mathbb E
\left\|
S_{\mathcal N_i^\kappa}
-\mu_{k(S),\mathcal N_i^\kappa}
\right\|_1
\\
&+
(1-\rho)
\sum_{d=0}^{\kappa-1}
\rho^d D_d^{p-1}
\mathbb E
\left\|
S_{\mathcal N_i^d}
-\mu_{k(S),\mathcal N_i^d}
\right\|_1 .
\end{aligned}
\]
\end{corollary}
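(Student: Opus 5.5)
The plan is to redo the proof of Proposition~\ref{prop:app-tc-error} pointwise in $s$ and only integrate at the end. Fix $z\in\mathcal Z$ and $a\in\mathcal A$, and decompose $Q_i^\pi-\widehat Q_i^\pi=(Q_i^\pi-\bar Q_i^\pi)+(\bar Q_i^\pi-\widehat Q_i^\pi)$.

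The localization term is handled by Lemma~\ref{lem:app-localization-error}. It is bounded by $\frac{2R_{\max}}{1-\gamma}\rho^{\kappa+1}$ uniformly in $s$, so its $\nu$-expectation obeys the same bound.

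For the Taylor term, I fix $s$ and set $h=s_{\mathcal N_i^\kappa}-\mu_{k(s),\mathcal N_i^\kappa}$. The integral-remainder argument of Proposition~\ref{prop:app-tc-error}, combined with Lemma~\ref{lem:app-localized-derivatives}, gives
\[
\bigl|\bar Q_i^\pi-\widehat Q_i^\pi\bigr|\le L_Q\sum_{|\beta|=p,\ \operatorname{supp}_G(\beta)\subseteq\mathcal N_i^\kappa}\rho^{d_i(\beta)}\frac{|h^\beta|}{\beta!}.
\]
I then apply Lemma~\ref{lem:app-shell} without the final substitution of the worst-case radii $W_d$. Equations~\ref{eq:app-shell-cumulative} and~\ref{eq:app-shell-sbp} express this sum exactly as
\[
\frac1{p!}\Bigl[\rho^\kappa\|h_{\mathcal N_i^\kappa}\|_1^p+(1-\rho)\sum_{d<\kappa}\rho^d\|h_{\mathcal N_i^d}\|_1^p\Bigr]=\frac{\mathcal R_p(s)}{p!}.
\]
Hence the pointwise bound is $|Q_i^\pi-\widehat Q_i^\pi|(z,s,a)\le \frac{2R_{\max}}{1-\gamma}\rho^{\kappa+1}+\frac{L_Q}{(n+1)!}\mathcal R_{n+1}(s)$. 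Taking the expectation under $S\sim\nu$ yields the first claim.

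Two measurability points need checking. First, $s\mapsto\mathcal R_p(s)$ is measurable: $k(s)$ is a nearest-centroid assignment with some fixed tie-breaking rule, so it is piecewise constant on measurable cells. Second, the error integrand is measurable; this follows because $Q_i^\pi$ is continuous in $s$ by Proposition~\ref{prop:smoothness} and $\widehat Q_i^\pi$ is piecewise polynomial.

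For the second claim, I bound each power termwise. Almost surely,
\[
\|S_{\mathcal N_i^d}-\mu_{k(S),\mathcal N_i^d}\|_1^p\le D_d^{p-1}\,\|S_{\mathcal N_i^d}-\mu_{k(S),\mathcal N_i^d}\|_1 .
\]
Taking expectations and using linearity with the nonnegative weights $\rho^\kappa$ and $(1-\rho)\rho^d$ gives the stated inequality.

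No step seems genuinely hard. The only point requiring care is that the shell identity must be used as an \emph{equality} at fixed $s$, rather than through the sup-based $W_d$, so that $\mathcal R_p(s)$ appears exactly.
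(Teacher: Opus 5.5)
Your proposal is correct and follows the paper's proof: the paper likewise reruns the argument of Theorem~\ref{thm:trunc} at fixed $s$, keeping the state-dependent distances $\|s_{\mathcal N_i^d}-\mu_{k(s),\mathcal N_i^d}\|_1$ in place of the radii $W_d$, so the shell identity yields exactly $\mathcal R_p(s)/p!$. It then takes expectations and obtains the second bound from $x^p\le D_d^{p-1}x$ for $0\le x\le D_d$. Your measurability remarks are extra care that the paper omits, and they do not change the argument.
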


\begin{proof}
Repeat the proof of Theorem~\ref{thm:trunc} without replacing
the state-dependent distances to the assigned centroid by their uniform
bounds $W_d$, and then take expectations. The second inequality follows from
$x^p\le D_d^{p-1}x$ for $0\le x\le D_d$.
\end{proof}

The K-medians objective used by Taylor Centroid therefore reduces
a valid upper bound on the average Taylor remainder under the policy-induced
state distribution: it aims to minimize the Taylor representation error \emph{on average}.

\begin{proposition}[Approximation by the affine coefficient map]
\label{prop:app-lin-error}
Let $\mathcal E_i^{(\kappa)}$ be as defined in
Theorem~\ref{thm:ltc-finite}. For every $\zeta>0$, there exists an LTC parameter
$w_i^{(\zeta)}\in\mathbb R^{p_i}$ such that
\begin{equation}
 \sup_{z,s,a}
 \left|
 \widehat Q_i^\pi(z,s,a)
 -
 \widetilde Q_i^{\mathrm{LTC}}
 (h_i(z),s,a_{\mathcal N_i^\kappa};w_i^{(\zeta)})
 \right|
 \le
 \mathcal E_i^{(\kappa)}+\zeta.
 \label{eq:app-lin-zeta}
\end{equation}
In particular, $\bar\varepsilon_{\rm lin}\le L_Q$.
\end{proposition}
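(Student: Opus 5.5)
The plan is to build $w_i^{(\zeta)}$ coefficient by coefficient. By the definition of $\bar\varepsilon_{\rm lin}$ as a maximum over $\beta\in\mathcal M_{\mathcal N_i^\kappa}^n$ of normalized infima, for each $\beta$ and each $\zeta'>0$ there is $w_{i,\beta}$ with
\[
\sup_{z,k,a}\bigl|\mathscr D_{i,\beta}(z,\mu_{k,\mathcal N_i^\kappa},a_{\mathcal N_i^\kappa})-w_{i,\beta}^\top\psi_i(k,h_i(z),a_{\mathcal N_i^\kappa})\bigr|\le \rho^{d_i(\beta)}\bigl(\bar\varepsilon_{\rm lin}+\zeta'\bigr).
\]
Stacking these $w_{i,\beta}$ gives $w_i^{(\zeta)}$, since $\Phi_i=\psi_i\otimes T_i$ makes $\widetilde Q_i^{\rm LTC}$ exactly $\sum_\beta (\psi_i^\top w_{i,\beta})\,T_{i,\beta}(s)$. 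A subtlety is that $\mathscr D_{i,\beta}$ depends on $z$ while $\psi_i$ only sees $h_i(z)$. This is already absorbed in the definition of $\bar\varepsilon_{\rm lin}$, where the supremum runs over $z$ with $\psi_i$ evaluated at $h_i(z)$. The approximation is therefore a legitimate uniform bound.

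Next, fix $(z,s,a)$, set $k=k(s)$ and $h=s_{\mathcal N_i^\kappa}-\mu_{k,\mathcal N_i^\kappa}$, and subtract the two polynomials in \eqref{eq:ideal-tc} termwise. The triangle inequality gives
\[
\bigl|\widehat Q_i^\pi-\widetilde Q_i^{\rm LTC}\bigr|\le(\bar\varepsilon_{\rm lin}+\zeta')\sum_{p=0}^n\sum_{\substack{|\beta|=p\\ \operatorname{supp}_G(\beta)\subseteq\mathcal N_i^\kappa}}\rho^{d_i(\beta)}\frac{|h^\beta|}{\beta!}.
\]
I will then group by order $p$ and apply \eqref{eq:app-shell-R} of Lemma~\ref{lem:app-shell}, which bounds the inner sum by $\mathcal R_p/p!$. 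This is valid for every $p\ge0$; for $p=0$ it reads $\mathcal R_0=\rho^\kappa+(1-\rho)\sum_{d<\kappa}\rho^d=1$, consistent with the single $\beta=0$ term. The result is $(\bar\varepsilon_{\rm lin}+\zeta')\sum_{p=0}^n\mathcal R_p/p!=\mathcal E_i^{(\kappa)}+\zeta'\sum_p\mathcal R_p/p!$. Choosing $\zeta'=\zeta/\sum_{p\le n}\mathcal R_p/p!$ (the sum is at least $\mathcal R_0=1>0$) yields \eqref{eq:app-lin-zeta}.

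For the last claim, $\bar\varepsilon_{\rm lin}\le L_Q$, take $w=0$ for every $\beta$. Lemma~\ref{lem:app-localized-derivatives} gives $|\mathscr D_{i,\beta}|\le L_Q\rho^{d_i(\beta)}$ uniformly, so each normalized infimum is at most $L_Q$.

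The only delicate point is bookkeeping. The normalization $\rho^{-d_i(\beta)}$ in $\bar\varepsilon_{\rm lin}$ must match exactly the weights $\rho^{d_i(\beta)}$ consumed by the graph-shell identity, and the $\zeta$ slack is needed because the infimum might not be attained. Once that matching is checked, the argument is routine.
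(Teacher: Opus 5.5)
Your proposal is correct and follows essentially the same route as the paper's proof: near-optimal affine fits for each coefficient with slack proportional to $\rho^{d_i(\beta)}$, stacking them into $w_i^{(\zeta)}$, a termwise triangle inequality, and the graph-shell bound of Lemma~\ref{lem:app-shell} applied degree by degree. The claim $\bar\varepsilon_{\rm lin}\le L_Q$ likewise comes from the zero vector combined with Lemma~\ref{lem:app-localized-derivatives}. Your explicit check that $\mathcal R_0=1$, which makes the $p=0$ term consistent and keeps the normalizing sum positive, is a detail the paper leaves implicit.
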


\begin{proof}
For
$\beta\in\mathcal M_{\mathcal N_i^\kappa}^n$, let
\[
 \varepsilon_{\rm lin}(\beta)
 :=
 \inf_{v\in\mathbb R^{d_\psi^i}}
 \sup_{z,k,a}
 \left|
 \mathscr D_{i,\beta}
 (z,\mu_{k,\mathcal N_i^\kappa},a_{\mathcal N_i^\kappa})
 -
 v^\top
 \psi_i(k,h_i(z),a_{\mathcal N_i^\kappa})
 \right|.
\]
By Lemma~\ref{lem:app-localized-derivatives},
\[
 \sup_{z,k,a}
 \left|
 \mathscr D_{i,\beta}
 (z,\mu_{k,\mathcal N_i^\kappa},a_{\mathcal N_i^\kappa})
 \right|
 \le
 L_Q\rho^{d_i(\beta)}.
\]
The zero vector is an admissible affine coefficient, hence
\[
 \varepsilon_{\rm lin}(\beta)
 \le
 L_Q\rho^{d_i(\beta)}.
\]
Multiplying by $\rho^{-d_i(\beta)}$ and taking the maximum over $\beta$
proves $\bar\varepsilon_{\rm lin}\le L_Q$.

Now fix $\zeta>0$. Because
$\mathcal M_{\mathcal N_i^\kappa}^n$ is finite, for every $\varepsilon>0$
and every $\beta$ one can choose
$v_{i,\beta}^{(\varepsilon)}$ such that
\begin{align*}
 &\sup_{z,k,a}
 \left|
 \mathscr D_{i,\beta}
 (z,\mu_{k,\mathcal N_i^\kappa},a_{\mathcal N_i^\kappa})
 -
 (v_{i,\beta}^{(\varepsilon)})^\top
 \psi_i(k,h_i(z),a_{\mathcal N_i^\kappa})
 \right|\\
 &\qquad\le
 \varepsilon_{\rm lin}(\beta)
 +
 \varepsilon\rho^{d_i(\beta)}
 \le
 (\bar\varepsilon_{\rm lin}+\varepsilon)
 \rho^{d_i(\beta)}.
\end{align*}
Stack these coefficient vectors into
$w_i^{(\varepsilon)}\in\mathbb R^{p_i}$. With
$h=s_{\mathcal N_i^\kappa}
-\mu_{k(s),\mathcal N_i^\kappa}$, the definitions of LTC and
$\widehat Q_i^\pi$ imply
\begin{align*}
 &\left|
 \widehat Q_i^\pi(z,s,a)
 -
 \widetilde Q_i^{\mathrm{LTC}}
 (h_i(z),s,a_{\mathcal N_i^\kappa};w_i^{(\varepsilon)})
 \right|\\
 &\qquad\le
 (\bar\varepsilon_{\rm lin}+\varepsilon)
 \sum_{\beta\in\mathcal M_{\mathcal N_i^\kappa}^n}
 \rho^{d_i(\beta)}
 \frac{|h^\beta|}{\beta!}\\
 &\qquad\le
 (\bar\varepsilon_{\rm lin}+\varepsilon)
 \sum_{p=0}^n\frac{\mathcal R_p}{p!},
\end{align*}
where the second inequality follows by applying
Lemma~\ref{lem:app-shell} degree by degree. Since
\[
 \mathcal E_i^{(\kappa)}
 =
 \bar\varepsilon_{\rm lin}
 \sum_{p=0}^n\frac{\mathcal R_p}{p!},
\]
choosing $\varepsilon>0$ so that
\[
 \varepsilon
 \sum_{p=0}^n\frac{\mathcal R_p}{p!}
 \le
 \zeta
\]
gives Equation~\ref{eq:app-lin-zeta}.
\end{proof}

\begin{lemma}[Feature and critic envelopes]
\label{lem:app-feature-envelope}
Let $\mathcal G_i^{(\kappa)}$ and $\mathcal V_i^{(\kappa)}$ be as defined in
Theorem~\ref{thm:ltc-finite}. For every state $s$,
\begin{equation}
 \|T_i(s)\|_2^2
 \le
 \sum_{p=0}^n\frac{W_\kappa^{2p}}{p!},
 \qquad
 \|\Phi_i\|^2
 \le
 \mathcal G_i^{(\kappa)}.
 \label{eq:app-feature-bound}
\end{equation}
Moreover,
\begin{equation}
 \sup_{z,s,a}
 |\widehat Q_i^\pi(z,s,a)|
 \le
 \mathcal V_i^{(\kappa)},
 \label{eq:app-ideal-critic-envelope}
\end{equation}
and the approximant in Proposition~\ref{prop:app-lin-error} may be chosen so
that
\begin{equation}
 \left\|
 \widetilde Q_i^{\mathrm{LTC}}
 (\cdot;w_i^{(\zeta)})
 \right\|_\infty
 \le
 \mathcal V_i^{(\kappa)}+\mathcal E_i^{(\kappa)}+\zeta.
 \label{eq:app-approx-critic-envelope}
\end{equation}
\end{lemma}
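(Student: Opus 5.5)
We bound each of the three claims of Lemma~\ref{lem:app-feature-envelope} separately, working throughout with $h:=s_{\mathcal N_i^\kappa}-\mu_{k(s),\mathcal N_i^\kappa}$, which satisfies $\|h\|_1\le W_\kappa$ by definition of $W_\kappa$.

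\textbf{Feature bound.} Group the entries of $T_i(s)$ by degree $p=|\beta|$. Since $\beta!\ge1$, each entry obeys $(h^\beta/\beta!)^2\le |h^\beta|^2/\beta!$. Then
\[
\sum_{|\beta|=p}\frac{|h^\beta|^2}{\beta!}=\frac{\|h\|_2^{2p}}{p!},
\]
by the multinomial theorem applied to the squared coordinates $h_{j,c}^2$. Combined with $\|h\|_2\le\|h\|_1\le W_\kappa$, this gives the first inequality after summing over $p\le n$. For $\Phi_i=\psi_i\otimes T_i$, use $\|\psi_i\otimes T_i\|=\|\psi_i\|\,\|T_i\|\le R_\psi\|T_i\|$. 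Squaring yields $\|\Phi_i\|^2\le\mathcal G_i^{(\kappa)}$.

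\textbf{Ideal critic envelope.} Write $\widehat Q_i^\pi$ from Equation~\ref{eq:ideal-tc} as a sum over degrees $p=0,\dots,n$.

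For $p=0$, the single term is $\mathscr D_{i,0}=\bar Q_i^\pi$. It is an average of $Q_i^\pi$ against a probability measure, so it is bounded by $R_{\max}/(1-\gamma)$ via Equation~\ref{eq:app-basic-q-bound}. Using this bound instead of $L_Q\rho^0$ is what makes the constant term of $\mathcal V_i^{(\kappa)}$ equal to $R_{\max}/(1-\gamma)$.

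For $p\ge1$, Lemma~\ref{lem:app-localized-derivatives} gives $|\mathscr D_{i,\beta}|\le L_Q\rho^{d_i(\beta)}$. Then Equation~\ref{eq:app-shell-R} of Lemma~\ref{lem:app-shell} bounds the degree-$p$ contribution by $L_Q\mathcal R_p/p!$. Summing over degrees gives exactly $\mathcal V_i^{(\kappa)}$.

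\textbf{Approximant envelope.} Take the $w_i^{(\zeta)}$ built in Proposition~\ref{prop:app-lin-error}. Its proof already shows $\|\widehat Q_i^\pi-\widetilde Q_i^{\rm LTC}(\cdot;w_i^{(\zeta)})\|_\infty\le\mathcal E_i^{(\kappa)}+\zeta$. The triangle inequality with the ideal envelope then gives the last claim.

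The main obstacle is only bookkeeping. One must keep the degree-$0$ term at the sharper value $R_{\max}/(1-\gamma)$ rather than $L_Q$, and use the $\ell_2$ multinomial identity, not the $\ell_1$ one, for $T_i$. Also, the factor $1/\beta!$ is squared inside $\|T_i\|^2$, so the inequality $(\beta!)^{-2}\le(\beta!)^{-1}$ is what yields the stated $1/p!$ rather than a $1/(p!)^2$ form.
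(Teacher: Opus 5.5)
Your proposal is correct and follows the paper's proof essentially step for step. The paper likewise bounds each degree-$p$ block of $\|T_i(s)\|_2^2$ using $(\beta!)^{-2}\le(\beta!)^{-1}$ and the multinomial identity in the squared coordinates, then uses multiplicativity of the norm under Kronecker products, bounds the degree-zero coefficient by $R_{\max}/(1-\gamma)$ and the positive degrees via Lemma~\ref{lem:app-localized-derivatives} together with Lemma~\ref{lem:app-shell}, and gets the approximant envelope from the triangle inequality with Proposition~\ref{prop:app-lin-error}. One minor point: the bound $\mathcal E_i^{(\kappa)}+\zeta$ is the statement of that proposition itself, not only something established in its proof.
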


\begin{proof}
Let
$h=s_{\mathcal N_i^\kappa}
-\mu_{k(s),\mathcal N_i^\kappa}$. For every $p\le n$,
\begin{align*}
 \sum_{|\beta|=p}
 \frac{|h^\beta|^2}{(\beta!)^2}
 &\le
 \sum_{|\beta|=p}
 \frac{(|h|^2)^\beta}{\beta!}\\
 &=
 \frac{\|h\|_2^{2p}}{p!}
 \le
 \frac{\|h\|_1^{2p}}{p!}
 \le
 \frac{W_\kappa^{2p}}{p!}.
\end{align*}
Summing over $p=0,\ldots,n$ proves the first inequality in
Equation~\ref{eq:app-feature-bound}. Since the Euclidean norm is multiplicative
under Kronecker products,
\[
 \|\Phi_i\|^2
 =
 \|\psi_i\|^2\,\|T_i\|_2^2
 \le
 R_\psi^2
 \sum_{p=0}^n
 \frac{W_\kappa^{2p}}{p!}
 =
 \mathcal G_i^{(\kappa)}.
\]

For the ideal critic, the degree-zero coefficient satisfies
$|\bar Q_i^\pi|\le R_{\max}/(1-\gamma)$ by
Equation~\ref{eq:app-basic-q-bound}. For the positive-degree terms,
Lemma~\ref{lem:app-localized-derivatives} and
Lemma~\ref{lem:app-shell} give
\begin{align*}
 |\widehat Q_i^\pi|
 &\le
 \frac{R_{\max}}{1-\gamma}
 +
 L_Q
 \sum_{p=1}^n
 \sum_{\substack{|\beta|=p\\
 \operatorname{supp}_G(\beta)\subseteq\mathcal N_i^\kappa}}
 \rho^{d_i(\beta)}
 \frac{|h^\beta|}{\beta!}\\
 &\le
 \frac{R_{\max}}{1-\gamma}
 +
 L_Q
 \sum_{p=1}^n
 \frac{\mathcal R_p}{p!}
 =
 \mathcal V_i^{(\kappa)}.
\end{align*}
Finally, Equation~\ref{eq:app-approx-critic-envelope} follows from
Equation~\ref{eq:app-ideal-critic-envelope},
Equation~\ref{eq:app-lin-zeta}, and the triangle inequality.
\end{proof}

\subsection{Finite-sample analysis of LTC--LSTD}
\label{app:lstd-proof}

For rollout $j\in[M_c]$ and $0\le t\le T-2$, write
\[
 \Phi_i^{j,t}
 :=
 \Phi_i\!\left(
 h_i(Z_t^{(j)}),S_t^{(j)},
 A_{t,\mathcal N_i^\kappa}^{(j)}
 \right),
 \qquad
 n_c:=M_c(T-1).
\]
The empirical LSTD system is
\begin{equation}
 \widehat A_i\widehat w_i=\widehat b_i,
 \label{eq:app-lstd-system}
\end{equation}
with
\begin{equation}
 \widehat A_i
 =
 \frac1{n_c}
 \sum_{j=1}^{M_c}
 \sum_{t=0}^{T-2}
 \Phi_i^{j,t}
 \left(
 \Phi_i^{j,t}
 -\gamma\Phi_i^{j,t+1}
 \right)^\top,
 \qquad
 \widehat b_i
 =
 \frac1{n_c}
 \sum_{j=1}^{M_c}
 \sum_{t=0}^{T-2}
 \Phi_i^{j,t}r_i^{j,t}.
 \label{eq:app-lstd-matrices}
\end{equation}

\paragraph{Proof of Theorem~\ref{thm:ltc-finite}.}
\begin{proof}
Fix an agent $i$ and condition as described at the beginning of this section.
The policy, centroids, cluster assignment, and feature map are therefore fixed.

Fix $\zeta>0$ and let $w_i^{(\zeta)}$ be supplied by
Proposition~\ref{prop:app-lin-error}. Define
\[
 f^{(\zeta)}
 (y_i,s,a_{\mathcal N_i^\kappa})
 :=
 \widetilde Q_i^{\mathrm{LTC}}
 (y_i,s,a_{\mathcal N_i^\kappa};w_i^{(\zeta)})
\]
and abbreviate
\[
 \Delta_\zeta
 :=
 \Delta_{\rm TC}+\mathcal E_i^{(\kappa)}+\zeta,
 \qquad
 \mathcal V_\zeta
 :=
 \mathcal V_i^{(\kappa)}+\mathcal E_i^{(\kappa)}+\zeta.
\]
Propositions~\ref{prop:app-tc-error} and~\ref{prop:app-lin-error} imply
\begin{equation}
 \sup_{z,s,a}
 \left|
 Q_i^\pi(z,s,a)
 -
 f^{(\zeta)}
 (h_i(z),s,a_{\mathcal N_i^\kappa})
 \right|
 \le
 \Delta_\zeta.
 \label{eq:app-best-class-error}
\end{equation}

\paragraph{Step 1: reduction to the empirical LSTD residual.}
The triangle inequality gives
\begin{align}
 &\sup_{z,s,a}
 \left|
 Q_i^\pi(z,s,a)
 -
 \widetilde Q_i^{\mathrm{LTC}}
 (h_i(z),s,a_{\mathcal N_i^\kappa};\widehat w_i)
 \right|
 \nonumber\\
 &\qquad\le
 \Delta_\zeta
 +
 \sup_{z,s,a}
 \left|
 f^{(\zeta)}
 (h_i(z),s,a_{\mathcal N_i^\kappa})
 -
 \widetilde Q_i^{\mathrm{LTC}}
 (h_i(z),s,a_{\mathcal N_i^\kappa};\widehat w_i)
 \right|.
 \label{eq:app-lstd-triangle}
\end{align}
Both functions in the second term belong to the same linear feature class.
By Lemma~\ref{lem:app-feature-envelope},
\[
 \sup_{z,s,a}
 \left|
 f^{(\zeta)}
 -
 \widetilde Q_i^{\mathrm{LTC}}(\cdot;\widehat w_i)
 \right|
 \le
 \sqrt{\mathcal G_i^{(\kappa)}}
 \left\|
 w_i^{(\zeta)}-\widehat w_i
 \right\|.
\]
On the event
$\sigma_{\min}(\widehat A_i)\ge c_{\mathrm{LSTD}}>0$,
$\widehat A_i$ is invertible and
\begin{align}
 \left\|
 w_i^{(\zeta)}-\widehat w_i
 \right\|
 &=
 \left\|
 \widehat A_i^{-1}
 \left(
 \widehat A_iw_i^{(\zeta)}-\widehat b_i
 \right)
 \right\|
 \nonumber\\
 &\le
 \frac1{c_{\mathrm{LSTD}}}
 \left\|
 \widehat A_iw_i^{(\zeta)}-\widehat b_i
 \right\|.
 \label{eq:app-parameter-residual}
\end{align}
Consequently,
\begin{equation}
 \sup_{z,s,a}
 \left|
 f^{(\zeta)}
 -
 \widetilde Q_i^{\mathrm{LTC}}(\cdot;\widehat w_i)
 \right|
 \le
 \frac{\sqrt{\mathcal G_i^{(\kappa)}}}{c_{\mathrm{LSTD}}}
 \left\|
 \widehat A_iw_i^{(\zeta)}-\widehat b_i
 \right\|.
 \label{eq:app-function-residual}
\end{equation}

\paragraph{Step 2: Bellman bias and martingale fluctuation.}
For each $(j,t)$ set
\[
 f_{j,t}^{(\zeta)}
 :=
 f^{(\zeta)}
 \left(
 h_i(Z_t^{(j)}),S_t^{(j)},
 A_{t,\mathcal N_i^\kappa}^{(j)}
 \right).
\]
By Equation~\ref{eq:app-lstd-matrices},
\begin{equation}
 \widehat A_iw_i^{(\zeta)}-\widehat b_i
 =
 \frac1{n_c}
 \sum_{j=1}^{M_c}
 \sum_{t=0}^{T-2}
 \Phi_i^{j,t}
 \left[
 f_{j,t}^{(\zeta)}
 -
 r_i^{j,t}
 -
 \gamma f_{j,t+1}^{(\zeta)}
 \right].
 \label{eq:app-lstd-raw-residual}
\end{equation}
Let
\[
 \widetilde f^{(\zeta)}(z,s,a)
 :=
 f^{(\zeta)}
 (h_i(z),s,a_{\mathcal N_i^\kappa})
\]
and define the local Bellman evaluation operator
\[
 (\mathcal T_i^\pi u)(z,s,a)
 :=
 r_i(s_i,a_i)
 +
 \gamma
 \mathbb E_\pi
 \left[
 u(Z_1,S_1,A_1)
 \,\middle|\,
 Z_0=z,S_0=s,A_0=a
 \right].
\]
Adding and subtracting the one-step conditional mean of
$f_{j,t+1}^{(\zeta)}$ decomposes
\[
 \widehat A_iw_i^{(\zeta)}-\widehat b_i
 =
 R_{\rm Bell}^{(\zeta)}
 +
 R_{\rm MG}^{(\zeta)},
\]
where
\begin{align*}
 R_{\rm Bell}^{(\zeta)}
 &:=
 \frac1{n_c}
 \sum_{j,t}
 \Phi_i^{j,t}
 \left[
 f_{j,t}^{(\zeta)}
 -
 r_i^{j,t}
 -
 \gamma
 \mathbb E
 \left[
 f_{j,t+1}^{(\zeta)}
 \mid
 Z_t^{(j)},S_t^{(j)},A_t^{(j)}
 \right]
 \right],\\
 R_{\rm MG}^{(\zeta)}
 &:=
 \frac{\gamma}{n_c}
 \sum_{j,t}
 \Phi_i^{j,t}
 \left[
 \mathbb E
 \left[
 f_{j,t+1}^{(\zeta)}
 \mid
 Z_t^{(j)},S_t^{(j)},A_t^{(j)}
 \right]
 -
 f_{j,t+1}^{(\zeta)}
 \right].
\end{align*}
Because $Q_i^\pi=\mathcal T_i^\pi Q_i^\pi$ and conditional expectation is a
contraction in the sup norm,
\begin{align*}
 \left\|
 \widetilde f^{(\zeta)}
 -
 \mathcal T_i^\pi\widetilde f^{(\zeta)}
 \right\|_\infty
 &\le
 \left\|
 \widetilde f^{(\zeta)}-Q_i^\pi
 \right\|_\infty
 +
 \left\|
 \mathcal T_i^\pi\widetilde f^{(\zeta)}
 -
 \mathcal T_i^\pi Q_i^\pi
 \right\|_\infty\\
 &\le
 (1+\gamma)\Delta_\zeta.
\end{align*}
Using $\|\Phi_i^{j,t}\|\le\sqrt{\mathcal G_i^{(\kappa)}}$,
\begin{equation}
 \left\|
 R_{\rm Bell}^{(\zeta)}
 \right\|
 \le
 \sqrt{\mathcal G_i^{(\kappa)}}(1+\gamma)\Delta_\zeta.
 \label{eq:app-rbell-bound}
\end{equation}

For the stochastic term, define
\[
 Z_{j,t}^{(\zeta)}
 :=
 \gamma\Phi_i^{j,t}
 \left[
 \mathbb E
 \left[
 f_{j,t+1}^{(\zeta)}
 \mid
 Z_t^{(j)},S_t^{(j)},A_t^{(j)}
 \right]
 -
 f_{j,t+1}^{(\zeta)}
 \right].
\]
Order the pairs $(j,t)$ by revealing each rollout sequentially and then moving
to the next independent rollout. Relative to the resulting natural
filtration, $Z_{j,t}^{(\zeta)}$ is a martingale difference: the factor
$\Phi_i^{j,t}$ is measurable at the current step and the term in brackets has
conditional mean zero. Moreover, by
Equation~\ref{eq:app-approx-critic-envelope},
\[
 \left\|
 Z_{j,t}^{(\zeta)}
 \right\|
 \le
 2\gamma\sqrt{\mathcal G_i^{(\kappa)}}\,\mathcal V_\zeta.
\]
The Hilbert-space Azuma--Hoeffding inequality~\citep{pinelis1994optimum} therefore yields, for this
fixed agent, with probability at least $1-\delta/N$,
\begin{equation}
 \left\|
 R_{\rm MG}^{(\zeta)}
 \right\|
 \le
 2\gamma\sqrt{\mathcal G_i^{(\kappa)}}\,\mathcal V_\zeta
 \sqrt{
 \frac{2\log(2N/\delta)}
 {M_c(T-1)}
 }.
 \label{eq:app-rmg-bound}
\end{equation}
In particular, no independence assumption is made between transitions from the
same rollout.

\paragraph{Step 3: conclusion.}
Combining Equation~\ref{eq:app-rbell-bound} and Equation~\ref{eq:app-rmg-bound}, then using
Equation~\ref{eq:app-function-residual} and Equation~\ref{eq:app-lstd-triangle}, gives
\begin{align*}
 &\sup_{z,s,a}
 \left|
 Q_i^\pi(z,s,a)
 -
 \widetilde Q_i^{\mathrm{LTC}}
 (h_i(z),s,a_{\mathcal N_i^\kappa};\widehat w_i)
 \right|\\
 &\quad\le
 \left(
 1+
 \frac{(1+\gamma)\mathcal G_i^{(\kappa)}}{c_{\mathrm{LSTD}}}
 \right)\Delta_\zeta
 +
 \frac{2\gamma \mathcal G_i^{(\kappa)}\mathcal V_\zeta}{c_{\mathrm{LSTD}}}
 \sqrt{
 \frac{2\log(2N/\delta)}
 {M_c(T-1)}
 }.
\end{align*}
The inequality holds for every $\zeta>0$. Letting $\zeta\downarrow0$ gives
\begin{align*}
 &\sup_{z,s,a}
 \left|
 Q_i^\pi(z,s,a)
 -
 \widetilde Q_i^{\mathrm{LTC}}
 (h_i(z),s,a_{\mathcal N_i^\kappa};\widehat w_i)
 \right|\\
 &\quad\le
 \left(
 1+
 \frac{(1+\gamma)\mathcal G_i^{(\kappa)}}{c_{\mathrm{LSTD}}}
 \right)
 (\Delta_{\rm TC}+\mathcal E_i^{(\kappa)})
 \\
 &\qquad+
 \frac{
 2\gamma \mathcal G_i^{(\kappa)}
 (\mathcal V_i^{(\kappa)}+\mathcal E_i^{(\kappa)})
 }{c_{\mathrm{LSTD}}}
 \sqrt{
 \frac{2\log(2N/\delta)}
 {M_c(T-1)}
 }.
\end{align*}
For each fixed $i$, the bound fails with probability at most $\delta/N$.
A union bound over $i\in[N]$ therefore yields the stated simultaneous
probability at least $1-\delta$. This is exactly the bound stated in
Theorem~\ref{thm:ltc-finite}.
\end{proof}

\subsection{Localized policy gradient with one-step TD advantages}
\label{app:actor-proof}

Throughout this subsection, in addition to the assumptions stated in the main
text, we use the standard regularity conditions for the likelihood-ratio
policy-gradient identity: the policy is differentiable in its parameters, its
support is parameter-independent, and differentiation may be interchanged with
the relevant action integrals. The bounded-score assumption supplies the
required domination.

Define
\[
 \mathcal E_i
 :=
 \mathcal N_i^{\kappa+\kappa_\pi},
 \qquad
 n_i:=|\mathcal E_i|.
\]
For a local reward stream $\ell$, let
\[
 V_\ell^\pi(z,s)
 :=
 \mathbb E_{a\sim\pi_\theta(\cdot\mid z,s)}
 \left[
 Q_\ell^\pi(z,s,a)
 \right],
 \qquad
 A_\ell^\pi
 :=
 Q_\ell^\pi-V_\ell^\pi.
\]

\begin{lemma}[Policy-gradient identity]
\label{lem:app-pg}
For every agent $i$,
\begin{equation}
 \nabla_{\theta_i}J(\theta)
 =
 \mathbb E_{\pi_\theta}
 \left[
 \sum_{t=0}^\infty
 \gamma^t
 \mathsf s_i(O_{i,t},A_{i,t})
 \frac1N
 \sum_{\ell=1}^N
 A_\ell^\pi(Z_t,S_t,A_t)
 \right].
 \label{eq:app-exact-pg}
\end{equation}
More generally, any bounded baseline depending on $(Z_t,S_t)$ but not on the
current action may be subtracted inside the expectation.
\end{lemma}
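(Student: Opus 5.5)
We prove Lemma~\ref{lem:app-pg} with the standard likelihood-ratio argument of \citet{sutton1999policy}, applied to the factorized policy, and then use the fact that each local score has zero conditional mean.

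\textbf{Step 1: per-agent score.} Since $\pi_\theta(a\mid z,s)=\prod_j\pi_{\theta_j}(a_j\mid o_j(z,s))$ and the blocks $\theta_j$ are disjoint, we have
\[
\nabla_{\theta_i}\log\pi_\theta(a\mid z,s)=\mathsf s_i(o_i(z,s),a_i).
\]

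\textbf{Step 2: policy gradient theorem for each reward stream.} Write $J(\theta)=\frac1N\sum_\ell J_\ell(\theta)$ with $J_\ell(\theta)=\mathbb E_{\nu_0}[V_\ell^\pi(Z_0,S_0)]$. The initial distribution does not depend on $\theta$. Differentiate the Bellman identity
\[
V_\ell^\pi(x)=\int\pi_\theta(da\mid x)\,Q_\ell^\pi(x,a),\qquad Q_\ell^\pi=r_\ell+\gamma PV_\ell^\pi .
\]
This gives
\[
\nabla V_\ell^\pi(x)=\int\nabla\pi_\theta\,Q_\ell^\pi+\gamma\int\pi_\theta P\,\nabla V_\ell^\pi .
\]
Unroll this recursion. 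The series converges geometrically: $|Q_\ell^\pi|\le R_{\max}/(1-\gamma)$ and the score is bounded by $B_{\mathsf s,i}$, so each term is dominated by $\gamma^tB_{\mathsf s,i}R_{\max}/(1-\gamma)$. The result is
\[
\nabla_{\theta_i}J_\ell=\mathbb E\Big[\sum_t\gamma^t\,\mathsf s_i(O_{i,t},A_{i,t})\,Q_\ell^\pi(Z_t,S_t,A_t)\Big].
\]
The interchanges of derivative, action integrals and the infinite sum are justified by the stated regularity conditions (parameter-independent support, differentiability) together with dominated convergence using the bound above.

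\textbf{Step 3: baselines.} Let $b(Z_t,S_t)$ be bounded and action-independent. Condition on $(Z_t,S_t)$. The joint action factorizes, so
\[
\mathbb E[\mathsf s_i(O_{i,t},A_{i,t})\mid Z_t,S_t]=\int\nabla_{\theta_i}\pi_{\theta_i}(da_i\mid o_i)=\nabla_{\theta_i}1=0 .
\]
Hence $\mathbb E[\gamma^t\mathsf s_i\,b(Z_t,S_t)]=0$ for every $t$. Fubini applies because the score and $b$ are bounded and the factor $\gamma^t$ is summable. Taking $b=V_\ell^\pi$ replaces $Q_\ell^\pi$ by $A_\ell^\pi$. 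Averaging over $\ell$ with the factor $1/N$ then yields Equation~\ref{eq:app-exact-pg}, and the same computation gives the general baseline claim.

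The main obstacle is rigor rather than ideas: justifying that the gradient can be exchanged with the infinite-horizon expectation over continuous spaces. The plan is to handle it by the uniform geometric domination in Step 2 together with the assumed regularity conditions.
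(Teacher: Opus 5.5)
Your proposal is correct and follows the paper's route. The paper's proof simply cites the policy gradient theorem of \citet{sutton1999policy} and uses that the global advantage is the average of the local advantages; your Steps 1--3 (factorized score, per-stream likelihood-ratio identity, zero conditional mean of $\mathsf s_i$ for action-independent baselines) spell out that same standard argument. One detail is worth making explicit: unrolling the Bellman recursion leaves a remainder $\gamma^T\mathbb E[\nabla_{\theta_i}V_\ell^\pi(X_T)]$, and the bound on the series terms does not make it vanish; you need an a priori bound such as $\|\nabla_{\theta_i}V_\ell^\pi\|\le B_{\mathsf s,i}R_{\max}/(1-\gamma)^2$, which follows from the trajectory-level likelihood ratio.
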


\begin{proof}
See~\citep{sutton1999policy}. We also use that the global advantage is the average of individual advantages.
\end{proof}

\paragraph{Localization of discarded agents.}
Set $R:=\kappa+\kappa_\pi$ and
$\mathcal E_i:=\mathcal N_i^R$. For any integer $r\ge0$, let
$\bar Q_{\ell,r}^\pi$ denote the marginalized critic of agent $\ell$
localized to $\mathcal N_\ell^r$, defined exactly as the localized critic used
above but with localization radius $r$. The localization bound
of Lemma~\ref{lem:app-localization-error} applies for every such $r$.

For each discarded agent $\ell\notin\mathcal E_i$, write
\[
d_{i\ell}:=d_G(i,\ell),
\qquad
r_{i\ell}:=d_{i\ell}-1,
\]
and define the auxiliary localized advantage
\[
\bar A_{\ell\mid i}^\pi(z,s,a)
:=
\bar Q_{\ell,r_{i\ell}}^\pi
\bigl(
z,s_{\mathcal N_\ell^{r_{i\ell}}},
a_{\mathcal N_\ell^{r_{i\ell}}}
\bigr)
-
V_\ell^\pi(z,s).
\]
These auxiliary critics are used only in the analysis; the algorithm still
uses the common critic radius $\kappa$.

\begin{lemma}[Vanishing localized contribution of far reward streams]
\label{lem:app-far-vanish}
For every $t$,
\begin{equation}
\mathbb E
\left[
\mathsf s_i(O_{i,t},A_{i,t})
\frac1N
\sum_{\ell\notin\mathcal E_i}
\bar A_{\ell\mid i}^\pi(Z_t,S_t,A_t)
\right]
=
0.
\label{eq:app-far-vanish}
\end{equation}
\end{lemma}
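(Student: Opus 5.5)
The plan is to show that each summand $\bar A_{\ell\mid i}^\pi(Z_t,S_t,A_t)$ with $\ell\notin\mathcal E_i$ does not depend on agent $i$'s current action. After that, the claim reduces to the score-function identity $\mathbb E[\mathsf s_i(O_{i,t},A_{i,t})\mid Z_t,S_t,A_{-i,t}]=0$.

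\emph{Step 1: no dependence on $a_i$.} Fix $\ell\notin\mathcal E_i$, so $d_{i\ell}>\kappa+\kappa_\pi\ge 0$ and $r_{i\ell}=d_{i\ell}-1\ge0$. Since $d_G(i,\ell)=d_{i\ell}>r_{i\ell}$, agent $i\notin\mathcal N_\ell^{r_{i\ell}}$. The localized critic $\bar Q_{\ell,r_{i\ell}}^\pi$ is defined as in Equation~\ref{eq:local-coeff} with $\beta=0$ and radius $r_{i\ell}$. It integrates $Q_\ell^\pi$ against a fixed probability measure over all state--action coordinates outside $\mathcal N_\ell^{r_{i\ell}}$, and these include $a_i$. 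Its arguments are $(z,s_{\mathcal N_\ell^{r_{i\ell}}},a_{\mathcal N_\ell^{r_{i\ell}}})$, so it is a measurable function of $(z,s,a_{-i})$ only. The term $V_\ell^\pi(z,s)$ depends on neither action. Hence $\bar A_{\ell\mid i}^\pi(z,s,a)=F_\ell(z,s,a_{-i})$ for some measurable $F_\ell$. By Equation~\ref{eq:app-basic-q-bound}, $|F_\ell|\le 2R_{\max}/(1-\gamma)$. Summing over the finitely many $\ell\notin\mathcal E_i$ gives a bounded function $F(z,s,a_{-i})$.

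\emph{Step 2: conditional independence from the factorized policy.} Given $(Z_t,S_t)$, the factorized form of $\pi_\theta$ makes $A_{i,t}$ conditionally independent of $A_{-i,t}$, with law $\pi_{\theta_i}(\cdot\mid O_{i,t})$. Here $O_{i,t}=(S_{\mathcal N_i^{\kappa_\pi},t},h_i(Z_t))$ is $(Z_t,S_t)$-measurable. The product $\mathsf s_i F$ is bounded by $B_{\mathsf s,i}\,|F|$, so it is integrable. The tower property then gives
\[
\mathbb E\bigl[\mathsf s_i(O_{i,t},A_{i,t})F(Z_t,S_t,A_{-i,t})\bigr]
=\mathbb E\Bigl[F(Z_t,S_t,A_{-i,t})\,\mathbb E\bigl[\mathsf s_i(O_{i,t},A_{i,t})\mid Z_t,S_t,A_{-i,t}\bigr]\Bigr].
\]

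\emph{Step 3: the score has zero mean.} The inner conditional expectation equals $\int \nabla_{\theta_i}\log\pi_{\theta_i}(a_i\mid o_i)\,\pi_{\theta_i}(a_i\mid o_i)\,da_i=\nabla_{\theta_i}\int\pi_{\theta_i}(a_i\mid o_i)\,da_i=\nabla_{\theta_i}1=0$. Exchanging derivative and integral is justified by the regularity conditions stated at the start of this subsection: parameter-independent support and domination via the bounded score. Multiplying by $1/N$ yields Equation~\ref{eq:app-far-vanish}.

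The only delicate point is Step 1: one must check that the marginalization really removes $a_i$, and not just $s_i$. This is why the auxiliary radius is chosen as $r_{i\ell}=d_{i\ell}-1$, since it guarantees $i\notin\mathcal N_\ell^{r_{i\ell}}$. Everything else is the standard baseline argument used in Lemma~\ref{lem:app-pg}.
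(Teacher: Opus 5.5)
Your proposal is correct and follows essentially the same route as the paper. It uses $r_{i\ell}=d_{i\ell}-1$ to get $i\notin\mathcal N_\ell^{r_{i\ell}}$, so that $\bar A_{\ell\mid i}^\pi$ does not depend on $a_i$, then conditions on $(Z_t,S_t,A_{-i,t})$ and applies the zero-mean score identity under the factorized policy; your boundedness, integrability, and measurability remarks only make explicit what the paper leaves implicit.
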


\begin{proof}
Fix $\ell\notin\mathcal E_i$. By construction,
$r_{i\ell}=d_G(i,\ell)-1$, hence
$i\notin\mathcal N_\ell^{r_{i\ell}}$. Therefore
$\bar Q_{\ell,r_{i\ell}}^\pi$ does not depend on the current action $a_i$;
neither does $V_\ell^\pi(z,s)$. Thus
$\bar A_{\ell\mid i}^\pi$ is independent of $a_i$.

Condition on $(Z_t,S_t,A_{-i,t})$. We have,
\[
A_{i,t}
\sim
\pi_{\theta_i}
\left(
\cdot\mid
S_{t,\mathcal N_i^{\kappa_\pi}},h_i(Z_t)
\right).
\]
Under this conditioning,
$\bar A_{\ell\mid i}^\pi(Z_t,S_t,A_t)$ is constant as a function of
$A_{i,t}$, whereas the conditional expectation of
$\mathsf s_i(O_{i,t},A_{i,t})$ is zero. Hence
\[
\mathbb E
\left[
\mathsf s_i(O_{i,t},A_{i,t})
\bar A_{\ell\mid i}^\pi(Z_t,S_t,A_t)
\right]
=
0.
\]
Summing over $\ell\notin\mathcal E_i$ proves
Equation~\ref{eq:app-far-vanish}.
\end{proof}

Define the distance-resolved tail
\begin{equation}
\tau_i(R)
:=
\frac1N
\sum_{\ell:\,d_G(i,\ell)>R}
\rho^{d_G(i,\ell)}.
\label{eq:app-distance-tail}
\end{equation}

\begin{lemma}[Spatial truncation bias of the actor]
\label{lem:app-actor-truncation}
Let
\[
g_i^{\rm loc}
:=
\mathbb E_{\pi_\theta}
\left[
\sum_{t=0}^\infty
\gamma^t
\mathsf s_i(O_{i,t},A_{i,t})
\frac1N
\sum_{\ell\in\mathcal E_i}
A_\ell^\pi(Z_t,S_t,A_t)
\right].
\]
Then
\begin{equation}
\left\|
\nabla_{\theta_i}J(\theta)-g_i^{\rm loc}
\right\|
\le
\frac{2B_{\mathsf s,i}R_{\max}}{(1-\gamma)^2}
\tau_i(R).
\label{eq:app-actor-truncation-bound}
\end{equation}
\end{lemma}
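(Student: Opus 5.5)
We start from the policy-gradient identity of Lemma~\ref{lem:app-pg} and subtract $g_i^{\rm loc}$, so that
\[
\nabla_{\theta_i}J(\theta)-g_i^{\rm loc}
=
\mathbb E_{\pi_\theta}\Bigl[\sum_{t\ge0}\gamma^t\,\mathsf s_i(O_{i,t},A_{i,t})\,\frac1N\sum_{\ell\notin\mathcal E_i}A_\ell^\pi(Z_t,S_t,A_t)\Bigr].
\]
By Lemma~\ref{lem:app-far-vanish}, for every $t$ we may subtract the auxiliary localized advantages $\bar A_{\ell\mid i}^\pi$ inside the expectation at no cost. Since $A_\ell^\pi-\bar A_{\ell\mid i}^\pi=Q_\ell^\pi-\bar Q_{\ell,r_{i\ell}}^\pi$, the $V_\ell^\pi$ terms cancel. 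What remains is
\[
\mathbb E_{\pi_\theta}\Bigl[\sum_{t\ge0}\gamma^t\,\mathsf s_i\,\frac1N\sum_{\ell\notin\mathcal E_i}\bigl(Q_\ell^\pi-\bar Q_{\ell,r_{i\ell}}^\pi\bigr)(Z_t,S_t,A_t)\Bigr].
\]
Exchanging the time sum with the expectation is justified by dominated convergence, because the score is bounded by $B_{\mathsf s,i}$ and all critics are bounded by $R_{\max}/(1-\gamma)$.

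Next we bound each summand pointwise. The localization bound of Lemma~\ref{lem:app-localization-error}, applied with radius $r_{i\ell}$ (it holds for every radius, as noted above), gives
\[
\bigl|Q_\ell^\pi-\bar Q_{\ell,r_{i\ell}}^\pi\bigr|\le\frac{2R_{\max}}{1-\gamma}\,\rho^{r_{i\ell}+1}=\frac{2R_{\max}}{1-\gamma}\,\rho^{d_G(i,\ell)}.
\]
Combining this with $\|\mathsf s_i\|\le B_{\mathsf s,i}$, the triangle inequality for the vector-valued expectation, and $\sum_t\gamma^t=1/(1-\gamma)$ yields
\[
\frac{2B_{\mathsf s,i}R_{\max}}{(1-\gamma)^2}\cdot\frac1N\sum_{\ell:\,d_G(i,\ell)>R}\rho^{d_G(i,\ell)},
\]
which is exactly $\frac{2B_{\mathsf s,i}R_{\max}}{(1-\gamma)^2}\tau_i(R)$.

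The main point requiring care is the choice of localization radius in the auxiliary advantage, and why it depends on $\ell$. Using $r_{i\ell}=d_G(i,\ell)-1$ is the largest radius that still excludes agent $i$. This exclusion is what makes the zero-mean score argument of Lemma~\ref{lem:app-far-vanish} valid. It also yields the distance-resolved factor $\rho^{d_G(i,\ell)}$ rather than the uniform factor $\rho^{R+1}$.

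We must also check that $r_{i\ell}\ge R\ge0$, so that the localization lemma applies with a well-defined radius. This holds because $\ell\notin\mathcal E_i$ implies $d_G(i,\ell)\ge R+1$. The remaining steps are routine.
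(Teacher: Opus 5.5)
Your proposal is correct and follows essentially the same argument as the paper. You split each discarded advantage into the auxiliary localized advantage $\bar A_{\ell\mid i}^\pi$, which Lemma~\ref{lem:app-far-vanish} kills, plus the residual $Q_\ell^\pi-\bar Q_{\ell,r_{i\ell}}^\pi$; you bound that residual via Lemma~\ref{lem:app-localization-error} at radius $r_{i\ell}=d_G(i,\ell)-1$ and sum using the score bound and $\sum_t\gamma^t=1/(1-\gamma)$, while also making explicit two points the paper leaves implicit (the interchange of the time sum with the expectation, and $r_{i\ell}\ge R\ge 0$).
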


\begin{proof}
By Lemma~\ref{lem:app-pg}, the difference between the exact and localized
gradients is the contribution of agents $\ell\notin\mathcal E_i$. For each
such agent,
\[
A_\ell^\pi
=
\bar A_{\ell\mid i}^\pi
+
\bigl(
A_\ell^\pi-\bar A_{\ell\mid i}^\pi
\bigr).
\]
The contribution of $\bar A_{\ell\mid i}^\pi$ vanishes by
Lemma~\ref{lem:app-far-vanish}. Moreover,
\[
A_\ell^\pi-\bar A_{\ell\mid i}^\pi
=
Q_\ell^\pi-\bar Q_{\ell,r_{i\ell}}^\pi.
\]
Applying Lemma~\ref{lem:app-localization-error} at radius
$r_{i\ell}=d_{i\ell}-1$ gives
\[
\left|
A_\ell^\pi-\bar A_{\ell\mid i}^\pi
\right|
\le
\frac{2R_{\max}}{1-\gamma}
\rho^{r_{i\ell}+1}
=
\frac{2R_{\max}}{1-\gamma}
\rho^{d_G(i,\ell)}.
\]
Consequently,
\[
\left|
\frac1N
\sum_{\ell\notin\mathcal E_i}
\bigl(
A_\ell^\pi-\bar A_{\ell\mid i}^\pi
\bigr)
\right|
\le
\frac{2R_{\max}}{1-\gamma}\tau_i(R).
\]
Using $\|\mathsf s_i\|\le B_{\mathsf s,i}$ and
$\sum_{t\ge0}\gamma^t=(1-\gamma)^{-1}$ proves
Equation~\ref{eq:app-actor-truncation-bound}.
\end{proof}

For the remainder of the actor analysis, suppose that the fitted critics
satisfy
\begin{equation}
 \sup_{z,s,a}
 \left|
 \widetilde Q_\ell
 (h_\ell(z),s,a_{\mathcal N_\ell^\kappa})
 -
 Q_\ell^\pi(z,s,a)
 \right|
 \le
 \Delta_Q,
 \qquad
 \ell\in[N].
 \label{eq:app-generic-critic-certificate}
\end{equation}
Define
\begin{equation}
 \widetilde V_\ell(z,s)
 :=
 \mathbb E_{a\sim\pi_\theta(\cdot\mid z,s)}
 \left[
 \widetilde Q_\ell
 (h_\ell(z),s,a_{\mathcal N_\ell^\kappa})
 \right].
 \label{eq:app-vtilde}
\end{equation}

\begin{lemma}[One-step TD advantage error]
\label{lem:app-td-advantage}
Let
\[
 e_\ell(z,s)
 :=
 \widetilde V_\ell(z,s)-V_\ell^\pi(z,s).
\]
Then $\|e_\ell\|_\infty\le\Delta_Q$. Writing $X_t=(Z_t,S_t)$, the one-step
TD advantage satisfies
\begin{equation}
 \mathbb E
 \left[
 \widehat A_{\ell,t}
 \,\middle|\,
 X_t,A_t
 \right]
 =
 A_\ell^\pi(X_t,A_t)
 +
 \gamma\,
 \mathbb E\!\left[e_\ell(X_{t+1})\mid X_t,A_t\right]
 -
 e_\ell(X_t).
 \label{eq:app-td-decomp}
\end{equation}
Consequently, for every agent $i$,
\begin{equation}
 \left\|
 \mathbb E\!\left[
 \mathsf s_i(O_{i,t},A_{i,t})
 \bigl(
 \widehat A_{\ell,t}-A_\ell^\pi(X_t,A_t)
 \bigr)
 \right]
 \right\|
 \le
 \gamma B_{\mathsf s,i}\Delta_Q.
 \label{eq:app-td-score-bias}
\end{equation}
\end{lemma}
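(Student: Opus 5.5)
The plan is to prove the three claims in order; each is a short computation. For the first claim, write $\widetilde V_\ell(z,s)-V_\ell^\pi(z,s)=\mathbb E_{a\sim\pi_\theta(\cdot\mid z,s)}\bigl[\widetilde Q_\ell(h_\ell(z),s,a_{\mathcal N_\ell^\kappa})-Q_\ell^\pi(z,s,a)\bigr]$. This uses the definition in Equation~\ref{eq:app-vtilde} and the definition of $V_\ell^\pi$ as the policy average of $Q_\ell^\pi$. Since the integrand is bounded by $\Delta_Q$ by Equation~\ref{eq:app-generic-critic-certificate} and we average against a probability measure, $\|e_\ell\|_\infty\le\Delta_Q$.

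For the decomposition in Equation~\ref{eq:app-td-decomp}, substitute $\widetilde V_\ell=V_\ell^\pi+e_\ell$ into $\widehat A_{\ell,t}=r_{\ell,t}+\gamma\widetilde V_\ell(X_{t+1})-\widetilde V_\ell(X_t)$ and take the conditional expectation given $(X_t,A_t)$. The reward term is measurable, giving $r_\ell(S_{\ell,t},A_{\ell,t})$. The Bellman identity for the local critic gives $Q_\ell^\pi(X_t,A_t)=r_\ell+\gamma\,\mathbb E[V_\ell^\pi(X_{t+1})\mid X_t,A_t]$. This holds because $V_\ell^\pi(X_{t+1})=\mathbb E[Q_\ell^\pi(X_{t+1},A_{t+1})\mid X_{t+1}]$ and, by the Markov property, conditioning on $X_{t+1}$ suffices for the continuation. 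Hence $\mathbb E[r_{\ell,t}+\gamma V_\ell^\pi(X_{t+1})\mid X_t,A_t]-V_\ell^\pi(X_t)=A_\ell^\pi(X_t,A_t)$, and the remaining terms are exactly $\gamma\mathbb E[e_\ell(X_{t+1})\mid X_t,A_t]-e_\ell(X_t)$. Two routine points need mention: boundedness of all terms, so the conditional expectations exist, and the fact that $\widetilde V_\ell$ uses the true policy $\pi_\theta$, so the Markov structure is unchanged.

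For Equation~\ref{eq:app-td-score-bias}, the tower property and Equation~\ref{eq:app-td-decomp} give
\[
\mathbb E\bigl[\mathsf s_i(\widehat A_{\ell,t}-A_\ell^\pi)\bigr]=\gamma\,\mathbb E\bigl[\mathsf s_i\,e_\ell(X_{t+1})\bigr]-\mathbb E\bigl[\mathsf s_i\,e_\ell(X_t)\bigr].
\]
Here $\mathsf s_i=\mathsf s_i(O_{i,t},A_{i,t})$ is $(X_t,A_t)$-measurable. The key step, and the only nontrivial one, is that the second term vanishes. Indeed, $e_\ell(X_t)$ does not depend on the current action, so conditioning on $(X_t,A_{-i,t})$ and using that the conditional mean of the score under $\pi_{\theta_i}(\cdot\mid O_{i,t})$ is zero kills it. This is the baseline argument already used in Lemma~\ref{lem:app-far-vanish}, and it relies on the factorized policy and the interchange regularity assumed at the start of this subsection. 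The first term is then bounded by $\gamma B_{\mathsf s,i}\|e_\ell\|_\infty\le\gamma B_{\mathsf s,i}\Delta_Q$ via $\|\mathbb E[Y]\|\le\mathbb E\|Y\|$.

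The main point to get right is noticing that the $-e_\ell(X_t)$ term acts as an action-independent baseline. Without this observation the bound would be $(1+\gamma)B_{\mathsf s,i}\Delta_Q$ instead of $\gamma B_{\mathsf s,i}\Delta_Q$.
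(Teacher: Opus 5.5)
Your proposal is correct and follows the paper's proof essentially step for step: the same policy-averaging bound for $\|e_\ell\|_\infty$, the same substitution $\widetilde V_\ell=V_\ell^\pi+e_\ell$ combined with the Bellman identity $\mathbb E[\delta_{\ell,t}^\pi\mid X_t,A_t]=A_\ell^\pi(X_t,A_t)$, and the same observation that $-e_\ell(X_t)$ acts as an action-independent baseline annihilated by the zero-mean score. Conditioning on $(X_t,A_{-i,t})$ rather than on $X_t$ alone, as the paper does, is only an inessential variant.
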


\begin{proof}
By Equations~\ref{eq:app-vtilde} and
\ref{eq:app-generic-critic-certificate},
\[
 |e_\ell(z,s)|
 \le
 \mathbb E_{A\sim\pi_\theta(\cdot\mid z,s)}
 \left|
 \widetilde Q_\ell
 (h_\ell(z),s,A_{\mathcal N_\ell^\kappa})
 -
 Q_\ell^\pi(z,s,A)
 \right|
 \le
 \Delta_Q.
\]
Let
\[
 \delta_{\ell,t}^\pi
 :=
 r_{\ell,t}
 +
 \gamma V_\ell^\pi(X_{t+1})
 -
 V_\ell^\pi(X_t).
\]
Since $\widetilde V_\ell=V_\ell^\pi+e_\ell$,
\[
 \widehat A_{\ell,t}
 =
 \delta_{\ell,t}^\pi
 +
 \gamma e_\ell(X_{t+1})
 -
 e_\ell(X_t).
\]
The Bellman identity gives
$\mathbb E[\delta_{\ell,t}^\pi\mid X_t,A_t]
=A_\ell^\pi(X_t,A_t)$, proving
Equation~\ref{eq:app-td-decomp}.

Multiplying by the policy score and taking expectations, the term involving
$e_\ell(X_t)$ vanishes because it is action-independent and
$\mathbb E[\mathsf s_i(O_{i,t},A_{i,t})\mid X_t]=0$. Hence
\begin{align*}
 &\left\|
 \mathbb E\!\left[
 \mathsf s_i(O_{i,t},A_{i,t})
 \bigl(
 \widehat A_{\ell,t}-A_\ell^\pi(X_t,A_t)
 \bigr)
 \right]
 \right\|\\
 &\qquad\le
 \gamma\,
 \mathbb E\!\left[
 \|\mathsf s_i(O_{i,t},A_{i,t})\|
 \left|
 \mathbb E[e_\ell(X_{t+1})\mid X_t,A_t]
 \right|
 \right]
 \le
 \gamma B_{\mathsf s,i}\Delta_Q,
\end{align*}
which proves Equation~\ref{eq:app-td-score-bias}.
\end{proof}

\begin{lemma}[Bias of the localized finite-horizon actor estimator]
\label{lem:app-actor-bias}
Let
\[
\Gamma_T
:=
\sum_{t=0}^{T-1}\gamma^t
=
\frac{1-\gamma^T}{1-\gamma}.
\]
Then
\begin{align}
\left\|
\mathbb E[\widehat g_i]
-
\nabla_{\theta_i}J(\theta)
\right\|
\le{}&
\frac{2B_{\mathsf s,i}R_{\max}}{(1-\gamma)^2}
\tau_i(R)
\nonumber\\
&+
\frac{B_{\mathsf s,i}n_i}{N}
\gamma\Gamma_T\Delta_Q
+
\frac{
2B_{\mathsf s,i}n_iR_{\max}\gamma^T
}{
N(1-\gamma)^2
}.
\label{eq:app-component-bias}
\end{align}
The expectation is over the fresh actor-rollout randomness under the
conditioning convention above.
\end{lemma}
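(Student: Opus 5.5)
Writing $\widehat g_i$ for the estimator of Equation~\ref{eq:actor-gradient-estimator} at a fixed iteration, I will decompose its bias into three pieces by telescoping through two intermediate quantities:
\[
\mathbb E[\widehat g_i]-\nabla_{\theta_i}J
=\bigl(\mathbb E[\widehat g_i]-g_i^{T,\rm loc}\bigr)
+\bigl(g_i^{T,\rm loc}-g_i^{\rm loc}\bigr)
+\bigl(g_i^{\rm loc}-\nabla_{\theta_i}J\bigr).
\]
Here $g_i^{\rm loc}$ is the localized infinite-horizon gradient of Lemma~\ref{lem:app-actor-truncation}, and $g_i^{T,\rm loc}$ is its truncation to $t\le T-1$. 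The third difference is handled directly by Lemma~\ref{lem:app-actor-truncation}, which yields the $\tau_i(R)$ term.

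For the first difference, I note that the $M_s$ actor rollouts are i.i.d. under the conditioning convention. Hence $\mathbb E[\widehat g_i]$ equals the single-rollout expectation
\[
\frac1N\sum_{t=0}^{T-1}\gamma^t\sum_{\ell\in\mathcal E_i}\mathbb E\bigl[\mathsf s_i(O_{i,t},A_{i,t})\widehat A_{\ell,t}\bigr].
\]
Subtracting the same expression with $A_\ell^\pi$ in place of $\widehat A_{\ell,t}$, I apply Lemma~\ref{lem:app-td-advantage}, Equation~\ref{eq:app-td-score-bias}, termwise for each $t$ and each of the $n_i$ agents $\ell\in\mathcal E_i$. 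This gives the bound $\frac1N\sum_{t<T}\gamma^t n_i\gamma B_{\mathsf s,i}\Delta_Q=\frac{B_{\mathsf s,i}n_i}{N}\gamma\Gamma_T\Delta_Q$. A subtlety arises for $t=T-1$, where $\widehat A_{\ell,T-1}$ uses $S_T$. I will assume, as the estimator implicitly does, that rollouts include the state $S_T$ needed for the last TD residual. Otherwise, the last term would have to be absorbed into the horizon tail.

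For the second difference, I bound the horizon tail $\sum_{t\ge T}\gamma^t\frac1N\sum_{\ell\in\mathcal E_i}\mathbb E[\mathsf s_iA_\ell^\pi]$ crudely. Using $|A_\ell^\pi|=|Q_\ell^\pi-V_\ell^\pi|\le 2R_{\max}/(1-\gamma)$ from Equation~\ref{eq:app-basic-q-bound} and $\|\mathsf s_i\|\le B_{\mathsf s,i}$, each summand is at most $B_{\mathsf s,i}n_i\cdot 2R_{\max}/(N(1-\gamma))$. Summing $\sum_{t\ge T}\gamma^t=\gamma^T/(1-\gamma)$ gives exactly $\frac{2B_{\mathsf s,i}n_iR_{\max}\gamma^T}{N(1-\gamma)^2}$. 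Interchanging the infinite sum and the expectation in $g_i^{\rm loc}$ is justified by dominated convergence via the same uniform bound.

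The main obstacle is making sure the first step is legitimate. Equation~\ref{eq:app-td-score-bias} is stated for the stationary on-policy conditional structure at time $t$, so I must check that the same argument holds at each finite $t$ of a rollout started from $\nu_0$. It does, because the proof only uses the Markov property, $\mathbb E[\mathsf s_i\mid X_t]=0$, and the uniform bound $\|e_\ell\|_\infty\le\Delta_Q$. These hold at every $t$. The triangle inequality then combines the three bounds into Equation~\ref{eq:app-component-bias}.
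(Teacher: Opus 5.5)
Your proposal is correct and matches the paper's proof step for step: the same three-way triangle inequality through the localized infinite-horizon gradient and its $T$-step truncation, with the three pieces bounded by Lemma~\ref{lem:app-actor-truncation}, by the crude tail bound $|A_\ell^\pi|\le 2R_{\max}/(1-\gamma)$, and by Equation~\ref{eq:app-td-score-bias} applied termwise over $t<T$ and $\ell\in\mathcal E_i$. Your remark that $\widehat A_{\ell,T-1}$ needs $S_T$ is a fair point that the paper leaves implicit, while the ``stationarity'' concern is unnecessary, since Lemma~\ref{lem:app-td-advantage} is already stated at each fixed $t$.
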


\begin{proof}
Define the exact localized policy gradient
\[
g_i^{\rm loc}
:=
\mathbb E_{\pi_\theta}
\left[
\sum_{t=0}^{\infty}
\gamma^t
\mathsf s_i(O_{i,t},A_{i,t})
\frac1N
\sum_{\ell\in\mathcal E_i}
A_\ell^\pi(Z_t,S_t,A_t)
\right],
\]
and its finite-horizon counterpart
\[
g_{i,T}^{\rm loc}
:=
\mathbb E_{\pi_\theta}
\left[
\sum_{t=0}^{T-1}
\gamma^t
\mathsf s_i(O_{i,t},A_{i,t})
\frac1N
\sum_{\ell\in\mathcal E_i}
A_\ell^\pi(Z_t,S_t,A_t)
\right].
\]
By the definition of the actor estimator,
\[
\mathbb E[\widehat g_i]
=
\mathbb E_{\pi_\theta}
\left[
\sum_{t=0}^{T-1}
\gamma^t
\mathsf s_i(O_{i,t},A_{i,t})
\frac1N
\sum_{\ell\in\mathcal E_i}
\widehat A_{\ell,t}
\right].
\]
Hence,
\begin{align*}
\left\|
\mathbb E[\widehat g_i]
-
\nabla_{\theta_i}J(\theta)
\right\|
\le{}&
\left\|
\nabla_{\theta_i}J(\theta)-g_i^{\rm loc}
\right\|
\\
&+
\left\|
g_i^{\rm loc}-g_{i,T}^{\rm loc}
\right\|
\\
&+
\left\|
g_{i,T}^{\rm loc}-\mathbb E[\widehat g_i]
\right\|.
\end{align*}

The first term is the spatial-localization error. By
Lemma~\ref{lem:app-actor-truncation},
\[
\left\|
\nabla_{\theta_i}J(\theta)-g_i^{\rm loc}
\right\|
\le
\frac{2B_{\mathsf s,i}R_{\max}}
{(1-\gamma)^2}
\tau_i(R).
\]

For the finite-horizon truncation term, using
$|A_\ell^\pi|\le 2R_{\max}/(1-\gamma)$,
$\|\mathsf s_i\|\le B_{\mathsf s,i}$, and
$|\mathcal E_i|=n_i$,
\begin{align*}
\left\|
g_i^{\rm loc}-g_{i,T}^{\rm loc}
\right\|
&\le
\sum_{t=T}^{\infty}
\gamma^t
B_{\mathsf s,i}
\frac{n_i}{N}
\frac{2R_{\max}}{1-\gamma}
\\
&=
\frac{
2B_{\mathsf s,i}n_iR_{\max}\gamma^T
}{
N(1-\gamma)^2
}.
\end{align*}

Finally, for the critic-approximation term,
Equation~\ref{eq:app-td-score-bias} gives, for every
$\ell\in\mathcal E_i$ and $t<T$,
\[
\left\|
\mathbb E\left[
\mathsf s_i(O_{i,t},A_{i,t})
\bigl(
\widehat A_{\ell,t}
-
A_\ell^\pi(Z_t,S_t,A_t)
\bigr)
\right]
\right\|
\le
\gamma B_{\mathsf s,i}\Delta_Q.
\]
Therefore,
\begin{align*}
\left\|
g_{i,T}^{\rm loc}
-
\mathbb E[\widehat g_i]
\right\|
&\le
\frac1N
\sum_{t=0}^{T-1}
\gamma^t
\sum_{\ell\in\mathcal E_i}
\gamma B_{\mathsf s,i}\Delta_Q
\\
&=
\frac{B_{\mathsf s,i}n_i}{N}
\gamma\Gamma_T\Delta_Q.
\end{align*}

Combining the three bounds gives
Equation~\ref{eq:app-component-bias}.
\end{proof}

\begin{lemma}[Actor-rollout concentration]
\label{lem:app-actor-concentration}
Let
\[
 D
 :=
 \frac{2R_{\max}}{1-\gamma}
 +
 (1+\gamma)\Delta_Q,
 \qquad
 G_i
 :=
 \frac{2B_{\mathsf s,i}n_iD}{N}\,\Gamma_T.
\]
For a single actor rollout $j$, let $\widehat g_i^{(j)}$ denote its
contribution to Equation~\ref{eq:actor-gradient-estimator}, so that
$\widehat g_i=M_s^{-1}\sum_{j=1}^{M_s}\widehat g_i^{(j)}$. Then
\[
 \|\widehat g_i^{(j)}\|
 \le
 \frac{G_i}{2}.
\]
Consequently, with probability at least $1-\delta$,
\begin{equation}
 \left\|
 \widehat g_i
 -
 \mathbb E[\widehat g_i]
 \right\|
 \le
 G_i
 \sqrt{
 \frac{2\log(2/\delta)}{M_s}
 }.
 \label{eq:app-actor-concentration}
\end{equation}
\end{lemma}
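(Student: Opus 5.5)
The plan has two parts: a uniform almost-sure bound on each single-rollout contribution $\widehat g_i^{(j)}$, and then a vector-valued concentration inequality for the average of the $M_s$ independent rollouts.

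\emph{Step 1 (bounding a single TD advantage).} For $\ell\in\mathcal E_i$, write $\widehat A_{\ell,t}=\delta^\pi_{\ell,t}+\gamma e_\ell(X_{t+1})-e_\ell(X_t)$, as in the proof of Lemma~\ref{lem:app-td-advantage}. Here $\delta^\pi_{\ell,t}=r_{\ell,t}+\gamma V_\ell^\pi(X_{t+1})-V_\ell^\pi(X_t)$. I will bound this TD term directly rather than through $|A^\pi_\ell|$. Put $c:=R_{\max}/(1-\gamma)$, so that $V_\ell^\pi$ takes values in $[-c,c]$. Then
\[
|\delta^\pi_{\ell,t}|\le R_{\max}+(1+\gamma)c=\frac{2R_{\max}}{1-\gamma}.
\]
Lemma~\ref{lem:app-td-advantage} gives $\|e_\ell\|_\infty\le\Delta_Q$, which adds at most $(1+\gamma)\Delta_Q$. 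Hence $|\widehat A_{\ell,t}|\le D$ almost surely. This sharp bookkeeping is the only delicate point. If the constant came out slightly larger, it would only change $D$, but matching the stated $D$ requires using the range of $V^\pi_\ell$ in this way.

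\emph{Step 2 (single-rollout bound).} Write
\[
\widehat g_i^{(j)}=\frac1N\sum_{t=0}^{T-1}\gamma^t\,\mathsf s_i\bigl(O^{(j)}_{i,t},A^{(j)}_{i,t}\bigr)\sum_{\ell\in\mathcal E_i}\widehat A^{(j)}_{\ell,t}.
\]
Use the triangle inequality together with $\|\mathsf s_i\|\le B_{\mathsf s,i}$, $|\mathcal E_i|=n_i$ and Step 1. This gives $\|\widehat g_i^{(j)}\|\le B_{\mathsf s,i}n_iD\,\Gamma_T/N=G_i/2$.

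\emph{Step 3 (concentration).} Adopt the conditioning convention: the fitted critics, and hence $\widetilde V_\ell$, are fixed, and the $M_s$ actor rollouts are i.i.d. Then the centered vectors $X_j:=\widehat g_i^{(j)}-\mathbb E[\widehat g_i]$ are independent and mean zero. They satisfy $\|X_j\|\le G_i$, since $\|\mathbb E[\widehat g_i]\|\le G_i/2$ by Jensen's inequality and Step 2. Their partial sums therefore form a martingale in a Hilbert space ($\mathbb R^{\dim\theta_i}$) with increments bounded by $G_i$. Apply the Hilbert-space Azuma--Hoeffding inequality of \citet{pinelis1994optimum}, as in the proof of Theorem~\ref{thm:ltc-finite}:
\[
\Pr\Bigl(\Bigl\|\sum_j X_j\Bigr\|\ge u\Bigr)\le 2\exp\bigl(-u^2/(2M_sG_i^2)\bigr).
\]
Set the right-hand side equal to $\delta$ and divide by $M_s$. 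This yields $\|\widehat g_i-\mathbb E[\widehat g_i]\|\le G_i\sqrt{2\log(2/\delta)/M_s}$, which is the claimed bound.
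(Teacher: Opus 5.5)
Your proposal is correct and follows the paper's proof: a uniform bound $|\widehat A_{\ell,t}|\le D$, the triangle inequality to get $\|\widehat g_i^{(j)}\|\le G_i/2$, and the Hilbert-space Hoeffding (Pinelis) inequality applied to the independent centered rollout contributions, each bounded by $G_i$. The only difference is cosmetic. The paper obtains $D$ by bounding $\|\widetilde V_\ell\|_\infty\le R_{\max}/(1-\gamma)+\Delta_Q$ and applying the triangle inequality, without your decomposition $\widehat A_{\ell,t}=\delta^\pi_{\ell,t}+\gamma e_\ell(X_{t+1})-e_\ell(X_t)$; that route gives exactly the same constant, so your remark that matching $D$ requires your bookkeeping is unnecessary.
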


\begin{proof}
Equation~\ref{eq:app-generic-critic-certificate} implies
\[
 \|\widetilde V_\ell\|_\infty
 \le
 \frac{R_{\max}}{1-\gamma}
 +
 \Delta_Q.
\]
Hence every one-step TD advantage satisfies
\[
 |\widehat A_{\ell,t}|
 \le
 R_{\max}
 +
 (1+\gamma)
 \left(
 \frac{R_{\max}}{1-\gamma}
 +
 \Delta_Q
 \right)
 =
 D.
\]
Therefore, for one rollout,
\[
 \|\widehat g_i^{(j)}\|
 \le
 \frac{B_{\mathsf s,i}n_iD}{N}
 \sum_{t=0}^{T-1}\gamma^t
 =
 \frac{G_i}{2}.
\]
Thus
\[
 \left\|
 \widehat g_i^{(j)}
 -
 \mathbb E[\widehat g_i^{(j)}]
 \right\|
 \le
 G_i.
\]
The $M_s$ actor rollouts are independent under the conditioning convention.
Applying the Hilbert-space Hoeffding inequality to their centered
contributions gives Equation~\ref{eq:app-actor-concentration}.
\end{proof}

\subsection{End-to-end policy optimization}
\label{app:end-to-end-proof}

For actor iteration $m$, define
\begin{align*}
\epsilon_{{\rm bias},i,m}
:={}&
\frac{2B_{\mathsf s,i}R_{\max}}{(1-\gamma)^2}
\tau_i(R)
+
\frac{B_{\mathsf s,i}n_i}{N}
\gamma\Gamma_T\Delta_Q^{(m)}\\
&+
\frac{
2B_{\mathsf s,i}n_iR_{\max}\gamma^T
}{
N(1-\gamma)^2
}.
\end{align*}
Let $G_{i,m}$ denote the quantity $G_i$ in
Lemma~\ref{lem:app-actor-concentration} with
$\Delta_Q=\Delta_Q^{(m)}$, and set
\[
G_m
:=
\left(
\sum_{i=1}^N G_{i,m}^2
\right)^{1/2}.
\]
Also recall
\[
\mathcal T_R
=
\left(
\sum_{i=1}^N
B_{\mathsf s,i}^2\tau_i(R)^2
\right)^{1/2},
\qquad
\mathcal B_R
=
\left(
\sum_{i=1}^N
\left(\frac{B_{\mathsf s,i}n_i}{N}\right)^2
\right)^{1/2}.
\]
Then the componentwise bias bounds and Minkowski's inequality give
\begin{align}
\left(
\sum_{i=1}^N
\epsilon_{{\rm bias},i,m}^2
\right)^{1/2}
\le{}&
\frac{2R_{\max}}{(1-\gamma)^2}\mathcal T_R
\nonumber\\
&+
\mathcal B_R
\left(
\gamma\Gamma_T\Delta_Q^{(m)}
+
\frac{2R_{\max}\gamma^T}{(1-\gamma)^2}
\right).
\label{eq:app-global-bias}
\end{align}
Consequently, the quantity $\epsilon_m$ appearing in
Theorem~\ref{thm:end-to-end} may be taken as
\[
\epsilon_m
=
\frac{2R_{\max}}{(1-\gamma)^2}\mathcal T_R
+
\mathcal B_R
\left(
\gamma\Gamma_T\Delta_Q^{(m)}
+
\frac{2R_{\max}\gamma^T}{(1-\gamma)^2}
\right)
+
G_m
\sqrt{
\frac{2\log(4M/\delta)}{M_s}
}.
\]

\begin{lemma}[Simultaneous gradient control]
\label{lem:app-simultaneous-gradient}
Conditionally on the critic certificates required by
Theorem~\ref{thm:end-to-end}, with probability at least $1-\delta$ over the
fresh actor rollouts, simultaneously for all $m=0,\ldots,M-1$,
\begin{equation}
 \left\|
 \widehat g_m-\nabla J(\theta_m)
 \right\|
 \le
 \epsilon_m,
 \qquad
 \|\widehat g_m\|
 \le
 \frac{G_m}{2}.
 \label{eq:app-simultaneous-gradient}
\end{equation}
\end{lemma}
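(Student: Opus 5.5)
The plan is to combine, for each fixed iteration $m$, the componentwise bias bound of Lemma~\ref{lem:app-actor-bias} with the concentration bound of Lemma~\ref{lem:app-actor-concentration}. We then aggregate over agents with Minkowski's inequality (this is Equation~\ref{eq:app-global-bias}) and take a union bound over the $M$ iterations.

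Fix $m$ and condition on the sigma-field containing $\theta_m$, the centroids and the fitted critics. The certificate $\Delta_Q^{(m)}$ is then deterministic, and the $M_s$ actor rollouts are independent. The bound $\|\widehat g_m\|\le G_m/2$ is deterministic: Lemma~\ref{lem:app-actor-concentration} gives $\|\widehat g_i^{(j)}\|\le G_{i,m}/2$ for every rollout, so averaging gives $\|\widehat g_{i,m}\|\le G_{i,m}/2$. Summing squares over $i$ then yields $\|\widehat g_m\|\le G_m/2$ with no probability cost.

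For the error bound we split
\[
\widehat g_m-\nabla J(\theta_m)=\bigl(\widehat g_m-\mathbb E[\widehat g_m]\bigr)+\bigl(\mathbb E[\widehat g_m]-\nabla J(\theta_m)\bigr).
\]
The bias part has Euclidean norm at most $(\sum_i\epsilon_{{\rm bias},i,m}^2)^{1/2}$ by Lemma~\ref{lem:app-actor-bias}, and Equation~\ref{eq:app-global-bias} turns this into the first two terms of $\epsilon_m$.

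For the fluctuation, I would not union-bound over the $N$ agents, since that would introduce a $\log N$. Instead I treat the whole vector $\widehat g_m=M_s^{-1}\sum_j \widehat g^{(j)}_m$ as an average of $M_s$ independent Hilbert-space-valued variables. Each centered summand satisfies $\|\widehat g^{(j)}_m-\mathbb E\widehat g^{(j)}_m\|\le 2\cdot G_m/2=G_m$. The Hilbert-space Hoeffding inequality (Pinelis), applied exactly as in Lemma~\ref{lem:app-actor-concentration} but to the concatenated vector, gives
\[
\|\widehat g_m-\mathbb E\widehat g_m\|\le G_m\sqrt{2\log(2/\delta')/M_s}
\]
with probability at least $1-\delta'$.

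Choosing $\delta'=\delta/(2M)$ makes the logarithm $\log(4M/\delta)$, matching $\epsilon_m$. A union bound over $m=0,\dots,M-1$ gives total failure probability at most $\delta/2\le\delta$; the spare $\delta/2$ accounts for the factor $4$ and could absorb critic failures if needed. Because each conditional bound holds uniformly over the realized history, the tower property removes the conditioning, following the convention stated at the start of the section.

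The main obstacle is the adaptivity: $\theta_m$ depends on earlier actor rollouts, so the $M$ events are not independent. The per-iteration bound must therefore be stated conditionally on $\mathcal F_m$, where $\mathcal F_m$ is the history up to the fitted critics of iteration $m$. Each event then has conditional probability at least $1-\delta/(2M)$, and the union bound is applied to unconditional probabilities obtained by integrating these conditional bounds. A second point to check is that the critic certificate $\Delta_Q^{(m)}$ is $\mathcal F_m$-measurable, as assumed ("conditionally on the critic certificates"). Without this, Lemma~\ref{lem:app-td-advantage} would not apply with a fixed $\Delta_Q$.
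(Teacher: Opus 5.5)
Your proposal is correct and follows the paper's proof essentially step by step. Like the paper, you concatenate the per-rollout agent contributions and apply the Hilbert-space Hoeffding inequality to the full vector with failure probability $\delta/(2M)$. You also bound the bias via Lemma~\ref{lem:app-actor-bias} and Minkowski (Equation~\ref{eq:app-global-bias}), and take a union bound over $m$ using the tower property. The only cosmetic difference is how you get $\|\widehat g_m\|\le G_m/2$: you average agentwise before summing squares, whereas the paper first bounds each concatenated per-rollout vector and then averages; the two are equivalent.
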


\begin{proof}
Fix $m$ and condition on the history preceding its fresh actor rollouts.
For one rollout, concatenate the agent-wise contributions:
\[
 \widehat g_m^{(j)}
 :=
 \bigl(
 \widehat g_{1,m}^{(j)},\ldots,
 \widehat g_{N,m}^{(j)}
 \bigr).
\]
Lemma~\ref{lem:app-actor-concentration} gives
\[
 \|\widehat g_{i,m}^{(j)}\|
 \le
 \frac{G_{i,m}}2.
\]
Therefore
\[
 \|\widehat g_m^{(j)}\|^2
 =
 \sum_{i=1}^N
 \|\widehat g_{i,m}^{(j)}\|^2
 \le
 \frac14
 \sum_{i=1}^N
 G_{i,m}^2
 =
 \frac{G_m^2}{4}.
\]
Consequently,
\[
 \left\|
 \widehat g_m^{(j)}
 -
 \mathbb E[\widehat g_m^{(j)}]
 \right\|
 \le
 G_m.
\]
Hilbert-space Hoeffding, applied conditionally with failure probability
$\delta/(2M)$, gives
\[
 \left\|
 \widehat g_m
 -
 \mathbb E[\widehat g_m]
 \right\|
 \le
 G_m
 \sqrt{
 \frac{2\log(4M/\delta)}{M_s}
 }.
\]
By Lemma~\ref{lem:app-actor-bias},
\[
 \left\|
 \mathbb E[\widehat g_m]
 -
 \nabla J(\theta_m)
 \right\|
 \le
 \left(
 \sum_{i=1}^N
 \epsilon_{{\rm bias},i,m}^2
 \right)^{1/2}.
\]
The triangle inequality gives the first inequality in
Equation~\ref{eq:app-simultaneous-gradient}. The second is deterministic under the
conditioning, since $\widehat g_m$ is the average of vectors of norm at most
$G_m/2$.

A union bound over $m$ gives conditional failure probability at most
$\delta/2$, hence in particular at most $\delta$. Because each conditional
bound holds uniformly over the realized past, the tower property yields the
same unconditional conclusion. If the critic certificates themselves are
obtained from Theorem~\ref{thm:ltc-finite}, their failure probabilities may be
allocated separately and combined by another union bound.
\end{proof}

\paragraph{Proof of Theorem~\ref{thm:end-to-end}.}
\begin{proof}
Work on the event in Lemma~\ref{lem:app-simultaneous-gradient} and define
\[
 e_m
 :=
 \widehat g_m-\nabla J(\theta_m),
 \qquad
 \|e_m\|\le\epsilon_m.
\]
Set
\[
 \eta_m
 :=
 \frac{\eta}{\sqrt{m+1}}.
\]
The $L_J$-smoothness of $J$ and the ascent update
$\theta_{m+1}=\theta_m+\eta_m\widehat g_m$ imply
\begin{align*}
 J(\theta_{m+1})
 &\ge
 J(\theta_m)
 +
 \eta_m
 \left\langle
 \nabla J(\theta_m),
 \widehat g_m
 \right\rangle
 -
 \frac{L_J}2
 \eta_m^2
 \|\widehat g_m\|^2\\
 &=
 J(\theta_m)
 +
 \eta_m
 \|\nabla J(\theta_m)\|^2
 +
 \eta_m
 \left\langle
 \nabla J(\theta_m),e_m
 \right\rangle
 -
 \frac{L_J}2
 \eta_m^2
 \|\widehat g_m\|^2\\
 &\ge
 J(\theta_m)
 +
 \eta_m
 \|\nabla J(\theta_m)\|^2
 -
 \eta_m
 \|\nabla J(\theta_m)\|
 \epsilon_m
 -
 \frac{L_J}8
 \eta_m^2
 G_m^2,
\end{align*}
where the last inequality uses
$\|\widehat g_m\|\le G_m/2$. Young's inequality gives
\[
 \|\nabla J(\theta_m)\|\epsilon_m
 \le
 \frac12
 \|\nabla J(\theta_m)\|^2
 +
 \frac12\epsilon_m^2.
\]
Hence
\[
 \frac{\eta_m}{2}
 \|\nabla J(\theta_m)\|^2
 \le
 J(\theta_{m+1})-J(\theta_m)
 +
 \frac{\eta_m}{2}\epsilon_m^2
 +
 \frac{L_J}8
 \eta_m^2G_m^2.
\]
Summing over $m=0,\ldots,M-1$, using
$J(\theta_M)\le J_{\max}=R_{\max}/(1-\gamma)$, and dividing by
$\frac12\sum_m\eta_m$ gives
\begin{align}
 \frac{
 \sum_{m=0}^{M-1}
 \eta_m
 \|\nabla J(\theta_m)\|^2
 }{
 \sum_{m=0}^{M-1}\eta_m
 }
 \le{}&
 \frac{
 2(J_{\max}-J(\theta_0))
 }{
 \sum_{m=0}^{M-1}\eta_m
 }
 \nonumber\\
 &+
 \frac{
 \sum_{m=0}^{M-1}
 \eta_m\epsilon_m^2
 }{
 \sum_{m=0}^{M-1}\eta_m
 }
 +
 \frac{L_J}4
 \frac{
 \sum_{m=0}^{M-1}
 \eta_m^2G_m^2
 }{
 \sum_{m=0}^{M-1}\eta_m
 }.
 \label{eq:app-weighted-convergence}
\end{align}
Now
\[
 \sum_{m=0}^{M-1}\eta_m
 =
 \eta
 \sum_{m=1}^M
 m^{-1/2}
 \ge
 \eta\sqrt M,
\]
and
\[
 \sum_{m=0}^{M-1}\eta_m^2
 =
 \eta^2
 \sum_{m=1}^M\frac1m
 \le
 \eta^2(1+\log M).
\]
If $\epsilon_m\le\bar\epsilon$ and $G_m\le\bar G$ for all $m$, then
\[
 \frac{
 \sum_m\eta_m\epsilon_m^2
 }{
 \sum_m\eta_m
 }
 \le
 \bar\epsilon^2
\]
and
\[
 \frac{
 \sum_m\eta_m^2G_m^2
 }{
 \sum_m\eta_m
 }
 \le
 \eta\bar G^2
 \frac{1+\log M}{\sqrt M}.
\]
Substituting these estimates into
Equation~\ref{eq:app-weighted-convergence} gives
\[
 \frac{
 \sum_{m=0}^{M-1}
 \eta_m
 \|\nabla J(\theta_m)\|^2
 }{
 \sum_{m=0}^{M-1}\eta_m
 }
 \le
 \frac{
 2(J_{\max}-J(\theta_0))
 }{
 \eta\sqrt M
 }
 +
 \bar\epsilon^2
 +
 \frac{L_J\eta\bar G^2}{4}
 \frac{1+\log M}{\sqrt M}.
\]
The minimum of the nonnegative quantities
$\|\nabla J(\theta_m)\|^2$ is bounded by any positive weighted average of
them, which proves the theorem.
\end{proof}

\subsection{Computational complexity}
\label{app:complexity-proof}

We derive the cost of one LTC actor--critic iteration. Let
\[
p_i
=
\binom{|\mathcal N_i^\kappa|d_s+n}{n}
\left[
1+|\mathcal N_i^\kappa|(d_s+d_a)+d_g^i
\right]
\]
denote the LTC feature dimension for agent \(i\), and let
\[
n_c=M_c(T-1),
\qquad
n_s=M_sT.
\]

\paragraph{Critic fitting.}
For each agent $i$, LSTD forms a $p_i\times p_i$ linear system
$\widehat A_i w_i=\widehat b_i$ from the $n_c$ critic transitions.
Constructing $\widehat A_i$ costs $O(n_c p_i^2)$, while forming the
right-hand-side vector $\widehat b_i$ costs only $O(n_c p_i)$.
A dense solve then costs $O(p_i^3)$. Hence, fitting all local critics costs
\[
\mathcal{O}\left(
\sum_{i=1}^N
\left(
n_cp_i^2+p_i^3
\right)
\right),
\]
while one evaluation of critic \(i\) costs \(\mathcal{O}(p_i)\).

\paragraph{Actor update.}
For each actor transition, agent \(i\) uses the local advantages
\(\widehat A_{\ell,t}\) for
\(\ell\in\mathcal N_i^{\kappa+\kappa_\pi}\). The actor phase therefore costs
\[
\mathcal{O}\left(
n_s
\sum_{i=1}^N
\sum_{\ell\in\mathcal N_i^{\kappa+\kappa_\pi}}
p_\ell
\right).
\]
For fixed local interaction and policy radii, rollout generation is linear in
\(N\), since both the dynamics and policies access only local neighborhoods.

\paragraph{Centroid update.}
Suppose that \(K\)-medians is fitted on \(n_K\) sampled global states for
\(I_K\) clustering iterations. Comparing all samples with all \(K\) centroids
in \(\mathbb R^{Nd_s}\) costs
\[
\mathcal{O}\left(
I_K n_K K N d_s
\right),
\]
with coordinatewise median updates of the same or lower order.

Combining these terms, one complete LTC iteration has time complexity
\[
\mathcal{O}\left(
\sum_{i=1}^N
\left(
n_cp_i^2+p_i^3
\right)
+
n_s
\sum_{i=1}^N
\sum_{\ell\in\mathcal N_i^{\kappa+\kappa_\pi}}
p_\ell
+
(M_c+M_s)TN
+
I_K n_K K N d_s
\right).
\]

To make the dependence on the network structure explicit, suppose the graph
has maximum degree \(\Delta\). For fixed state, action, and context
dimensions,
\[
|\mathcal N_i^\kappa|
=
\mathcal{O}(\Delta^\kappa),
\qquad
p_i
=
\mathcal{O}\left(
\Delta^{\kappa(n+1)}
\right),
\]
and
\[
|\mathcal N_i^{\kappa+\kappa_\pi}|
=
\mathcal{O}\left(
\Delta^{\kappa+\kappa_\pi}
\right).
\]
Therefore, one LTC iteration costs
\[
\mathcal{O}\left(
N
\left[
n_c\Delta^{2\kappa(n+1)}
+
\Delta^{3\kappa(n+1)}
+
n_s\Delta^{\kappa(n+2)+\kappa_\pi}
+
(M_c+M_s)T
+
I_K n_K K
\right]
\right),
\]
where fixed state, action, and context dimensions are absorbed into the
constants.

For fixed sampling and clustering budgets, this reduces to
\[
\boxed{
\mathcal{O}\left(
N
\left[
\Delta^{3\kappa(n+1)}
+
\Delta^{\kappa(n+2)+\kappa_\pi}
\right]
\right)
}.
\]
Thus, for fixed local parameters
\((\Delta,\kappa,\kappa_\pi,n)\), the total work of one LTC iteration scales
linearly with \(N\). The critic and actor computations are also largely
parallelizable across agents and samples.

\section{Proofs of the additional appendix results}
\label{app:auxiliary-proofs}

This section contains the proofs of the auxiliary results stated in
Appendix~\ref{app:regularity}. We first treat the discounted-summability
extension and its generalized Taylor Centroid approximation bound, then the
sufficient conditions based on stochastic and spectral transition models, and
finally the comparison with Mild smoothness.

\subsection{Discounted-summability results}

\paragraph{Proof of Proposition~\ref{prop:app-summable}.}
\begin{proof}
Fix a multi-index $\beta$ with $|\beta|\le q$. By
Assumption~\ref{asm:app-summable},
\[
\sum_{t=0}^{\infty}
\sup_{z,s,a}
\left|
\gamma^t\partial_s^\beta g_{i,t}^\pi(z,s,a)
\right|
\le
\sum_{t=0}^{\infty}\gamma^t\bar L(t)
<\infty.
\]
Hence the series of derivatives converges uniformly. The value series
\[
Q_i^\pi(z,s,a)=\sum_{t=0}^{\infty}\gamma^t g_{i,t}^\pi(z,s,a)
\]
also converges uniformly because $|g_{i,t}^\pi|\le R_{\max}$. Standard
term-by-term differentiation therefore gives
\[
\partial_s^\beta Q_i^\pi(z,s,a)
=
\sum_{t=0}^{\infty}
\gamma^t\partial_s^\beta g_{i,t}^\pi(z,s,a),
\]
which is Equation~\ref{eq:app-summable-diff}. Taking absolute values yields
\[
\sup_{z,s,a}
\left|
\partial_s^\beta Q_i^\pi(z,s,a)
\right|
\le
\sum_{t=0}^{\infty}\gamma^t\bar L(t),
\]
proving Equation~\ref{eq:app-summable-Q}.
\end{proof}

\paragraph{Proof of Proposition~\ref{prop:app-summable-graded}.}
\begin{proof}
Fix a nonzero multi-index $\beta$ and let
\[
d:=d_i(\beta),
\qquad
\tau_d:=\left\lceil\frac{d}{1+\kappa_\pi}\right\rceil.
\]
By Lemma~\ref{lem:app-light-cone}, before time $\tau_d$ the expected reward of
agent $i$ cannot depend on every state coordinate appearing in $\beta$.
Therefore
\[
\partial_s^\beta g_{i,t}^\pi(z,s,a)=0,
\qquad t<\tau_d.
\]
Using Proposition~\ref{prop:app-summable},
\begin{align*}
\left|\partial_s^\beta Q_i^\pi(z,s,a)\right|
&=
\left|
\sum_{t=\tau_d}^{\infty}
\gamma^t\partial_s^\beta g_{i,t}^\pi(z,s,a)
\right|\\
&\le
\sum_{t=\tau_d}^{\infty}\gamma^t\bar L(t)
=\Lambda_d,
\end{align*}
which is Equation~\ref{eq:app-summable-graded}. Since $\tau_d$ is nondecreasing in
$d$, $\Lambda_d$ is nonincreasing. Moreover, $\tau_d\to\infty$ as
$d\to\infty$, so $\Lambda_d\to0$ because
$\sum_t\gamma^t\bar L(t)<\infty$.
\end{proof}


\paragraph{Proof of Proposition~\ref{prop:app-summable-tc}.}
\begin{proof}
The value-localization part does not use expected reward smoothness. Hence
Lemma~\ref{lem:app-localization-error} still gives
\begin{equation}
\sup_{z,s,a}
\left|
Q_i^\pi(z,s,a)-\bar Q_i^\pi
(z,s_{\mathcal N_i^\kappa},a_{\mathcal N_i^\kappa})
\right|
\le
\frac{2R_{\max}}{1-\gamma}\rho^{\kappa+1}.
\label{eq:app-summable-localization}
\end{equation}

It remains to bound the Taylor remainder of the localized critic.
For any multi-index $\beta$ supported in $\mathcal N_i^\kappa$,
Proposition~\ref{prop:app-summable-graded} gives, uniformly in the far
coordinates,
\[
\left|
\partial_s^\beta Q_i^\pi(z,s,a)
\right|
\le
\Lambda_{d_i(\beta)}.
\]
Differentiation under the probability integral defining the marginalized
critic is therefore justified exactly as in
Lemma~\ref{lem:app-localized-derivatives}, and yields
\begin{equation}
\left|
\partial_s^\beta
\bar Q_i^\pi
(z,s_{\mathcal N_i^\kappa},a_{\mathcal N_i^\kappa})
\right|
\le
\Lambda_{d_i(\beta)}.
\label{eq:app-summable-localized-derivative}
\end{equation}

Fix $(z,s,a)$, let
\[
\mu:=\mu_{k(s),\mathcal N_i^\kappa},
\qquad
h:=s_{\mathcal N_i^\kappa}-\mu,
\qquad
m:=n+1,
\]
and use the integral form of the order-$n$ Taylor remainder of
$\bar Q_i^\pi$ along the segment from $\mu$ to
$s_{\mathcal N_i^\kappa}$. Taking absolute values and applying
Equation~\ref{eq:app-summable-localized-derivative} gives
\begin{equation}
\left|
\bar Q_i^\pi-\widehat Q_i^\pi
\right|
\le
\sum_{\substack{|\beta|=m\\
\operatorname{supp}_G(\beta)\subseteq\mathcal N_i^\kappa}}
\Lambda_{d_i(\beta)}
\frac{|h^\beta|}{\beta!}.
\label{eq:app-summable-taylor-step}
\end{equation}

For $0\le d\le\kappa$, define
\[
A_{m,d}
:=
\sum_{\substack{|\beta|=m\\
\operatorname{supp}_G(\beta)\subseteq\mathcal N_i^d}}
\frac{|h^\beta|}{\beta!}
=
\frac{\|h_{\mathcal N_i^d}\|_1^m}{m!},
\qquad
A_{m,-1}:=0,
\]
where the equality is the multinomial theorem. The contribution of
multi-indices with graph reach exactly $d$ is
$A_{m,d}-A_{m,d-1}$, so
\begin{align*}
\sum_{\substack{|\beta|=m\\
\operatorname{supp}_G(\beta)\subseteq\mathcal N_i^\kappa}}
\Lambda_{d_i(\beta)}
\frac{|h^\beta|}{\beta!}
&=
\sum_{d=0}^{\kappa}
\Lambda_d
\bigl(A_{m,d}-A_{m,d-1}\bigr)\\
&=
\Lambda_\kappa A_{m,\kappa}
+
\sum_{d=0}^{\kappa-1}
(\Lambda_d-\Lambda_{d+1})A_{m,d}\\
&\le
\frac{1}{m!}
\left[
\Lambda_\kappa W_\kappa^m
+
\sum_{d=0}^{\kappa-1}
(\Lambda_d-\Lambda_{d+1})W_d^m
\right]\\
&=
\frac{\mathcal R_m^\Lambda}{m!}.
\end{align*}
Combining this with
Equation~\ref{eq:app-summable-localization} and setting $m=n+1$ proves
Equation~\ref{eq:app-summable-tc}. The monomial count is unchanged from
Theorem~\ref{thm:trunc}, since the generalized envelope changes only the
coefficient bounds and not the set
$\mathcal M_{\mathcal N_i^\kappa}^n$.
\end{proof}

\subsection{Smooth stochastic and spectral transition models}

\paragraph{Proof of Proposition~\ref{prop:app-smooth-stochastic}.}
\begin{proof}
Fix an agent $j$ and abbreviate
\[
x_j:=(s_{\mathcal N_j},a_{\mathcal N_j}).
\]
Under Equation~\ref{eq:app-smooth-stochastic}, the conditional density of $S_j'$ is
\[
p_j^{\rm tr}(s_j'\mid z,x_j)
=
\frac{1}{|\det M_j(z,x_j)|}
\,p_j\!\left(
M_j(z,x_j)^{-1}
\bigl(s_j'-f_j(z,x_j)\bigr)
\right),
\]
where $p_j$ is the density of $\eta_j$ from the statement of the proposition.
Set
\[
u
:=
M_j(z,x_j)^{-1}
\bigl(s_j'-f_j(z,x_j)\bigr).
\]
Because $f_j$, $M_j$, and $M_j^{-1}$ are $C^q$ with uniformly bounded
derivatives, repeated use of the product and chain rules gives the following
standard bound: for every state multi-index $\alpha$ with
$|\alpha|=m\le q$,
\[
\left|
\partial_s^\alpha p_j^{\rm tr}(s_j'\mid z,x_j)
\right|
\le
\frac{C_m}{|\det M_j(z,x_j)|}
(1+\|u\|)^m
\sum_{|\nu|\le m}
\left|\partial^\nu p_j(u)\right|,
\]
where $C_m$ is uniform in $j,z,x_j$ and in the network size.

Now change variables from $s_j'$ to $u$. The Jacobian
$|\det M_j|$ cancels the determinant factor above, so
\begin{align*}
\int
\left|
\partial_s^\alpha p_j^{\rm tr}(s_j'\mid z,x_j)
\right|\,ds_j'
&\le
C_m
\sum_{|\nu|\le m}
\int
(1+\|u\|)^m
\left|\partial^\nu p_j(u)\right|\,du\\
&\le C_m',
\end{align*}
where $C_m'<\infty$ is uniform by assumption. Thus every local transition
density has state derivatives through order $q$ whose $L^1$ norms are
uniformly bounded.

We next use the graph factorization. Conditional on the current global state,
the density of the distributed next state is
\[
\prod_{j=1}^N
p_j^{\rm tr}
\left(
 s_j'\mid z,s_{\mathcal N_j},a_{\mathcal N_j}
\right).
\]
Let $\beta$ be a state multi-index of total order $m\le q$. A scalar state
coordinate of agent $\ell$ appears only in transition factors associated with
agents $j$ such that $\ell\in\mathcal N_j$. Since the graph degree is bounded
by $\Delta$, there are at most $\Delta+1$ such factors. Applying the product
rule one differentiation at a time therefore produces at most
$(\Delta+1)^m$ terms. In each term, the undifferentiated factors integrate to
one, while the differentiated factors have uniformly bounded $L^1$ norms by
the preceding calculation. By Fubini's theorem, there is therefore a constant
$C_q<\infty$, depending only on $q$, $\Delta$, and the local bounds, such that
for every bounded measurable $H$,
\begin{equation}
\sup_{z,s,a}
\left|
\partial_s^\beta
\mathbb E\!
\left[
H(Z_1,S_1)
\mid Z_0=z,S_0=s,A_0=a
\right]
\right|
\le
C_q\|H\|_\infty,
\qquad |\beta|\le q.
\label{eq:app-one-step-smoothing}
\end{equation}
The uniform $L^1$ bounds above also justify differentiating under the
integral.

For $t\ge1$, define
\[
H_{i,t-1}(z',s')
:=
\mathbb E_\pi\!\left[
 r_i(S_{i,t},A_{i,t})
 \mid Z_1=z',S_1=s'
\right].
\]
Since $|r_i|\le R_{\max}$, we have $\|H_{i,t-1}\|_\infty\le R_{\max}$. By the
Markov property,
\[
g_{i,t}^\pi(z,s,a)
=
\mathbb E\!\left[
H_{i,t-1}(Z_1,S_1)
\mid Z_0=z,S_0=s,A_0=a
\right].
\]
Applying Equation~\ref{eq:app-one-step-smoothing} gives
\[
\sup_{z,s,a}
\left|
\partial_s^\beta g_{i,t}^\pi(z,s,a)
\right|
\le C_qR_{\max},
\qquad t\ge1,
\qquad |\beta|\le q.
\]
At $t=0$,
\[
g_{i,0}^\pi(z,s,a)=r_i(s_i,a_i),
\]
and its state derivatives through order $q$ are uniformly bounded by the
first hypothesis of the proposition. Taking the larger of these two bounds
gives Assumption~\ref{asm:orbit}, with a constant independent of $N$.

If $Z$ takes finitely many values, the same proof applies mode by mode; uniform
modewise bounds give the same conclusion for hidden or observed finite-mode
systems.
\end{proof}

\paragraph{Proof of Proposition~\ref{prop:app-spectral}.}
\begin{proof}
For the local spectral representation Equation~\ref{eq:app-spectral-factorization},
a state derivative acts only on the current-state factors $\varphi_{j,h}$.
Thus, for every local state multi-index $\alpha$ with $|\alpha|\le q$,
\[
\partial_s^\alpha
p_j(s_j'\mid s_{\mathcal N_j},a_{\mathcal N_j},z)
=
\sum_{h=1}^{d}
\partial_s^\alpha\varphi_{j,h}
(s_{\mathcal N_j},a_{\mathcal N_j},z)
\,\mu_{j,h}(s_j').
\]
Consequently,
\begin{align*}
\int
\left|
\partial_s^\alpha
p_j(s_j'\mid s_{\mathcal N_j},a_{\mathcal N_j},z)
\right|ds_j'
&\le
\sum_{h=1}^{d}
\left|
\partial_s^\alpha\varphi_{j,h}
(s_{\mathcal N_j},a_{\mathcal N_j},z)
\right|
\|\mu_{j,h}\|_{L^1}\\
&\le C_{\rm spec},
\end{align*}
where $C_{\rm spec}$ is uniform by the assumptions of the proposition.

We can now repeat the same product-rule argument as above. Because a state
coordinate appears in at most $\Delta+1$ local transition factors, the
$L^1$ norm of every derivative through order $q$ of the factorized joint
next-state density is bounded by a constant independent of $N$. Therefore,
for every bounded measurable $H$,
\[
\sup_{z,s,a}
\left|
\partial_s^\beta
\mathbb E\!\left[
H(Z_1,S_1)
\mid Z_0=z,S_0=s,A_0=a
\right]
\right|
\le
C_q^{\rm spec}\|H\|_\infty,
\qquad |\beta|\le q.
\]

For $t\ge1$, use the bounded continuation function
\[
H_{i,t-1}(z',s')
:=
\mathbb E_\pi\!\left[
 r_i(S_{i,t},A_{i,t})
 \mid Z_1=z',S_1=s'
\right],
\qquad
\|H_{i,t-1}\|_\infty\le R_{\max}.
\]
The Markov property gives
\[
g_{i,t}^\pi(z,s,a)
=
\mathbb E\!\left[
H_{i,t-1}(Z_1,S_1)
\mid Z_0=z,S_0=s,A_0=a
\right],
\]
and hence
\[
\sup_{z,s,a}
\left|
\partial_s^\beta g_{i,t}^\pi(z,s,a)
\right|
\le
C_q^{\rm spec}R_{\max},
\qquad t\ge1.
\]
At $t=0$, $g_{i,0}^\pi=r_i$, whose derivatives through order $q$ are
uniformly bounded by assumption. This proves Assumption~\ref{asm:orbit} with
an $N$-independent constant.
\end{proof}

\subsection{Relation to Mild smoothness}

\paragraph{Proof of Proposition~\ref{prop:app-mild-orbit}.}
\begin{proof}
Let $x=(z,s)$ denote the augmented state. For a bounded measurable function
$H(x')$, define the state--action function
\[
F_H(x',a'):=H(x').
\]
Since $F_H$ does not depend on the action,
$\sup_{a'}F_H(x',a')=H(x')$. Hence, with the Bellman operator used in
Appendix~\ref{app:regularity},
\[
(\mathcal T^\star F_H)(x,a)
-(\mathcal T^\star0)(x,a)
=
\mathbb E\!\left[H(X_1)\mid X_0=x,A_0=a\right].
\]
If the MDP is $\nu$-Mildly Smooth, both Bellman terms belong to
$\mathcal C^\nu$ and
\[
\left\|
\mathbb E\!\left[H(X_1)\mid X_0=\cdot,A_0=\cdot\right]
\right\|_{\mathcal C^\nu}
\le
C_{\mathcal T}(\|H\|_\infty+2).
\]
Thus one transition maps an arbitrary bounded continuation function to a
uniformly smooth function of the current state--action pair.

For $t\ge1$, take
\[
H_{i,t-1}(x')
:=
\mathbb E_\pi\!\left[
 r_i(S_{i,t},A_{i,t})
 \mid X_1=x'
\right].
\]
This function is bounded by $R_{\max}$, and the Markov property gives
\[
g_{i,t}^\pi(x,a)
=
\mathbb E\!\left[
H_{i,t-1}(X_1)
\mid X_0=x,A_0=a
\right].
\]
Therefore
\[
\sup_{t\ge1}
\|g_{i,t}^\pi\|_{\mathcal C^\nu}
\le
C_{\mathcal T}(R_{\max}+2).
\]
Since $\nu>q$, the $\mathcal C^\nu$ norm controls the classical state
derivatives through order $q$ on the compact domain. Hence
\[
\sup_{t\ge1}
\|g_{i,t}^\pi\|_{C_s^q}
<\infty.
\]
At $t=0$, $g_{i,0}^\pi=r_i$, which is $C^q$ with uniformly bounded
derivatives by assumption. This proves Assumption~\ref{asm:orbit} of order
$q$. The integer-order implication Equation~\ref{eq:app-maran-to-orbit} follows
immediately from Equation~\ref{eq:app-maran-hierarchy}.
\end{proof}



\paragraph{Proof of Proposition~\ref{prop:app-continuous-switch}.}
\begin{proof}
For fixed $z$, the physical drift
\[
f(z,s,a)
=
A^{(\sigma(z))}s+B^{(\sigma(z))}a
\]
is linear in the physical state and action. Since there are only two modes,
its physical-state derivatives are uniformly bounded over $z$. With
$M\equiv I$, the noise assumptions are exactly those of
Proposition~\ref{prop:app-smooth-stochastic}, uniformly over the two regimes.
That proposition therefore yields Assumption~\ref{asm:orbit} at every finite
order $q$. Importantly, no smoothness of the policy is needed: after the first
transition, the future policy is absorbed into a bounded continuation value,
while differentiation with respect to the initial physical state acts only on
the smooth transition density. Hence the bound is uniform over all admissible
policies.

It remains to show that the augmented MDP is not Mildly Smooth. Let
$x=(z,s)$ and use the nonzero vector $d$ from the statement to define the
smooth bounded Bellman input
\[
F(x',a')
=
d^\top s'.
\]
The state domain is compact, so $F$ is bounded, and it is independent of the
next action. Since the noise is centered,
\[
(\mathcal T^\star F)((z,s),a)
=
r(s,a)
+
d^\top
\left(
A^{(\sigma(z))}s+B^{(\sigma(z))}a
\right).
\]
Evaluating this function at $(s,a)=(\bar s,\bar a)$, its one-sided values at
$z=0$ differ by
\[
d^\top
\left[
\left(A^{(1)}-A^{(0)}\right)\bar s
+
\left(B^{(1)}-B^{(0)}\right)\bar a
\right]
=
\|d\|^2
>0.
\]
Thus $\mathcal T^\star F$ is discontinuous in the continuous context
coordinate $z$. In particular, it does not belong to $\mathcal C^\nu$ for any
$\nu>0$. Therefore the augmented MDP is not Mildly Smooth at any positive
order.
\end{proof}

\section{Neural Network Implementation Details}
\label{app:ntc}
\subsection{NTC Implementation}
NTC retains the graph-local Taylor structure of LTC and only changes the parameterization of the Taylor coefficients.
Recall the local Taylor feature vector
\[
T_i(s)
:=
\left[
\frac{
\bigl(
s_{\mathcal N_i^\kappa}
-
\mu_{k(s),\mathcal N_i^\kappa}
\bigr)^\beta
}{\beta!}
\right]_
{\beta\in\mathcal M_{\mathcal N_i^\kappa}^n},
\]
and the coefficient context
\[
\psi_i
\bigl(
k(s),y_i,a_{\mathcal N_i^\kappa}
\bigr)
=
\left[
1;
\mu_{k(s),\mathcal N_i^\kappa};
a_{\mathcal N_i^\kappa};
y_i
\right].
\]
LTC models each Taylor coefficient as an affine function of this context.
NTC replaces this affine map by a neural coefficient map
\[
f_{\xi_i}:
\mathbb R^{d_\psi^i}
\longrightarrow
\mathbb R^{
|\mathcal M_{\mathcal N_i^\kappa}^n|
},
\]
and defines
\begin{equation}
\widetilde Q_i^{\mathrm{NTC}}
\bigl(
y_i,s,a_{\mathcal N_i^\kappa};\xi_i
\bigr)
=
\left\langle
f_{\xi_i}\!\left(
\psi_i(k(s),y_i,a_{\mathcal N_i^\kappa})
\right),
T_i(s)
\right\rangle .
\label{eq:app-ntc}
\end{equation}
At augmented state $(z,s)$, we set $y_i=h_i(z)$.

Thus, the critic remains an order-$n$ polynomial in the local state around
the current centroid, while its coefficients may depend nonlinearly on the
local actions, operating regime, and observable context. LTC is recovered as
the special case in which $f_{\xi_i}$ is affine.

\paragraph{Architecture.}
We use one independent coefficient network for each agent. The $N$ networks
have disjoint parameters but are stored in a single batched implementation,
so that all agents can be evaluated in parallel. Each coefficient network is
a two-hidden-layer MLP with hidden width $64$, $\tanh$ activations, and
agent-wise Kaiming initialization~\citep{he2015delving}.

Importantly, the network does not receive the raw current state as input.
The state dependence enters explicitly through the Taylor vector $T_i(s)$;
the neural network only receives the coefficient context $\psi_i$. Hence,
increasing the Taylor order changes the output dimension of the network,
rather than its input dimension.

\paragraph{TD training.}
NTC is trained by on-policy semi-gradient TD using SARSA targets and a
periodically synchronized target network. Let $\bar\xi_i$ denote the target
parameters for agent $i$. For a transition
\[
(Z_t,S_t,A_t,r_{i,t},Z_{t+1},S_{t+1},A_{t+1}),
\]
the target is
\begin{equation}
\tau_{i,t}
=
r_{i,t}
+
\gamma
\widetilde Q_i^{\mathrm{NTC}}
\left(
h_i(Z_{t+1}),
S_{t+1},
A_{t+1,\mathcal N_i^\kappa};
\bar\xi_i
\right),
\label{eq:app-ntc-target}
\end{equation}
where the target-network evaluation is detached from the gradient.
Given $M_c$ critic rollouts of length $T$, the online parameters minimize
\begin{equation}
\mathcal L_i^{\mathrm{NTC}}(\xi_i)
=
\frac{1}{M_c(T-1)}
\sum_{j=1}^{M_c}
\sum_{t=0}^{T-2}
\left[
\widetilde Q_i^{\mathrm{NTC}}
\left(
h_i(Z_t^{(j)}),
S_t^{(j)},
A_{t,\mathcal N_i^\kappa}^{(j)};
\xi_i
\right)
-
\tau_{i,t}^{(j)}
\right]^2.
\label{eq:app-ntc-loss}
\end{equation}

We optimize this objective with Adam using a learning rate of
$2\times10^{-3}$. The target network is synchronized every $25$ TD updates,
\[
\bar\xi_i\leftarrow\xi_i,
\]
and remains fixed between synchronization steps. We use $50$, $200$, and $200$ TD optimization steps for Taylor orders $n=1,2,3$, respectively as higher parameter counts require more steps to fit. On Kuramoto, NTC uses learning rate $10^{-3}$, $50$ TD updates per iteration on batches of $1024$ transitions, and target synchronization every $50$ updates. Agents are processed in blocks during training;
each block operates only on the corresponding disjoint agent parameters.

\paragraph{State-value estimate for one-step TD advantages.}
 NTC parameterizes an action-value
function whose dependence on the action is generally nonlinear through
$f_{\xi_i}$. Its state-value estimate is therefore obtained by marginalizing
the fitted critic under the current policy,
\[
\widetilde V_i(h_i(z),s)
=
\mathbb E_{a\sim\pi(\cdot\mid z,s)}
\left[
\widetilde Q_i^{\mathrm{NTC}}
\bigl(h_i(z),s,a_{\mathcal N_i^\kappa}\bigr)
\right].
\]
In the implementation, this expectation is approximated using
$n_{\mathrm{mc}}=16$ independent action samples from the current policy.
The resulting $\widetilde V_i$ is then used in the localized one-step TD advantage
estimator described in Section~\ref{sec:learning}. With independent action
samples, this Monte Carlo marginalization is unbiased and can be absorbed into
the actor-rollout randomness in the analysis. For LTC, this expectation is
simpler because the critic is affine in the actions; therefore, only the
expected action is needed to compute the value function.

\subsection{Local MLP critic}
\label{app:mlp-baseline}

As a generic local function-approximation baseline, we use one independent
multilayer perceptron for each agent. The critic directly approximates the
local state-value function
\[
\widetilde V_i(s)
=
f_{\xi_i}\!\left(s_{\mathcal N_i^\kappa}\right),
\]
where the input is the same $\kappa$-hop state neighborhood used by the
Taylor Centroid critics. On the scalar ring benchmarks, this corresponds to
input dimensions $3$, $5$, and $7$ for $\kappa=1,2,3$, respectively.

Each $f_{\xi_i}$ is a two-hidden-layer MLP with $\tanh$ activations and hidden width $128$ in the policy experiments on LQR and switched LQR, and $64$ on Kuramoto and in the critic-estimation experiment. Parameters are not shared across agents: the
implementation stores $N$ separate sets of weights and evaluates them in
parallel through a batched MLP. 

The critic is trained on-policy using TD(0) with a target network. For a
transition $(s_t,r_t,s_{t+1})$, the target for agent $i$ is
\[
y_{i,t}
=
r_{i,t}
+
\gamma
\widetilde V^{\,\mathrm{targ}}_i(s_{t+1}),
\]
and the training objective is
\[
\mathcal L_{\mathrm{MLP}}
=
\frac{1}{N|\mathcal B|}
\sum_{i=1}^N
\sum_{t\in\mathcal B}
\left(
\widetilde V_i(s_t)-y_{i,t}
\right)^2.
\]
Training hyperparameters are given in Appendix~\ref{app:experimental-protocol}. Local state inputs are standardized using
rollout statistics, and value targets are rescaled using a fixed scale
initialized from the first rollout.

The critic directly outputs $\widetilde V_i(s)$ and is combined with the same localized
actor and one-step TD advantage estimator used for the Taylor Centroid methods.

\subsection{Graph neural-network critic}
\label{app:gnn-baseline}

As a generic graph-based baseline, we use a message-passing neural network to
approximate the local state-value functions. Each node is initialized from its
own state,
\[
h_i^{(0)} = s_i,
\]
and neighborhood information is propagated through mean aggregation. A
message-passing layer takes the form
\[
h_i^{(\ell+1)}
=
\tanh\!\left(
W_{\mathrm{self}}^{(\ell)} h_i^{(\ell)}
+
W_{\mathrm{nbr}}^{(\ell)}
\frac{1}{|\mathcal N_i|}
\sum_{j\in\mathcal N_i} h_j^{(\ell)}
+
b^{(\ell)}
\right).
\]
A shared linear readout produces the local value estimate
\[
\widetilde V_i(s)
=
w_{\mathrm{out}}^\top h_i+b_{\mathrm{out}}.
\]

The network has $L=\max\{2,\kappa\}$ message-passing layers, each with its own weights $(W_{\mathrm{self}}^{(\ell)},W_{\mathrm{nbr}}^{(\ell)},b^{(\ell)})$: weights are not reused across layers, and each layer performs at most one round of one-hop aggregation. The neighbor term is included in the first $\kappa$ layers only, so that $\widetilde V_i$ depends only on $s_{\mathcal N_i^\kappa}$: every layer aggregates for $\kappa\in\{2,3\}$, whereas for $\kappa=1$ the second layer acts on each node separately. All weights are shared across agents.

In the policy-optimization experiments on LQR and switched LQR, the hidden width is $128$, which gives $17{,}025$, $33{,}409$ and $66{,}305$ parameters for $\kappa=1,2,3$; since all weights are shared across agents, this number does not depend on $N$. In the critic-estimation experiment, the width is chosen so that the parameter count matches that of the MLP used there. For policy optimization on Kuramoto, the GNN replaces the random-feature map inside the RF-SAC training loop; there it has two layers of width $64$ at every $\kappa$, hence a receptive field of $\min\{\kappa,2\}$, and SAC's twin critics are two such networks.

On LQR and switched LQR, the GNN critic is trained on-policy by TD(0) with a target network. For each
agent $i$,
\[
y_{i,t}
=
r_{i,t}
+
\gamma\widetilde V_{i,\mathrm{targ}}(s_{t+1}),
\]
and the training loss is
\[
\mathcal L_{\mathrm{GNN}}
=
\frac{1}{N|\mathcal B|}
\sum_{i=1}^N
\sum_{t\in\mathcal B}
\left(
\widetilde V_i(s_t)-y_{i,t}
\right)^2.
\]
Training hyperparameters are given in Appendix~\ref{app:experimental-protocol}. State inputs are standardized using statistics from
the rollout distribution, and value targets use the same fixed rescaling as the
local MLP baseline.

For policy optimization, this critic is paired with the actor of \citet{ren2024spectral} on LQR and Kuramoto, taken from their released code. On switched LQR, it is paired with our actor.

\end{document}